\documentclass{article}

\usepackage[letterpaper, margin=1in]{geometry}
\usepackage{parskip}
\usepackage{amsmath}
\usepackage{amsthm}
\date{}
\newenvironment{ack}{\section*{Acknowledgments}}{}
\usepackage[utf8]{inputenc} 
\usepackage[T1]{fontenc}    
\usepackage{url}            
\usepackage{booktabs}       
\usepackage{amsfonts}       
\usepackage{amsmath}        
\usepackage{amssymb}        
\usepackage{nicefrac}       
\usepackage{microtype}      
\usepackage{xcolor}         
\usepackage{graphicx}       
\usepackage{tikz}
\usepackage{tikz-3dplot}
\usepackage{subcaption}
\usepackage{hyperref}
\hypersetup{colorlinks=true, linkcolor=blue, citecolor=blue, urlcolor=blue}
\usetikzlibrary{shapes, arrows, positioning, calc, patterns, backgrounds, arrows.meta, automata}
\usepackage{algorithm}
\usepackage{algpseudocode}
\usepackage{tikz-3dplot}
\usepackage{wrapfig}
\usepackage{natbib}
\usepackage{amsmath}
\usepackage{amssymb}
\usepackage{amsthm}
\usepackage{tikz}
\usepackage{cleveref}
\usetikzlibrary{automata, positioning, decorations.pathreplacing, arrows.meta, calc}

\usepackage{pifont}

\newcommand{\abs}[1]{\left| {#1} \right|}

\def\\X{{\mathcal{\X}}}
\def\A{{\mathcal{A}}}

\definecolor{greenp}{rgb}{0.0, 0.51, 0.5}

\newcommand{\initial}{\nu_0}
\newcommand{\bcc}[1]{\left\{{#1}\right\}}

\newcommand{\bs}[1]{\left[{#1}\right]}
\newcommand{\norm}[1]{\left\| {#1} \right\|}

\newcommand{\innerprod}[2]{\left\langle{#1},{#2}\right\rangle}
\usepackage{dsfont}
\usepackage{thm-restate}

\usepackage{mathtools} 
\newcommand{{\transpose}}{^\mathsf{\scriptscriptstyle T}}

\usepackage{mathtools}  

\newcommand{\piout}{\pi_{\mathrm{out}}}

\newcommand{\bc}[1]{\left\{{#1}\right\}}

\newcommand{\X}{\mathcal{X}}

\newtheorem{definition}{Definition}[section]
\newtheorem{theorem}{Theorem}[section]

\newtheorem{corollary}{Corollary}[section]

\newcommand{\br}[1]{\left({#1}\right)}  

\def\argmax{\mathop{\mbox{ arg\,max}}}

\renewcommand{\phi}{\varphi}

\newcommand{\pidata}{\pi_{\mathrm{data}}}

\usepackage{pifont}

\usepackage[most]{tcolorbox}
\usepackage{algorithm}
\usepackage{algorithmicx}
\usepackage{algpseudocode}

\usepackage{tocloft}
\addtocontents{toc}{\protect\setcounter{tocdepth}{0}}

\title{\textbf{Split the Differences, Pool the Rest: \\ Provably Efficient Multi-Objective Imitation}}

\author{
  \begin{tabular}{cc}
    Ziyad Sheebaelhamd\thanks{Equal contribution.} & Luca Viano\footnotemark[1] \\
    \textmd{University of Tübingen} & \textmd{EPFL} \\
    \texttt{ziyad.sheebaelhamd@uni-tuebingen.de} & \texttt{luca.viano@epfl.ch} \\[1.5em]
    Volkan Cevher & Claire Vernade \\
    \textmd{EPFL} & \textmd{University of Technology Nuremberg} \\
    \texttt{volkan.cevher@epfl.ch} & \texttt{claire.vernade@utn.de}
  \end{tabular}
}

\begin{document}

\maketitle

\begin{abstract}
This work investigates multi-objective imitation learning: the problem of recovering policies that lie on the Pareto front given demonstrations from multiple Pareto-optimal experts in a Multi-Objective Markov Decision Process (MOMDP). Standard imitation approaches are ill-equipped for this regime, as naively aggregating conflicting expert trajectories can result in dominated policies. To address this, we introduce Multi-Output Augmented Behavioral Cloning (MA-BC), an algorithm that systematically partitions divergent expert data while pooling state-action pairs where no behavior conflict is observed. Theoretically, we prove that MA-BC converges to Pareto-optimal policies at a faster statistical rate than any learner that considers each expert dataset independently. Furthermore, we establish a novel lower bound for multi-objective imitation learning, demonstrating that MA-BC is minimax optimal. Finally, we empirically validate our algorithm across diverse discrete environments and, guided by our theoretical insights, extend and evaluate MA-BC on a continuous Linear Quadratic Regulator (LQR) control task. 
\end{abstract}

\section{Introduction}
Real-world datasets for complex sequential decision-making tasks, such as those used for LLM fine-tuning \cite{Maarten2020Social,Adina2018broad,cui2023ultrafeedback,zheng2023judging} and robotic learning \cite{o2024open,khazatsky2024droid}, are rarely generated by a single, monolithic expert. Instead, they are typically sourced from diverse human or algorithmic operators, each acting under a distinct, often implicit, utility function over the underlying objectives. This inherent variation in operator preferences naturally yields highly multimodal datasets. The dominant paradigm addresses this by pooling these diverse trajectories into a single dataset and training expressive, multimodal models in an attempt to capture the aggregate variance of the constituent operators \cite{rlhf,o2024open,walke2023bridgedata,diff_policy, vqbet, qfat}. However, the foundational theory of imitation learning in these conflicting, multi-objective settings remains largely underexplored. 

In this work, we adopt an explicit treatment of this inherent multi-objectivity. We investigate the problem of imitation learning from multiple diverse experts, where each expert optimizes a distinct utility over the underlying vector rewards of a shared sequential decision task. We formalize this setting as a Multi-Objective Markov Decision Process (MOMDP) \cite{white1982multi,furukawa1980characterization,roijers2013survey} and assume that the expert policies lie on the Pareto front.

A common practice, for example in LLM pretraining \cite{radford2018improving,devlin2019bert} and robotics \cite{vla, vqbet,diff_policy, qfat}, is to naively pool expert data to train a single joint model. This approach presents a challenge: we formally prove that such aggregation can yield dominated policies that fall entirely off the Pareto frontier (\Cref{thm:single_output_BC}). As an example, consider an autonomous driving dataset collected from two experts: one prioritizing speed, the other safety. A jointly trained model will learn a compromised policy that drives fast in some states and cautiously in others. Such a policy ultimately fails to optimize any coherent, monotonically increasing utility function of the returns and results in consistently suboptimal behavior.

An alternative approach, which appeared in some flavor in \cite{kim2024pareto,korbak2023pretraining} is to reduce the problem to independent, single-expert imitation learning and train a separate model for each expert dataset. While this guarantees the recovered policies lie on the Pareto front, it entirely ignores the underlying structure of the MOMDP. Because the experts are constrained to the same Pareto frontier, their optimal policies inherently share substantial state-action overlap.
Treating them as independent tasks discards valuable cross-dataset information, leading to severe sample inefficiency.
Moreover, the only existing approach \cite{franzmeyer2024select} that interpolates between merging all the data and splitting it still suffers from at least one of the two issues above.
This raises the question: 
\begin{tcolorbox}[
  colback=gray!10!white,
  colframe=gray!80!black,
  width=\linewidth,
  before skip=10pt, 
  after skip=10pt,
  halign=center
]
\emph{Can data from multiple experts be used to efficiently learn a policy on the Pareto front?}
\end{tcolorbox}

We answer this question positively for discrete MOMDPs and introduce Multi-Output Augmented Behavioral Cloning (MA-BC), a Multi-Objective Imitation Learning (MOIL) algorithm designed to guarantee Pareto optimality while exploiting the shared structure on the frontier to achieve fast statistical rates. At its core, MA-BC operates on a simple yet powerful principle: by isolating the state-action pairs where experts disagree and aggregating the data where no conflict was observed, MA-BC efficiently reconstructs the optimal frontier policies that generated the dataset. We further propose a novel lower bound for Multi-Objective Imitation Learning, and show that MA-BC is minimax optimal. Motivated by our theory in the discrete domains, we propose an extension of MA-BC to continuous state and action spaces.

\paragraph{Contributions and outline} In Section \ref{sec:preliminaries}, we formalize the Multi-Objective Imitation Learning (MOIL) problem and contribute a structural result of independent interest: the ``\emph{Pareto Path Guarantee}'', which proves that Pareto-optimal policies remain connected via single-state action changes. Building on this foundation, Section \ref{sec:algo} introduces Multi-Output Augmented Behavioral Cloning (MA-BC), a novel algorithm that systematically partitions divergent expert data to maintain Pareto optimality. We provide theoretical guarantees proving that MA-BC converges to the Pareto front at a faster statistical rate than independent learners, and we establish a novel lower bound demonstrating its minimax optimality. Next, Section \ref{sec:continous_mabc} extends our framework to accommodate continuous state-action spaces. Finally, Section \ref{sec:experiments} empirically validates MA-BC across diverse discrete environments and demonstrates its efficacy on a continuous 6-DOF quadcopter control task. For a broader discussion of the related literature on imitation learning, we refer the reader to \Cref{app:related}.

\section{Preliminaries}
\label{sec:preliminaries}
In this section we establish the background for MOMDPs, focusing on the structural properties of Pareto front policies, and formalize the multi-objective imitation learning problem.

\subsection{Multi-Objective Markov Decision Processes (MOMDPs)}
\label{sec:momdps}

\begin{figure*}[tbp]
    \centering
    \resizebox{\textwidth}{!}{
    \begin{tikzpicture}[>=stealth, scale=1.0]
        \begin{scope}[shift={(0,0)}]
            \draw[->, thick, black!80] (0,0) -- (9.0,0) node[right] {$J_1$ (Objective 1)};
            \draw[->, thick, black!80] (0,0) -- (0,6.5) node[above] {$J_2$ (Objective 2)}; 

            \coordinate (Pi1) at (3.5, 5.0);  
            \coordinate (Pi2) at (6.3, 4.2); 
            \coordinate (Pi3) at (8.0, 1.5);
            \coordinate (Pi4) at (6.5, 0.5);
            \coordinate (Pi5) at (4.0, 0.5);
        
            \fill[blue!5] (Pi1) -- (Pi2) -- (Pi3) -- (Pi4) -- (Pi5) -- cycle;
            \draw[gray!60, thick] (Pi1) -- (Pi5) -- (Pi4) -- (Pi3);
            \draw[blue!70!black, ultra thick] (Pi1) -- (Pi2) -- (Pi3);

            \node[anchor=north west, align=left, font=\bfseries\small, text=black!80] at (0.2, 6.4) {Return Polytope $\mathbb{J}$};
            \node[anchor=north west, align=left, font=\scriptsize, black!60] at (0.2, 5.9) {$\mathbb{J} = \{ J(\pi) \mid \pi \in \Pi \}$};

            \node[blue!70!black, font=\bfseries\small, rotate=-16, anchor=south] at (5.3, 4.48) {Pareto Front};

            \fill[red!50] (Pi1) circle (3pt) node[above left, font=\scriptsize, text=red!60!black] {$\pi_1^*$};
            \fill[red!50] (Pi2) circle (3pt) node[above right, font=\scriptsize, text=red!60!black] {$\pi_2^*$};
            \fill[red!50] (Pi3) circle (3pt) node[right, font=\scriptsize, text=red!60!black, xshift=2pt] {$\pi_3^*$};
            
            \fill[gray!60] (Pi4) circle (2.5pt);
            \fill[gray!60] (Pi5) circle (2.5pt);
            \node[gray!90!black, font=\scriptsize, anchor=north] at (5.25, 0.4) {Dominated $\pi_d$};
            
            \coordinate (MidPareto) at ($(Pi2)!0.5!(Pi3)$);
            \draw[->, red!70!black, very thick] (MidPareto) -- ++(32:1.4cm) 
                node[pos=1.0, anchor=south west, align=left, font=\scriptsize\bfseries] 
                {Normal $\mathbf{w} \in \mathbb{R}^d_{\ge 0}$\\(Positive Orthant)};

            \coordinate (MidDom) at ($(Pi1)!0.5!(Pi5)$);
            \draw[->, gray!90!black, very thick] (MidDom) -- ++(186:1.0cm) 
                node[pos=1.0, anchor=east, align=right, font=\scriptsize\bfseries] 
                {Normal $\mathbf{w} \notin \mathbb{R}^d_{\ge 0}$\\(Dominated)};
        \end{scope}

        \begin{scope}[shift={(12.0, 0)}] 
            \draw[->, thick, black!80] (0,0) -- (9.0,0) node[right] {$J_1$ (Objective 1)};
            \draw[->, thick, black!80] (0,0) -- (0,6.5) node[above] {$J_2$ (Objective 2)};

            \coordinate (PiA) at (3.5, 5.0);   
            \coordinate (PiMid) at (6.3, 4.2); 
            \coordinate (PiB) at (8.0, 1.5);   
            \coordinate (Pi4) at (6.5, 0.5);   
            \coordinate (Pi5) at (4.0, 0.5);   
            
            \coordinate (PiAvgOcc) at ($(PiA)!0.5!(PiB)$);
            \coordinate (PiIL) at (4.5, 1.5);  

            \fill[blue!5] (PiA) -- (PiMid) -- (PiB) -- (Pi4) -- (Pi5) -- cycle;
            \draw[gray!60, thick] (PiB) -- (Pi4) -- (Pi5) -- (PiA);
            \draw[blue!70!black, ultra thick] (PiA) -- (PiMid) -- (PiB);
            
            \draw[orange!90!black, ultra thick, dashed] (PiA) -- (PiB);
            
            \coordinate (ProjAvgOcc) at (5.75, 4.357);
            \draw[<->, thick, black!60, dashed] (PiAvgOcc) -- (ProjAvgOcc) 
                node[midway, sloped, above, font=\scriptsize\bfseries, text=black!70] {$L_\infty$ Dist.};
                
            \coordinate (ProjIL) at (4.5, 4.714);
            \draw[<->, thick, black!60, dashed] (PiIL) -- (ProjIL) 
                node[midway, sloped, above, font=\scriptsize\bfseries, text=black!70] {$L_\infty$ Dist.};

            \node[anchor=north west, align=left, font=\bfseries\small, text=black!80] at (0.2, 6.4) {Behavioral Cloning Failure: \textbf{\textcolor{red!70!black}{Failure II}} };
            \node[blue!70!black, font=\bfseries\small, rotate=-16, anchor=south] at (5.3, 4.48) {Pareto Front};

            \filldraw[orange!90!black] (PiAvgOcc) circle (3.5pt);
            \coordinate (LblAvgOcc) at (8.4, 3.8); 
            \node[text=orange!90!black, font=\scriptsize\bfseries, align=center] at (LblAvgOcc) {Returns of\\Avg. Occupancy};
            \draw[->, orange!90!black, thick, shorten >= 4pt] (LblAvgOcc) to[out=180, in=45] (PiAvgOcc);

            \filldraw[green!60!black, draw=green!80!black, ultra thick] (PiIL) circle (3.5pt);
            \coordinate (LblAvgPol) at (2.0, 1.5); 
            \node[text=green!60!black, font=\scriptsize\bfseries, align=center] at (LblAvgPol) {Returns of\\Avg. Policy};
            \draw[->, green!60!black, thick, shorten >= 4pt] (LblAvgPol) to[out=0, in=180] (PiIL);

            \fill[red!50] (PiA) circle (3pt) node[above left, font=\scriptsize, text=red!60!black] {$\pi_A^*$ (Expert A)};
            \fill[red!50] (PiB) circle (3pt) node[right, font=\scriptsize, text=red!60!black, xshift=2pt] {$\pi_B^*$ (Expert B)};
            
            \fill[gray!60] (Pi4) circle (2.5pt);
            \fill[gray!60] (Pi5) circle (2.5pt);

        \end{scope}

    \end{tikzpicture}
    } 
    
    \vspace{0.2cm}
    
    \caption{\textbf{(Left) Geometry of the MOMDP Return Space.} The space of expected returns $\mathbb{J}$ is a convex polytope \cite{momdp_convexity}. The Pareto front (bold blue line) consists of boundary faces whose outward normal vectors $\mathbf{w}$ lie entirely within the positive orthant $\mathbb{R}^d_{\ge 0}$. Boundary faces with outward normal vectors containing negative components represent Pareto-dominated policies. All vertex policies are deterministic. \textbf{(Right) Naive Behavioral Cloning Failure.} Applying standard BC to a mixed dataset from multiple Pareto-optimal experts ($\pi_A^*$, $\pi_B^*$) converges to either the average occupancy measure (orange dot) or the average policy (green dot), depending on whether the objective is optimized at the trajectory or state level, respectively. Due to potentially nonlinear environment dynamics, the expected return of the average policy can "sag" significantly into the interior of the polytope. Nonetheless, both outcomes maintain a constant suboptimality gap as visualized by their $L_\infty$ distance to the Pareto front (dashed black lines), highlighting \textbf{\textcolor{red!70!black}{Failure II}}.}
    \label{fig:momdp_geometry_and_failure}
\end{figure*}

To understand why naive imitation learning fails in multi-objective settings and why selective data partitioning succeeds, we must first establish the structural and geometric properties of MOMDPs. 

\paragraph{Setup} We define an MOMDP as a tuple $\mathcal{M}=(\X,\mathcal{A},P,\gamma,\initial,r)$, where $\X$ and $\mathcal{A}$ are the discrete state and action spaces, $P$ is the transition dynamics, $\gamma \in [0, 1)$ is the discount factor, and $\initial$ is the initial state distribution \cite{white1982multi,furukawa1980characterization}. 
As in the single objective setting, we consider controlling the system with a stationary Markov policy $\pi: \X\rightarrow\Delta_{\A}$ which is a mapping from states to distributions over the action space. We denote the set of all stationary Markov policies as $\Pi$.
Crucially, the reward function $r:\X\times \mathcal{A}\rightarrow [0,1]^{d}$ outputs a $d$-dimensional vector representing potentially conflicting objectives.  The expected vector-valued return of a stationary policy $\pi$, denoted as $J(\pi) \in \mathbb{R}^d$, is given by $J(\pi)=\mathbb{E}_{\pi}\left[\sum_{t=0}^{\infty}\gamma^{t}r(X_t,A_{t})\right]$ where the infinite length sequence $\br{X_t, A_t}^\infty_{t=0}$ is generated by sampling $X_0 \sim \initial$, and then for any $t \geq 0$, $A_t \sim \pi(\cdot|X_t)$ and $X_{t+1} \sim P(\cdot|X_t, A_t)$. Moreover, we define the set of possible expected vector returns as $\mathbb{J} = \bc{ J(\pi) : \pi: \X\rightarrow \Delta_{\A}}$. An important concept for our analysis is the occupancy measure induced by policy $\pi$ defined as $\mu^\pi(x,a) = (1-\gamma) \mathbb{E}_{\pi}\bs{\sum_{t=0}^{\infty}\gamma^{t}\mathds{1}_{\bc{X_t = x,A_{t}=a}}}$ and the state occupancy measure defined as the marginal $\nu^\pi(x) = \sum_{a\in\A} \mu^\pi(x,a)$.
\paragraph{Solution concept} 
In general MOMDPs, there rarely exists a single policy $\pi^\star$ such that $\forall i \in [d]$, $J_i(\pi^\star) = \max_{\pi\in\Pi} J_i(\pi)$. For this reason, we focus on learning \emph{Pareto front policies} which are guaranteed to exist in any MOMDP. In particular, we define an approximate Pareto front policy as follows.
\begin{definition}\textbf{$\varepsilon$-approximate Pareto front policy ($\varepsilon$-PF-policy)} Let $\pi^\star$ be the output of a randomized algorithm. Then, $\pi^\star$ is an $\varepsilon$-PF-policy if$$ \nexists \pi \in \Pi ~~ \text{such that} ~~ (\forall i \in [d], ~~ J_i(\pi) - \mathbb{E}[J_i(\pi^\star)] \ge \varepsilon) ~~ \text{and} ~~ (\exists j \in [d], ~~ J_j(\pi) - \mathbb{E}[J_j(\pi^\star)] > \varepsilon)\,, $$\end{definition}

where the expectation is with respect to the algorithm's randomization. In words, a policy is an $\varepsilon$-PF-policy if there does not exist another policy in $\Pi$ that improves all entries of $\mathbb{E}[J(\pi^\star)]$ by at least $\varepsilon$, and improves at least one entry by strictly more than $\varepsilon$. Moreover, if $\pi^\star$ is not random and is an $\varepsilon$-PF-policy with $\varepsilon=0$, we call it a Pareto front policy, abbreviated as PF-Policy. In any MOMDP, the set of PF-policies is guaranteed to be non-empty \cite{furukawa1980characterization,white1982multi}.
\paragraph{Structural properties} In recent work \cite{momdp_convexity}, it has been shown that the return space achieved by the returns of stochastic stationary policies forms a convex polytope, where the vertices are deterministic policies. This implies that the Pareto front lies on the boundary of the polytope, and corresponds to edges and vertices that have a normal vector in the positive orthant. This can be visualized in Figure~\ref{fig:momdp_geometry_and_failure}. Moreover, we define neighboring PF-policies as follows.
\begin{definition}[Neighboring PF-policies]
\label{def:neighboring_vertices}
    Two PF-policies $\pi_A$ and $\pi_B$ are said to be \textbf{neighboring} if and only if there exists a weight vector $w \in \mathbb{R}^d_{+}$ such that the face of optimal returns, i.e. $\argmax_{J \in \mathbb{J}} w^T J$ coincides exactly with the line segment connecting $J(\pi_A)$ and $J(\pi_B)$.
\end{definition}

Next, we introduce a key structural result that motivates the design of our algorithm in \Cref{sec:algo}: We show that any pair of neighboring PF-policies differing in $k$ states can be bridged by a sequence of $k$ intermediate PF-policies, each differing by only a single state action.

\begin{restatable}{theorem}{Pareto}[Pareto Path Existence]
    \label{thm:pareto_path}
    Let $\pi_A$ and $\pi_B$ be Neighboring PF-policies that differ in $k$ states. Then, there exists a finite sequence of $k+1$ deterministic PF-policies $(\pi_0, \pi_1, \dots, \pi_k)$ such that $\pi_0 = \pi_A$ and $\pi_k = \pi_B$ and consecutive policies in the sequence differ in exactly one state, i.e.  $\sum_{x\in \X} \mathds{1}{\bc{\pi_{i-1}(x) \neq \pi_{i}(x)}} = 1$ for all $i \in \{1, \dots, k\}$.
\end{restatable}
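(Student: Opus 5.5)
The plan is to reduce the statement to a single-objective fact. Neighboring PF-policies come with a weight vector $w\in\mathbb{R}^d_+$ such that the face $F_w=\argmax_{J\in\mathbb{J}} w^\top J$ is exactly the segment $[J(\pi_A),J(\pi_B)]$. Consider the scalarized MDP with reward $r_w(x,a)=w^\top r(x,a)$. A policy $\pi$ is optimal for $r_w$ (in the sense of maximizing $w^\top J(\pi)$) if and only if $J(\pi)\in F_w$. Moreover, every such policy is a PF-policy: if $w$ has strictly positive entries this is immediate. If some entries of $w$ vanish, we use that $F_w$ is the Pareto edge between two PF-vertices, so every point of the segment is Pareto optimal; this is the geometry recalled above from \cite{momdp_convexity}. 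Hence it suffices to build a chain of deterministic policies from $\pi_A$ to $\pi_B$, changing one state at a time, with every policy in the chain $r_w$-optimal. Since $\pi_A,\pi_B$ are vertices of the polytope, I take them to be deterministic.

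For the chain, let $V^\star_w$ and $Q^\star_w$ be the optimal value and action-value functions of the scalarized MDP. Both $\pi_A$ and $\pi_B$ are $r_w$-optimal, so they have optimal value from the initial distribution $\initial$. The step I expect to be the main obstacle is that $\initial$-optimality does not give $Q^\star_w(x,\pi_A(x))=V^\star_w(x)$ at states that $\pi_A$ never visits. For this reason I would not use the Bellman-optimal action sets directly.

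Instead I would use the performance difference lemma, which handles this issue. Let $D=\{x_1,\dots,x_k\}$ be the states where $\pi_A$ and $\pi_B$ differ. Define $\pi_i$ to agree with $\pi_B$ on $\{x_1,\dots,x_i\}$ and with $\pi_A$ elsewhere, so that consecutive policies differ in exactly one state, $\pi_0=\pi_A$ and $\pi_k=\pi_B$. The order of $D$ has to be chosen so that each $\pi_i$ stays optimal.

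To choose the order, apply the performance difference lemma in occupancy form:
\[
w^\top J(\pi)-w^\top J(\pi_A)=\tfrac{1}{1-\gamma}\sum_x \nu^{\pi}(x)\big(Q^{\pi_A}_w(x,\pi(x))-V^{\pi_A}_w(x)\big).
\]
The following plan should work.

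Step 1. At every state $x$ with $\nu^{\pi_A}(x)>0$, $\pi_A$ is greedy with respect to $Q^\star_w$, since $V^{\pi_A}_w=V^\star_w$ on its support; similarly for $\pi_B$.

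Step 2. Switching states in an order along which the new action never has a negative advantage under the current policy keeps the value optimal, because each increment is a sum of nonpositive terms that must also be nonnegative by optimality. The natural candidate order switches first the states reachable under $\pi_B$, in an order compatible with reachability. A cleaner alternative is to replace $Q^{\pi_A}$ by $Q^\star_w$ and argue through optimal occupancies.

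Step 3. Use the polytope geometry. All $\pi_i$ have returns lying on the face $F_w$, which is the segment. Single-state changes between optimal policies move the return along that segment. Each $\pi_i$ is deterministic, so its return is a vertex or lies on the edge, and it is a PF-policy by the first paragraph.

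If Step 2's ordering argument fails at states off both supports, I would fall back on this: at states visited by neither policy the action is irrelevant to $J$, so those states can be switched at any point without changing the return.
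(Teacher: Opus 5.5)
You reduce correctly to the scalarized problem, and you are more careful than the paper's proof at one point. The paper asserts that $\pi_A(x)$ and $\pi_B(x)$ are $Q^\star_w$-greedy at \emph{every} state and then flips the $k$ states in an arbitrary order. That holds when $\nu_0$ has full support, but not in general. You rightly note that $\nu_0$-optimality gives greediness only on $S_A=\operatorname{supp}\nu^{\pi_A}$ and $S_B=\operatorname{supp}\nu^{\pi_B}$. Your argument that every point of $F_w$ is Pareto optimal, which is needed when $w$ has zero entries, is also right.

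The gap is Step 2. Once the greedy-everywhere shortcut is dropped, the existence of a valid flipping order is essentially the whole theorem, and Step 2 does not establish it. Its criterion (the new action never has a negative advantage under the current policy) can fail for every order. Take a scalarized MDP with $\nu_0=\delta_{s_0}$ and reward $1$ everywhere except $r_w(x,e)=0$ and $r_w(z,\cdot)=0$:
\begin{itemize}
\item at $s_0$, action $a$ leads to an absorbing state $g$ and action $b$ leads to $x$;
\item at $x$, action $c$ leads to $g$ and action $e$ leads to an absorbing state $z$.
\end{itemize}
Write policies as their actions at $(s_0,x)$, and let $\pi_A=(b,c)$ and $\pi_B=(a,e)$. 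Both attain the optimal value $1/(1-\gamma)$. Flipping $x$ first gives $(b,e)$, whose value is only $1$, so $s_0$ must be flipped first. The subsequent flip at $x$ then has advantage $-1/(1-\gamma)$ under $(a,c)$. It is harmless only because the resulting policy $(a,e)$ never visits $x$.

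If you exchange the names of $\pi_A$ and $\pi_B$, the forward reachability order under $\pi_B$ ($s_0$ before $x$) again produces $(b,e)$. So your candidate order is not pinned down, and its natural reading fails. Your fallback also assumes what must be proved. The policy $(b,e)$ reaches $z$, which neither expert visits. Intermediate policies therefore need not stay in $S_A\cup S_B$, and actions outside that set are not automatically irrelevant.

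The missing idea is to order the flips by supports. Let $D$ be the disagreement set.
\begin{itemize}
\item First flip $D\setminus S_A$, in any order. Each such policy agrees with $\pi_A$ on $S_A$, which contains $\operatorname{supp}\nu_0$ and is closed under $\pi_A$. Hence its return is exactly $J(\pi_A)$.
\item Then flip $D\cap S_A\cap S_B$, in any order. States of $D$ in $S_B\setminus S_A$ are already switched and states of $D$ in $S_A\setminus S_B$ are not yet switched. So at every $x\in S_A\cup S_B$ the current policy $\pi$ plays $\pi_Z(x)$ for some $Z\in\{A,B\}$ with $x\in S_Z$. It follows that $S_A\cup S_B$ is closed under $\pi$, and by your Step 1 $\pi$ is $Q^\star_w$-greedy on it. The Bellman equation restricted to this closed set gives $V^\pi_w=V^\star_w$ there, in particular on $\operatorname{supp}\nu_0$.
\item Finally flip $D\cap S_A\setminus S_B$. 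Each such policy agrees with $\pi_B$ on $S_B$, so its return is $J(\pi_B)$.
\end{itemize}
All intermediate policies are deterministic and $w$-optimal, so your segment argument makes them PF-policies. One minor point: a policy whose return is a vertex need not be deterministic. Determinism of $\pi_A$ and $\pi_B$ is instead implicit in the statement, since $\pi_0=\pi_A$.
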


We refer to the sequence $(\pi_0, \pi_1, \dots, \pi_k)$ as a \emph{Pareto path}. The existence of this path guarantees that the Pareto front is inherently populated by structurally similar policies, precluding scenarios where all optimal policies have completely disjoint behaviors. To the best of our knowledge, \Cref{thm:pareto_path} is a new geometric characterization of MOMDPs of independent interest. Notably, it generalizes the findings of \cite{finding_pareto_front} by removing their requirement of a unique mapping between policies and returns. The proof is provided in \Cref{app:proofs}, where we contrast our result with \cite{finding_pareto_front} and additionally show that all stochastic mixtures of the policies along the segment $J(\pi_{i-1})$ to $J(\pi_{i})$ are also Pareto optimal. This can be visualized in \Cref{fig:pareto_path}.

\begin{figure}[tbp]
    \centering
    \resizebox{0.7\textwidth}{!}{
    \begin{tikzpicture}[>=stealth, scale=1.0]
        
        \fill[teal!5, rounded corners] (0, 0.5) rectangle (3.5, 6.7);
        \node[font=\bfseries\small, text=teal!80!black, anchor=north] at (1.75, 6.5) {Policy Space $\Pi$};

        \coordinate (P0) at (1.75, 5.4);
        \coordinate (P1) at (1.75, 4.4);
        \coordinate (P2) at (1.75, 3.4);
        \coordinate (P3) at (1.75, 2.4);
        \coordinate (P4) at (1.75, 1.4);

        \node[circle, draw=teal!80!black, fill=white, thick, inner sep=2pt, minimum size=0.8cm] (N0) at (P0) {$\pi_0$};
        \node[circle, draw=teal!80!black, fill=white, thick, inner sep=2pt, minimum size=0.8cm] (N1) at (P1) {$\pi_1$};
        \node[circle, draw=teal!80!black, fill=white, thick, inner sep=2pt, minimum size=0.8cm] (N2) at (P2) {$\pi_2$};
        \node[circle, draw=teal!80!black, fill=white, thick, inner sep=2pt, minimum size=0.8cm] (N3) at (P3) {$\pi_3$};
        \node[circle, draw=teal!80!black, fill=white, thick, inner sep=2pt, minimum size=0.8cm] (N4) at (P4) {$\pi_4$};

        \node[font=\scriptsize, text=teal!80!black, anchor=east] at (N0.west) {$(=\pi_A)$};
        \node[font=\scriptsize, text=teal!80!black, anchor=east] at (N4.west) {$(=\pi_B)$};

        \draw[->, teal!80!black, thick] (N0) -- (N1) node[midway, right, font=\scriptsize] {1-flip};
        \draw[->, teal!80!black, thick] (N1) -- (N2) node[midway, right, font=\scriptsize] {1-flip};
        \draw[->, teal!80!black, thick] (N2) -- (N3) node[midway, right, font=\scriptsize] {1-flip};
        \draw[->, teal!80!black, thick] (N3) -- (N4) node[midway, right, font=\scriptsize] {1-flip};

        \begin{scope}[shift={(2.0, 0)}]
            \node[font=\bfseries\small, text=black!80, anchor=north west] at (6.0, 6.5) {Return Space $\mathbb{J}$};

            \fill[blue!5] (7.0, 5.4) -- (11.0, 2.0) -- (11.5, 0.5) -- (6.0, 0.5) -- (5.5, 3.5) -- cycle;
            \draw[blue!80!black, ultra thick] (7.0, 5.4) -- (11.0, 2.0);
            \node[font=\bfseries, text=blue!80!black, rotate=-40.4, anchor=south] at (9.0, 3.7) {Pareto Front Edge};
            \coordinate (JA) at (7.0, 5.4);
            \coordinate (JB) at (11.0, 2.0);
        \end{scope}

        \draw[->, thick, black!50, dashed, out=0, in=180] (N0.east) to (JA);
        \draw[->, thick, black!50, dashed, out=0, in=205] (N1.east) to (JA);
        \draw[->, thick, black!50, dashed, out=0, in=225] (N2.east) to (JA);

        \draw[->, thick, black!50, dashed, out=0, in=160] (N3.east) to (JB);
        \draw[->, thick, black!50, dashed, out=0, in=180] (N4.east) to (JB);

        \node[font=\scriptsize\bfseries, text=black!60, fill=white, inner sep=2pt, rounded corners] at (5.9, 3.1) {$J(\cdot)$ Mapping};

        \fill[red] (JA) circle (4.0pt) node[above right, font=\small, text=black] {$J(\pi_A)$};
        \fill[red] (JB) circle (4.0pt) node[right, font=\small, text=black, xshift=4pt] {$J(\pi_B)$};

    \end{tikzpicture}
    }
    \vspace{0.2cm}
    \caption{\textbf{Visualization of Theorem \ref{thm:pareto_path} (Pareto Path Existence).} Let two neighboring Pareto-optimal policies, $\pi_A$ and $\pi_B$, differ from each other at exactly $k=4$ states. As established by the theorem, we can find a sequence of intermediate policies that are a 1-state action flip from each other and are also guaranteed to be on the Pareto front.}
    \label{fig:pareto_path}
\end{figure}

We conclude that the Pareto front is densely populated by structurally similar policies. We will leverage this structural fact in developing our algorithm for imitation learning in MOMDPs that we introduce next.

\subsection{Imitation Learning in MOMDPs}
In imitation learning in MOMDPs, a learning algorithm aims to learn a set of $\varepsilon$-PF policies in a MOMDP with unknown transitions $P$, reward $r$ and initial distribution $\initial$.
The learning signal is a collection of datasets $\bc{\mathcal{D}_\ell }^L_{\ell=1}$. Each dataset contains state-action pairs sampled from the occupancy measure of an expert policy $\pi_\ell\in \Pi$, i.e. $\mathcal{D}_\ell = \bc{X^\ell_n, A^\ell_n}^{N_\ell}_{n=1}$ and $X^\ell_n, A^\ell_n \sim \mu^{\pi_\ell}$ for each $n\in[N_\ell]$. 
Then, given that for each $\ell\in[L]$, $\pi_\ell$ is a PF-policy, we aim to find a learning algorithm that outputs a set of policies $\bc{\hat{\pi}_\ell}^{L'}_{\ell=1}$
that are $\varepsilon$-PF. \footnote{Notice that we do not restrict the number of output policies to be equal to the number of experts. That is, we can have $L'\neq L$}.

\paragraph{Assumption (Deterministic Experts)} For ease of exposition, we will assume in the rest of the main text that the expert policies $\pi_\ell$ are deterministic. However, our framework naturally accommodates stochastic experts. We detail the extension to stochastic Pareto policies in Appendix \ref{app:stochastic_experts}.

\paragraph{Reduction to single-expert imitation} The problem can be reduced to single-expert imitation learning: it is indeed possible to learn each $\hat{\pi}_\ell$ by running an imitation learning algorithm for single-objective MDPs, such as behavioral cloning \cite{Pomerleau:1991,foster2024behavior,rajaraman2020toward}. This approach completely neglects the information in the datasets $\bc{\mathcal{D}_j}^L_{j\neq\ell}$, optimizing each policy independently as
$
\hat{\pi}_\ell = \mathrm{argmax}_{\pi\in\Pi} \sum_{X,A \in \mathcal{D}_\ell} \log \pi(A | X).
$

Invoking standard guarantees, such as \cite[Theorem 1.2]{rajaraman2020toward}, we can conclude that $\hat{\pi}_\ell$ is an $\varepsilon$-PF policy if $N_\ell \geq \frac{\abs{\X}}{(1-\gamma)^2\varepsilon}$.  Moreover,  the lower bounds for single objectives MDPs \cite{rajaraman2020toward} imply that no better rate can be achieved by focusing only on $\mathcal{D}_\ell$ as input. Therefore, we have the following negative takeaway.
\begin{tcolorbox}[
  colback=red!10!white,
  colframe=red!70!black,
  width=\linewidth,
  before skip=10pt, 
  after skip=10pt   
]
\textbf{\textcolor{red!70!black}{Failure I:}} Treating imitation learning in MOMDPs as imitation learning in MDPs requires each expert dataset to have size linear in the state space cardinality, i.e. $\forall \ell \in [L]$, $N_\ell = \Omega(\abs{\X})$.
\end{tcolorbox}

Given that PF-policies agree on a large number of states (see the structural properties highlighted in \Cref{sec:momdps}) there is hope to require that only the size of the joint dataset $\sum^L_{\ell=1} N_\ell$ grows with $\abs{\X}$ as opposed to the much stricter requirement of imposing that each single expert dataset does so. 
\paragraph{Single output BC} A tempting approach in this direction is not splitting the data at all, that is, to run behavioral cloning over the full dataset. This generates a policy $\piout = \mathrm{argmax}_{\pi\in\Pi} \sum_{X,A \in \textcolor{teal}{\cup^L_{\ell=1} \mathcal{D}_\ell}} \log \pi(A | X).$ However, this approach suffers from an even more severe failure mode. Indeed, the next theorem proves that single output BC can fail to learn an $\varepsilon$-PF policy even under infinite data.
\begin{restatable}{theorem}{singleBC}
\label{thm:single_output_BC}
    Let $\piout = \mathrm{argmax}_{\pi\in\Pi} \sum_{X,A \in \cup^L_{\ell=1} \mathcal{D}_\ell} \log \pi(A | X)$. There exists a MOMDP and a policy $\pi^\star$ such that:
    $\forall i \in [d]$ it holds that \[
    J_i(\pi^\star) \geq J_i(\piout) + \frac{1}{3(1-\gamma)}\,.
    \]
\end{restatable}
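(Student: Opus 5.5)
We prove the statement with an explicit counterexample in the infinite-data regime. There each dataset $\mathcal{D}_\ell$ is effectively the occupancy measure $\mu^{\pi_\ell}$, and $\piout$ is the state-wise maximum-likelihood solution. With two deterministic experts $\pi_A,\pi_B$ and equal data sizes, this gives
\[
\piout(a|x) = \frac{\nu^{\pi_A}(x)\mathds{1}\{a=\pi_A(x)\} + \nu^{\pi_B}(x)\mathds{1}\{a=\pi_B(x)\}}{\nu^{\pi_A}(x)+\nu^{\pi_B}(x)},
\]
i.e. the policy induced by the averaged occupancy. In particular, $\piout$ equals $\pi_\ell(x)$ at states visited only by expert $\ell$. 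The plan is to build a small deterministic MOMDP with $d=2$ in which this stitched or mixed policy strays into a region that is bad for both objectives. We then exhibit a $\pi^\star$ (one of the experts suffices) whose returns beat $\piout$ by at least $\frac{1}{3(1-\gamma)}$ in every coordinate.

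The construction uses an initial state $x_0$, two ``corridor'' states $x_1,x_2$, and an absorbing ``trap'' state $x_T$ with reward $(0,0)$.
\begin{itemize}
\item At $x_0$ (reward $(0,0)$ or small), action $a_1$ leads to $x_1$ and $a_2$ leads to $x_2$.
\item At $x_1$, action ``stay'' gives reward $(1,0)$ and remains in $x_1$, while ``switch'' goes to $x_T$.
\item At $x_2$, ``stay'' gives reward $(0,1)$, while ``switch'' goes to $x_T$.
\end{itemize}
This version is not yet enough, because experts only act on their own corridor. We therefore need a conflict at a state shared by both experts that makes mixing hurt. The preferred design is a single shared absorbing state $x_s$ with two actions, $b_1$ giving reward $(1,0)$ and $b_2$ giving $(0,1)$, where $\pi_A$ always plays $b_1$ and $\pi_B$ always plays $b_2$. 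Averaging then gives $(1/2,1/2)$ per step, which is still on the front. To break this, we make the conflict destructive through the dynamics rather than the rewards. We use two states $y_1,y_2$ on a cycle. Expert A plays the pair (``go to $y_2$'', ``stay'') with reward $(1,0)$ collected at $y_2$. Expert B plays the mirror image, collecting $(0,1)$ at $y_1$. The mixture bounces between the two states and falls into the trap with constant probability each step. Concretely, we design the dynamics so that any deviation from an expert's action at any state is followed by an action that sends the process to $x_T$. The mixture policy then reaches $x_T$ within $O(1)$ steps with high probability and collects a vanishing fraction of the reward. The experts, meanwhile, each collect at least $\frac{1}{2(1-\gamma)}$-ish in their own objective.

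The remaining difficulty is that each expert beats $\piout$ in only one objective, whereas the statement requires a single $\pi^\star$ that is better in all coordinates. We choose $\pi^\star$ as a deterministic policy alternating states so that it earns $(1/2,1/2)$ per step, for example by adding an action that yields $(1/2,1/2)$ which neither expert uses in the data. Since each expert is Pareto-optimal, we need $J(\pi_A)=(c,0)$-like points and $\pi^\star$ all on the front; weighting $\gamma$ and reward magnitudes suitably keeps them non-dominated. We then compute $J(\piout)$ explicitly via a geometric series: the probability of remaining outside the trap decays like $(1/2)^t$, so $J_i(\piout) \le \frac{1}{1-\gamma/2}\cdot O(1)$, which is bounded by roughly $\frac{1}{6(1-\gamma)}$ for the chosen parameters. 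Finally, we verify $J_i(\pi^\star)\ge \frac{1}{2(1-\gamma)}$ in each coordinate.

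The main obstacle is getting the constants to reach $\frac{1}{3(1-\gamma)}$ uniformly in $\gamma$ while keeping both experts exactly on the Pareto front. I would first fix the gadget and compute the three returns symbolically, and only then tune the rewards, for instance by assigning expert rewards $(1,0)$ and $(0,1)$, a compromise reward $(1/2,1/2)$, and trap probability $1/2$, so that the gap is at least $\frac{1}{2}-\frac{1}{6}=\frac{1}{3}$ times $\frac{1}{1-\gamma}$.
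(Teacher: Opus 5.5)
\textbf{The gap: the trap-based collapse of $J_i(\piout)$ cannot be realized as described.} Your bound $J_i(\piout)\le\frac{1}{6(1-\gamma)}$ relies on a trap that $\piout$ hits with probability $1/2$ per step. By your own formula, $\piout(a|x)>0$ only if $\mu^{\pi_\ell}(x,a)>0$ for some expert $\ell$. Since $\nu^{\pi_\ell}(x')=(1-\gamma)\nu_0(x')+\gamma\sum_{x,a}\mu^{\pi_\ell}(x,a)P(x'|x,a)$, the set $\bigcup_\ell\mathrm{supp}(\nu^{\pi_\ell})$ contains the initial support and is closed under $\piout$.

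So $\piout$ never ``deviates'' from the experts in the sense you need. Every pair it plays is played by some expert at that same state. If that pair sent the process to $x_T$ with probability $1/2$, that expert would bear the same risk. A trap that punishes deviations but is never entered by the experts is therefore never entered by $\piout$ either.

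Your two-state cycle does not make the mixture bounce. Expert A only passes through $y_1$ transiently, while B sits there, so $\nu^{\pi_A}(y_1)=O(1-\gamma)\ll\nu^{\pi_B}(y_1)$. Hence $\piout$ plays B's ``stay'' at $y_1$, and A's ``stay'' at $y_2$, with probability $1-O(1-\gamma)$. It keeps collecting reward on almost every step, so at least one coordinate of $J(\piout)$ is roughly $\frac{1}{2(1-\gamma)}$ or larger. With a compromise action paying exactly $(1/2,1/2)$ per step, essentially no gap is left.

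Getting a $\Theta(1/(1-\gamma))$ loss from the dynamics alone would need a recurrent shared risky region that the mixture keeps revisiting while each expert crosses it only once. It would also need an explicit occupancy-weighted computation. The proposal supplies neither.

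\textbf{The fix: put the loss in the rewards, as the paper does; no dynamics are needed.} Keep your single absorbing shared state with $b_1$ paying $(1,0)$ and $b_2$ paying $(0,1)$. Give the third action $b_3$ a reward strictly above the average, $(\tfrac12+\alpha,\tfrac12+\alpha)$ with $\alpha=\tfrac13$.

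With equal data, $\piout$ is uniform on $\{b_1,b_2\}$, so $J_1(\piout)=J_2(\piout)=\frac{1}{2(1-\gamma)}$. The policy $\pi^\star\equiv b_3$ gives $J_1(\pi^\star)=J_2(\pi^\star)=\frac{5}{6(1-\gamma)}$. That is a gap of exactly $\frac{1}{3(1-\gamma)}$ in every coordinate and for every $\gamma$. The experts stay Pareto-optimal because they are the unique maximizers of $J_1$ and $J_2$.

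You had already noticed that the $(1/2,1/2)$ average lies on the front only because no better compromise exists. Setting the compromise reward equal to the average is what pushed you into the dynamics route. Separately, uniformity in $\gamma$ is not required, since $\gamma$ is part of the MOMDP whose existence is claimed; this construction delivers it anyway.
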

The main idea to prove the negative result is to realize that given $\pi, \pi'$ that are PF-policies, it is not true that the average policy, defined as $\frac{\pi(a|x) + \pi'(a|x)}{2}$ for all $x,a\in \X\times\A$, is a PF-policy. This fact is also depicted in \Cref{fig:momdp_geometry_and_failure}. We can therefore conclude the section with a second negative takeaway.
\begin{tcolorbox}[
  colback=red!10!white,
  colframe=red!70!black,
  width=\linewidth,
  before skip=10pt, 
  after skip=10pt   
]
\textbf{\textcolor{red!70!black}{Failure II:}} Running imitation learning without splitting the dataset can fail to learn an $\varepsilon$-PF policy even with an infinite amount of data.
\end{tcolorbox}

We now move to the next section to show that splitting the data \emph{in the right manner} avoids both failure cases.

\section{Multi-Output Augmented Behavioral Cloning (MA-BC)}
\label{sec:algo}
Our new algorithm, Multi-Output Augmented Behavioral Cloning (MA-BC), is inspired by the structural result in \Cref{thm:pareto_path} which establishes that PF-policies used by the experts are likely to agree on the same actions in several states. Our idea is then to use this fact to enable a large data reuse mechanism, while avoiding \textcolor{red!70!black}{\textbf{Failure II}}. MA-BC outputs $L' = L$ policies trained by Behavioral Cloning on a curated dataset. In particular, our algorithm first identifies the set of states in which the experts showed contradictory behavior (see Line 2 in \Cref{alg:main}). More precisely, a state is included in $\mathcal{D}_{\mathrm{div}}$ if it appeared in at least two datasets of the collection $\bc{\mathcal{D}_{\ell}}^L_{\ell=1}$ but, in at least one case, the actions taken by two experts in that state are different. Then, in (\Cref{alg:main} Line 3) we define the set of common states as the set difference between the joint dataset $\cup^L_{\ell=1} \mathcal{D}_\ell$
and the diverging dataset $\mathcal{D}_{\mathrm{div}}$. For this operation, we use the convention that removing a state from $\cup^L_{\ell=1} \mathcal{D}_\ell$ means removing all its occurrences together with all the actions taken at that state.

Finally, we learn $L$ policies by running $L$ instances of BC. In particular, to learn the output policy $\hat{\pi}_\ell$, we use the dataset $\mathcal{D}_{\mathrm{common}} \cup \mathcal{D}_\ell$. That is, we use the union of: (i) the expert-specific dataset $\mathcal{D}_\ell$ and (ii) the common dataset $\mathcal{D}_{\mathrm{common}}$.

\paragraph{The rationale behind the design of MA-BC} Crucially,  $\mathcal{D}_{\mathrm{common}}$ also contains information collected by the other experts. In particular, the curated dataset includes states that have not been visited by the expert $\pi_\ell$ but by the other experts. This allows the \emph{experts to help each other} and, in turn, to bypass \textcolor{red!70!black}{\textbf{Failure~I}}. At the same time, \textcolor{red!70!black}{\textbf{Failure~II}} is avoided by including only states in which no expert disagreement has been observed.
\begin{algorithm}[!t]
    \caption{MA-BC: Multi-Output Augmented Behavioral Cloning \label{alg:main}}
    \begin{algorithmic}[1]
    \State \textbf{Input:} Single expert datasets $\bc{\mathcal{D}_\ell}^L_{\ell=1}$.
    \State Identify the diverging states
    \[
    \mathcal{D}_{\mathrm{div}} = \bcc{ 
        \begin{aligned}
            x \in \X, a \in \A  ~~|~~ & \exists \ell,\ell'\in [L], \ell \neq \ell', \exists ~~n \in [N_\ell], n'\in [N_{\ell'}] \\
            & \text{s.t.} ~~ X^\ell_n=X^{\ell'}_{n'} = x ~~\text{AND}~~ A^\ell_n \neq A^{\ell'}_{n'} 
        \end{aligned}
    }
    \]
    \State Define the common states
    \[
    \mathcal{D}_{\mathrm{common}} = (\cup^L_{\ell=1} \mathcal{D}_\ell) \setminus \mathcal{D}_{\mathrm{div}}
    \]
    \State \textcolor{teal}{\% Run BC on the curated dataset}
    \For{$\ell=1,\dots,L$}
    \[
    \hat{\pi}_\ell = \argmax_{\pi\in\Pi} \sum_{X,A \in \mathcal{D}_{\mathrm{common}} \cup \mathcal{D}_\ell } \log \pi(A|X).
    \]
    \EndFor
    \State \textbf{Output:} $\bc{\hat{\pi}_\ell}^L_{\ell=1}$.
    \end{algorithmic}
\end{algorithm}
The avoidance of both failures is made clear by the following theorem which is our main result.

\begin{restatable}{theorem}{MultiBC}
\label{thm:multi_BC}
Let $\bc{\pi_\ell}^L_{\ell=1}$ be Pareto front policies and let $\X_{\mathrm{common}} \subset \X$ be defined as $\X_{\mathrm{common}} := \bc{x \in \X: \pi_1(x) = \pi_2(x) = \dots = \pi_L(x)}$ and let $\bc{\hat{\pi}_\ell}^L_{\ell=1}$ be the output of MA-BC. Then, we have that for all $\ell\in[L]$, $\hat{\pi}_\ell$ is $\varepsilon_\ell$-PF with

\[  \varepsilon_\ell = \norm{\frac{\nu^{\pi_\ell}}{\nu^{\pi_{\mathrm{data}}}}}_{\infty}\frac{\abs{\X_{\mathrm{common}}} }{e (1-\gamma)^2 N} + \frac{\abs{\X\setminus\X_{\mathrm{common}}}}{e (1-\gamma)^2 N_\ell},
\]

where $\nu^{\pidata}$ is defined as $\nu^{\pidata}(x) = \frac{\sum^L_{\ell=1} N_\ell \nu^{\pi_\ell}(x)}{N}$ for each $x\in\X$. 
\end{restatable}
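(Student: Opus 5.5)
The plan is to reduce $\varepsilon_\ell$-PF to a bound on the expected return gap $\norm{J(\pi_\ell)-\mathbb{E}[J(\hat\pi_\ell)]}_\infty$. Since $\pi_\ell$ is a PF-policy, any $\pi$ that improves every coordinate of $\mathbb{E}[J(\hat\pi_\ell)]$ by at least $\varepsilon$ and one coordinate strictly more would dominate $\pi_\ell$ whenever $\varepsilon\ge \norm{J(\pi_\ell)-\mathbb{E}[J(\hat\pi_\ell)]}_\infty$. That contradicts Pareto optimality, so it suffices to bound each coordinate gap.

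Each coordinate $J_i$ has rewards in $[0,1]$. By the standard performance-difference argument for deterministic experts, as in \cite{rajaraman2020toward}, I would bound
\[
|J_i(\pi_\ell)-J_i(\hat\pi_\ell)|\le \frac{1}{(1-\gamma)^2}\,\mathbb{P}_{x\sim\nu^{\pi_\ell}}\bs{\hat\pi_\ell(x)\neq\pi_\ell(x)}.
\]
In expectation over data, the right-hand side is controlled by the expert-visitation mass of states where $\hat\pi_\ell$ can disagree with $\pi_\ell$.

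The key structural claim is that $\hat\pi_\ell(x)=\pi_\ell(x)$ at every state appearing in $\mathcal{D}_{\mathrm{common}}\cup\mathcal{D}_\ell$. I would verify this by cases.
\begin{itemize}
\item If $x$ appears in $\mathcal{D}_\ell$ and not in $\mathcal{D}_{\mathrm{div}}$, then every recorded action at $x$, from any expert, equals $\pi_\ell(x)$. This holds because experts are deterministic and no disagreement was observed.
\item If $x$ appears in $\mathcal{D}_\ell$ and in $\mathcal{D}_{\mathrm{div}}$, then $x$ was removed from the common set. Only $\mathcal{D}_\ell$ contributes, and it always records $\pi_\ell(x)$.
\item If $x$ appears only through $\mathcal{D}_{\mathrm{common}}$ and not in $\mathcal{D}_\ell$, then $\hat\pi_\ell$ copies another expert's action. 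This is correct only if $x\in\X_{\mathrm{common}}$.
\end{itemize}

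The third case is the main obstacle. A state $x\notin\X_{\mathrm{common}}$ seen by a single expert $\ell'\ne\ell$ has no observed conflict, so it enters $\mathcal{D}_{\mathrm{common}}$ with the wrong action for $\ell$. I would not try to rule this out. Instead I would charge such states to the event ``$x$ unseen in $\mathcal{D}_\ell$'', since BC is wrong there anyway in the worst case. This gives the error set
\[
E_\ell\subseteq\bc{x\in\X_{\mathrm{common}}: x\notin \textstyle\cup_{\ell'}\mathcal{D}_{\ell'}}\ \cup\ \bc{x\notin\X_{\mathrm{common}}: x\notin\mathcal{D}_\ell}.
\]

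Taking expectations gives
\[
\sum_{x\in\X_{\mathrm{common}}}\nu^{\pi_\ell}(x)\,(1-\nu^{\pidata}(x))^{N}\;+\sum_{x\notin\X_{\mathrm{common}}}\nu^{\pi_\ell}(x)\,(1-\nu^{\pi_\ell}(x))^{N_\ell}.
\]
Here the pooled samples are treated as $N$ draws from the mixture $\nu^{\pidata}$. Strictly, they are stratified, but $\prod_{\ell'}(1-\nu^{\pi_{\ell'}}(x))^{N_{\ell'}}\le \exp(-N\nu^{\pidata}(x))$ gives the same bound.

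For the first sum I would write $\nu^{\pi_\ell}\le \norm{\nu^{\pi_\ell}/\nu^{\pidata}}_\infty\nu^{\pidata}$ and use $\max_{p} p\,e^{-Np}\le 1/(eN)$. For the second sum I would use $p(1-p)^{N_\ell}\le 1/(eN_\ell)$. Summing over the respective state sets and multiplying by $1/(1-\gamma)^2$ gives exactly $\varepsilon_\ell$.
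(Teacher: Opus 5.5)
Your proposal is correct and follows the paper's proof: the same performance-difference split into $\X_{\mathrm{common}}$ and its complement, the same change of measure to $\nu^{\pidata}$ on common states, and the same missing-mass bounds of the form $p(1-p)^n\le 1/(en)$. Your case analysis is more careful than the paper's in two places. First, the paper asserts $\hat\pi_\ell=\pi_\ell$ on all of $\mathcal{D}_{\mathrm{common}}\cup\mathcal{D}_\ell$, which fails for a diverging state seen only by another expert; its final bound nonetheless charges such states to the event $x\notin\mathcal{D}_\ell$, exactly as you do. Second, the paper writes the stratified miss probability as the equality $(1-\nu^{\pidata}(x))^N$, whereas your $e^{-N\nu^{\pidata}(x)}$ bound handles the stratification explicitly.
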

An immediate corollary reads as follows:
\begin{corollary}\label{cor:MA-BC}
    For any $\varepsilon > 0$, let us consider that $\abs{\X\setminus\X_{\mathrm{common}}} = K$ and that $\abs{\X} \geq 2K$ and let us denote $C_{\pidata} := \max_{\ell\in[L]} \norm{\frac{\nu^{\pi_\ell}}{\nu^{\pi_{\mathrm{data}}}}}_{\infty} $. Then, the whole set $\bc{\hat{\pi}_\ell}$ output by MA-BC is $\varepsilon$-PF if $$N = \mathcal{O}\br{\frac{C_{\pidata} \abs{\X}}{(1-\gamma)^2\varepsilon}} ~~\text{and}~~ N_\ell = \mathcal{O}\br{\frac{K}{(1-\gamma)^2\varepsilon}}.$$
\end{corollary}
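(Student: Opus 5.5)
The corollary should follow by plugging the two sample-size conditions into the bound of \Cref{thm:multi_BC} and making each of its two terms at most $\varepsilon/2$. Fix $\ell\in[L]$. Since $\norm{\nu^{\pi_\ell}/\nu^{\pidata}}_\infty \le C_{\pidata}$ by definition of $C_{\pidata}$, and $\abs{\X_{\mathrm{common}}} \le \abs{\X}$, the first term of $\varepsilon_\ell$ is at most $\frac{C_{\pidata}\abs{\X}}{e(1-\gamma)^2 N}$. Choosing $N \ge \frac{2C_{\pidata}\abs{\X}}{e(1-\gamma)^2\varepsilon}$ makes it at most $\varepsilon/2$. The second term equals $\frac{K}{e(1-\gamma)^2 N_\ell}$, and choosing $N_\ell \ge \frac{2K}{e(1-\gamma)^2\varepsilon}$ makes it at most $\varepsilon/2$. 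Hence $\varepsilon_\ell\le\varepsilon$ for every $\ell$, with the $\mathcal{O}$ absorbing the constant $2/e$.

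Next, $\varepsilon_\ell$-PF implies $\varepsilon$-PF whenever $\varepsilon_\ell \le \varepsilon$. This is direct from the definition: if some $\pi$ improved every entry of $\mathbb{E}[J(\hat\pi_\ell)]$ by at least $\varepsilon$ and one entry strictly more, it would also improve every entry by at least $\varepsilon_\ell$ and that entry by strictly more than $\varepsilon_\ell$. This contradicts $\hat\pi_\ell$ being $\varepsilon_\ell$-PF. Applying this for every $\ell\in[L]$ shows that the whole set $\bc{\hat\pi_\ell}$ is $\varepsilon$-PF.

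The conditions are also mutually consistent. Since $N=\sum_\ell N_\ell$, taking $N_\ell$ at the prescribed scale is compatible with the requirement on $N$ provided $N$ is large. The role of the hypothesis $\abs{\X}\ge 2K$ is only to guarantee $\abs{\X_{\mathrm{common}}}\ge \abs{\X}/2$. This means the first bound is tight up to a factor $2$, so the requirement on $N$ genuinely scales with $\abs{\X}$ while the per-expert requirement scales only with $K$. It is not needed for the sufficiency direction.

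The only step that needs care is the monotonicity of the $\varepsilon$-PF notion in $\varepsilon$, specifically that the strict inequality transfers. It does, because $\varepsilon_\ell\le\varepsilon$ gives $J_j(\pi)-\mathbb{E}[J_j(\hat\pi_\ell)]>\varepsilon\ge\varepsilon_\ell$. Everything else is substitution into \Cref{thm:multi_BC}.
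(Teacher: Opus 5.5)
Your proposal is correct and is the substitution argument the paper intends when it calls this an immediate corollary of \Cref{thm:multi_BC}. You bound each term of $\varepsilon_\ell$ by $\varepsilon/2$ via $\norm{\nu^{\pi_\ell}/\nu^{\pidata}}_\infty\le C_{\pidata}$ and $\abs{\X_{\mathrm{common}}}\le\abs{\X}$, then use monotonicity of the $\varepsilon$-PF notion, including the strict inequality. You are also right that the hypothesis $\abs{\X}\ge 2K$ is not needed for the sufficiency direction.
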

The first important remark is that the single dataset sizes $\bc{N_\ell}^L_{\ell=1}$ are now required to scale linearly only with the number of diverging states $K$, which can be much smaller than $\abs{\X}$ as proven in \cite{momdp_convexity}. In this regime, our approach is expected to be much more efficient than the approach based on imitation learning in MDPs, which suffers from \textcolor{red!70!black}{\textbf{Failure~I}}. Moreover, \textcolor{red!70!black}{\textbf{Failure~II}} is avoided because 
\Cref{thm:multi_BC} certifies that the output policies are PF-policies with infinite data, i.e. when  $N \rightarrow \infty$ and $N_\ell \rightarrow \infty$.

\paragraph{On the role of the concentrability coefficient $C_{\pidata}$}
Interestingly, $C_{\pidata}$ in \Cref{thm:multi_BC} determines how much the experts' common dataset is informative for each individual learner. To illustrate this, consider a simple setting with $L=2$.  In this case, we have that $
C_{\pidata} = \max_{x\in\X_{\mathrm{common}}}\frac{N \nu^{\pi_1}(x)}{N_1 \nu^{\pi_1}(x) + N_2 \nu^{\pi_2}(x)}.$
In the first extreme scenario, assume the two experts are not informative for each other. In particular, let us consider the case in which there exists a state $x\in \X_{\mathrm{common}}$ in which $\nu^{\pi_1}(x) > 0$ but $\nu^{\pi_2}(x) = 0$. Unfortunately, we have that $\norm{\frac{\nu^{\pi_1}}{\nu^{\pi_{\mathrm{data}}}}}_{\infty} = \frac{N}{N_1}$ and plugging this into our general bound of \Cref{thm:multi_BC} we have that the right hand side reduces to  $\frac{\abs{\X}}{e N_1}$ which the same rate we could get using only the data from expert $\pi_1$ (i.e. we fall back to \textbf{\textcolor{red!70!black}{Failure~I}} ). In this situation, the data from the other expert do not help because $C_{\pidata}$ is as high as it can be.
A more benign situation occurs when, for all $x\in \X_{\mathrm{common}}$ we have that
$
\nu^{\pi_2}(x) \geq c_1 \nu^{\pi_1}(x)$  and $\nu^{\pi_1}(x) \geq c_2 \nu^{\pi_2}(x).$ Intuitively, this means that the two experts induce a similar state occupancy measure even though they choose different actions in some states.  
In this case, we have that, $C_{\pidata} \leq \frac{N}{N_1 + c_1 N_2} \leq \frac{N}{c_1 N_1 + c_1 N_2} = \frac{1}{c_1}$, where we assumed that $c_1 \leq 1$.
In this case, the right hand side in \Cref{thm:multi_BC} becomes 
$\frac{\abs{\X_{\mathrm{common}}} }{e c_1 (1-\gamma)^2 N} + \frac{\abs{\X\setminus\X_{\mathrm{common}}}}{e (1-\gamma)^2 N_1}$, 
which is faster than what we would get using the dataset from a unique expert, especially when there are many common states. In this situation we ensure that the second term that scales as $1/N_1$ becomes unaffected by the number of states. At the same time, while the first term depends on the number of states, it exhibits a faster $\mathcal{O}(1/N)$ decay, scaling inversely with the size of the joint dataset. The inverse dependence on $c_1$ implies that the faster the decay of the first term is, the more similar the experts' state occupancy measures are.

\paragraph{Extension to unsplit experts}
Our algorithm assumed access to the split datasets, where the expert identities are revealed. In Appendix \ref{app:unsplit_mabc}, we consider the setting where the dataset is not split, and the learner must first identify the latent experts' identities. We prove a similar bound to Theorem~\ref{thm:multi_BC} in this setting (see Theorem~\ref{thm:unsplit_mabc}),

\paragraph{Is $C_{\pidata}$ necessary?} From the above discussion regarding the role of $C_{\pidata}$, it emerged that MA-BC is not always better than reducing the problem to imitation learning in MDPs. Indeed, when $C_{\pidata}$ takes its largest possible value, each individual dataset $\mathcal{D}_\ell$ is required to grow linearly with $\abs{\X}$, at least according to our analysis in \Cref{thm:multi_BC}.
Therefore, our analysis states that MA-BC is beneficial only when $C_{\pidata}$ is reasonably small.
As we show in the next Theorem, the dependence on $C_{\pidata}$ is not an artifact of our analysis, but is rather unavoidable. That is, for any offline imitation learning, there exists a two-objective MDP in which learning an $\varepsilon$-PF policy requires at least $\Omega(K/\varepsilon)$ state-action pairs in both expert-specific datasets and  $\Omega(C_{\pidata} \abs{\X}/\varepsilon)$ state-action pairs in the joint dataset.

\begin{restatable}{theorem}{L}
\label{thm:lower}
    For any offline multi-output  imitation algorithm that outputs $\bc{\hat{\pi}_\ell}^L_{\ell=1}$ $\varepsilon_\ell$-PF policies, there exists a two-objective MDP with $\gamma \geq \frac{1}{2}$ in which for any index $\ell\in[L]$:
    \[ \varepsilon_\ell \geq  \Omega\br{\max \bcc{\norm{\frac{\nu^{\pi_\ell}}{\nu^{\pi_{\mathrm{data}}}}}_{\infty}\frac{\abs{\X_{\mathrm{common}}} }{ N (1-\gamma)^2}, \frac{\abs{\X\setminus\X_{\mathrm{common}}}}{ N_\ell (1-\gamma)}}}.
    \]
\end{restatable}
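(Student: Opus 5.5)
We prove the bound by exhibiting two separate hard instances, one for each term inside the maximum, and then gluing them into a single two-objective MDP. Since $\max\{a,b\}\le a+b$, it suffices to show that each term is a lower bound up to constants. Each term is handled with the standard single-objective behavioral cloning lower-bound construction of \cite{rajaraman2020toward}, adapted to the multi-objective setting.

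\textbf{Divergent term.} We build a two-objective MDP with $K=\abs{\X\setminus\X_{\mathrm{common}}}$ states at which two neighboring PF experts $\pi_1,\pi_2$ choose different actions. At each such state $x$:
\begin{itemize}
\item the action $\pi_1(x)$ earns reward $(1,0)$;
\item the action $\pi_2(x)$ earns reward $(0,1)$;
\item every other action moves the agent to an absorbing bad state with reward $(0,0)$.
\end{itemize}
The remaining actions are needed so that "doing something else" is genuinely dominated. The initial distribution spreads mass uniformly over the $K$ divergent states, so each has mass about $1/K$.

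Consider a state that never appears in $\mathcal{D}_\ell$. Any other dataset in which it appears shows the other expert's action, and which action is the "$\ell$-correct" one at that state can be randomized across a family of instances. The learner's output therefore coincides with $\hat\pi_\ell$'s target action with probability at most $1/2$. A state is unseen in $\mathcal{D}_\ell$ with probability $(1-1/K)^{N_\ell}$, so the expected number of unseen states is about $K e^{-N_\ell/K}$, which is $\Omega(K)$ when $N_\ell\lesssim K$. This matches the regime of \cite{rajaraman2020toward}; with a careful choice of $\nu_0$ (one state of mass $1-\varepsilon$ and the rest spread thin) we get the $\Omega(K/N_\ell)$ fraction.

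Mistakes matter in both coordinates. Mixing one expert's actions with the other's at unseen states produces a policy dominated in both objectives, because a wrong action sends the agent to the absorbing bad state. Each mistake then costs on the order of $1/(1-\gamma)$ in every coordinate relative to the expert's PF-policy. The claimed rate has only one factor of $(1-\gamma)$, which is consistent with the absorbing-state argument when the initial mass is already small. This yields $\varepsilon_\ell\gtrsim K/(N_\ell(1-\gamma))$.

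\textbf{Common term.} On the common states all experts agree, so the problem collapses to single-objective BC from the pooled sampling distribution $\nu^{\pidata}$, evaluated under $\nu^{\pi_\ell}$. We place the common states in a part of the MDP that expert $\ell$ reaches with a larger mass than the other experts. Choosing the occupancy ratio gives $\norm{\nu^{\pi_\ell}/\nu^{\pidata}}_\infty=C$ uniformly on these states. Effectively, a state carrying mass $p$ under $\pi_\ell$ is sampled at rate $p/C$ in the joint data of size $N$.

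We then run the \cite{rajaraman2020toward} construction: one anchor state plus $\abs{\X_{\mathrm{common}}}$ rare states, and an unseen-state error that is absorbing with both reward coordinates zero. This gives the lower bound $C\abs{\X_{\mathrm{common}}}/(N(1-\gamma)^2)$. The compounding-error $(1-\gamma)^{-2}$ comes, as in the single-objective case, from the rare states being reachable at every time step.

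\textbf{Gluing.} Finally, we glue the two gadgets via an initial distribution that mixes them with constant weights, and take $\gamma\ge 1/2$ so that the constants are uniform. The main obstacle is the multi-objective translation: we must certify that the learned policy's expected return is dominated by some policy by at least $\varepsilon$ in every coordinate simultaneously, not just in a scalarized value. We handle this by designing all error modes as absorbing zero-reward states. Then the expert's own PF-policy improves every coordinate by the total lost mass times $\Omega(1/(1-\gamma))$. A second obstacle is making the occupancy ratio exactly $C$ while keeping the experts Pareto-optimal and neighboring.
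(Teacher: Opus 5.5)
There is a genuine gap in your divergent-state gadget. In your instance $\pi_1(x)$ earns $(1,0)$, $\pi_2(x)$ earns $(0,1)$, and every other action is absorbing at $(0,0)$. Suppose the two expert actions lead to continuations of equal value and the correct action is played on common states. Then any policy that plays \emph{either} expert's action at each divergent state is optimal for the scalarization $w=(1,1)$, hence exactly Pareto-optimal.

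So your step ``mixing one expert's actions with the other's at unseen states produces a policy dominated in both objectives'' is false. The other expert's action is not a wrong action in the PF sense. Your comparator, the expert's own policy $\pi_\ell$, is strictly worse than such a mixed policy in the other coordinate. A learner that copies whichever expert action it has seen at $x$ therefore loses nothing at states missing from $\mathcal{D}_\ell$ but present in another dataset. Your argument can only charge states absent from \emph{every} dataset. In your instance the divergent part of the error is then at most of order $K/(N(1-\gamma))$, which matches the claimed term only when $N_\ell$ is a constant fraction of $N$.

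The paper makes the step you want true by adding a compromise action $a_3$ with reward $(0.8,0.8)$ at each divergent state. This is the device of the single-output BC counterexample, and it makes mixing the experts across states strictly dominated. The paper then compares not with $\pi_1$ but with $\pi^\star$, which plays $a_1$ on divergent states present in $\mathcal{D}_1$, $a^\star$ on common states, and $a_3$ on the remaining divergent states. It averages over a family $M_1,\dots,M_4$ that relabels $(a_1,a_2,a_3,a_{\mathrm{end}})$. The paper argues that the learner's averaged $Q$-vector at a divergent state missing from $\mathcal{D}_1$ is the average of the four, and $Q(x,a_3)$ exceeds this by at least $1/(8(1-\gamma))$ in \emph{both} coordinates once $\gamma\ge 1/2$. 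Divergent states are visited only at $t=0$, which is where the single factor $(1-\gamma)$ comes from. With $\initial$ putting mass $\frac{1}{N_1+1}$ on $K-1$ of them, this gives $\Omega(K/((1-\gamma)N_1))$.

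Keep in mind that no instance can give an $N_\ell$-rate for slot $\ell$ when another expert has far more data. An algorithm may fill slot $\ell$ with its clone of that expert, which is itself near-PF. The paper gets both indices by taking $N_1=N_2$.

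Second, your gluing cannot realize the concentrability coefficient you need. Suppose the common gadget is entered directly from the initial distribution. All experts share $\initial$ and act identically on common states, so $\nu^{\pi_\ell}=\nu^{\pi_{\ell'}}$ there, $\nu^{\pidata}=\nu^{\pi_\ell}$, and the ratio is $1$. Experts can reach common states with different frequencies only through states where they act differently, so the gadgets must be composed in series.

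In the paper, $a_1$ leads to $x^1_{\mathrm{root}}$, while $a_2$ leads to $x^1_{\mathrm{root}}$ with probability $p$ and to $x^2_{\mathrm{root}}$ with probability $1-p$. Each root opens a copy of the instance of \cite{rajaraman2020toward} on $\abs{\X_{\mathrm{common}}}/2$ states, with a dominating action whose reward is equal in both coordinates. The first copy then has effective sample size $N_1+pN_2$ and ratio $\nu^{\pi_1}/\nu^{\pidata}=N/(N_1+pN_2)$. The second copy gives ratio $N/N_2$ for expert $2$. Applying Theorem 1.2 of \cite{rajaraman2020toward} to each copy yields the common term, which resolves exactly the ``second obstacle'' you left open.
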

Consequently, the multiplicative penalty of the concentrability coefficient $C_{\pidata}$ is unavoidable. Specifically, it scales the $\mathcal{O}(1/N)$ term for the expert $\ell^\star = \argmax_{\ell\in[L]} \norm{\frac{\nu^{\pi_\ell}}{\nu^{\pi_{\mathrm{data}}}}}_{\infty}$.

Note that we exclude single-output imitation learning algorithms from the lower bound because for those, we have already proven a much stronger negative result in \Cref{thm:single_output_BC}, which prevents convergence to PF-policies even under infinite data. In contrast, \Cref{thm:lower} states that the PF is approachable, but at a statistical rate that is no faster than the maximum of two terms, which matches the two terms in the rate obtained by our MA-BC (see \Cref{thm:multi_BC}).

Having developed a complete theoretical understanding of MA-BC in the tabular case, we present its continuous states and actions extension in the next section.

\section{Scaling MA-BC to Continuous States and Action Spaces}
\label{sec:continous_mabc}

Our approach naturally generalizes to continuous state-action spaces by augmenting the dataset of each expert $D^\ell$ with the following \emph{compatible state action} pairs:

\[
\mathcal{D}^+_\ell = \bc{X',A' \in \cup^L_{\ell' \neq \ell} \mathcal{D}_{\ell'} ~:~\exists X,A \in \mathcal{D}_\ell ~\text{s.t.}~ \norm{X - X'} \leq \delta ~\text{AND}~\norm{A - A'} \leq L^{\pi_{\ell}} \norm{X - X'} }
\]
where $L^{\pi_{\ell}}$ is the Lipschitz constant of the policy of the $\ell^{\mathrm{th}}$ expert, i.e. for all $x,x'\in \X\times\X$ it holds that $\norm{\pi^\ell(x) - \pi^\ell(x')} \leq L^{\pi_\ell} \norm{x - x'} $. The quantity $L^{\pi_{\ell}} \norm{X - X'}$ is the minimum meaningful action threshold. Reducing it further risks discarding bias-free samples because if the experts were in fact all equal, we could still expect actions at distance $L^{\pi_{\ell}} \delta$ in states at distance $\delta$. Then, MA-BC outputs the set of policies $\bc{\hat{\pi}_\ell}^L_{\ell=1}$ such that for each $\ell\in [L]$, $\hat{\pi}_\ell \in \mathrm{argmin}_{\pi: \X \rightarrow \A} \sum_{X, A \in \mathcal{D}_\ell \cup \mathcal{D}^+_\ell} \norm{A - \pi(X)}^2$.

\paragraph{A bias-data tradeoff} The parameter $\delta$ tunes a tradeoff between the bias introduced in the regression targets by augmenting the dataset with $\mathcal{D}^+_\ell$ and the size of $\mathcal{D}^+_\ell$ itself. In the extreme case where $\delta = 0$, no bias is introduced but we add some state action pairs in $\mathcal{D}^+_\ell$ only if the same state is visited by two different experts. However, this never happens in a continuous domain. On the contrary, as $\delta$ increases, we allow more and more bias but we are also guaranteed to increase the size of $\mathcal{D}^+_\ell$. Indeed, it is easy to notice that every state-action pair that is in the compatible set at level $\delta$ is also compatible for any $\delta' \geq \delta$. We highlight that the extension of MA-BC to continuous states and actions remains a heuristic, as we lack formal guarantees in this setting. However,
we empirically investigate the tradeoffs in this setting in Section \ref{sec:experiments}, highlighting many interesting phenomena.

\section{Experiments}
\label{sec:experiments}
We design a suite of experiments across four distinct domains. First, we evaluate our algorithm on three discrete MOMDP benchmarks: Deep Sea Treasure \cite{deep_sea_env}, Resource Gathering \cite{resource_gathering_env} and Slippery Y-Maze. These environments can be visualized in Figure \ref{fig:discrete_environments}. We further consider a continuous LQR drone navigation task \cite{drone_env}. All environments are described in detail in Appendix \ref{app:environments}. Across all domains, we consistently demonstrate that MA-BC converges to the Pareto front while achieving superior sample efficiency. We further provide a highly visual, pedagogical walkthrough of the Deep Sea Treasure environment in \Cref{app:pedagogical_deep_dive}. The implementation of all experiments is available at \url{https://github.com/ziyadsheeba/mabc.git}.

\subsection{Sample Efficiency}
\begin{figure*}[tbp]
    \centering
    \begin{subfigure}[b]{0.32\textwidth}
        \centering
        \includegraphics[width=\linewidth]{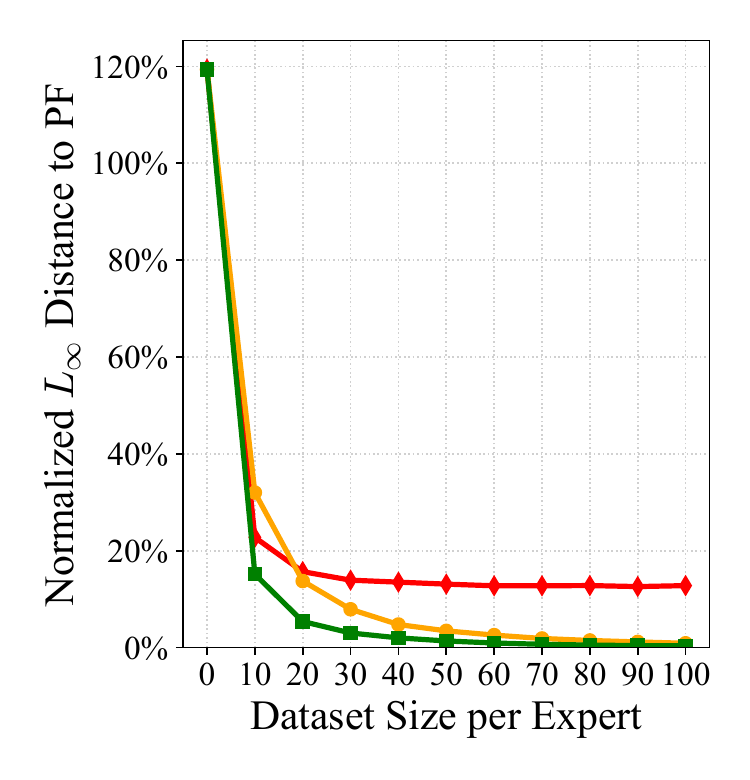} 
        \caption{Deep Sea Treasure}
        \label{fig:lc_dst}
    \end{subfigure}
    \hfill
    \begin{subfigure}[b]{0.32\textwidth}
        \centering
        \includegraphics[width=\linewidth]{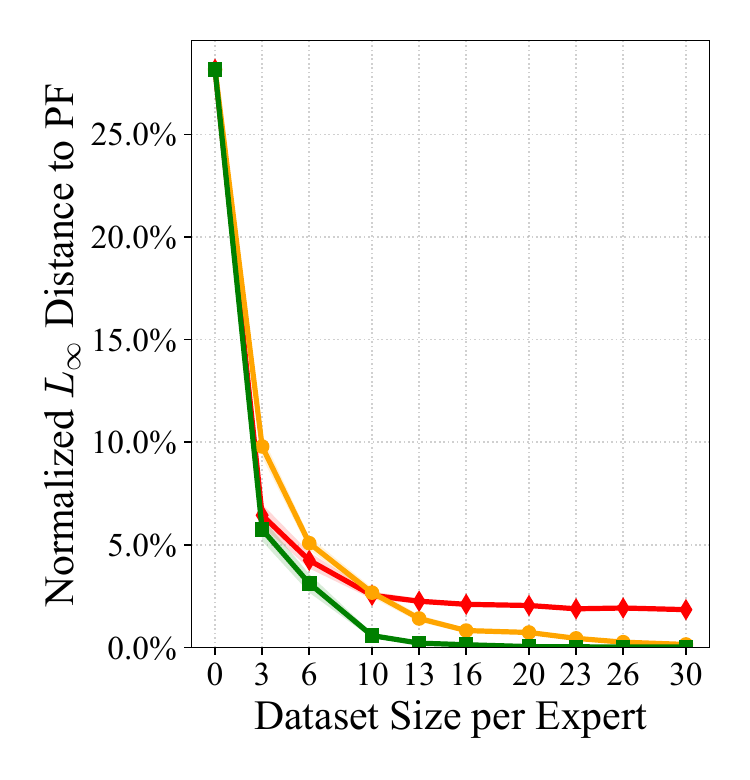} 
        \caption{Slippery Y-Maze}
        \label{fig:lc_ymaze}
    \end{subfigure}
    \hfill
    \begin{subfigure}[b]{0.32\textwidth}
        \centering
        \includegraphics[width=\linewidth]{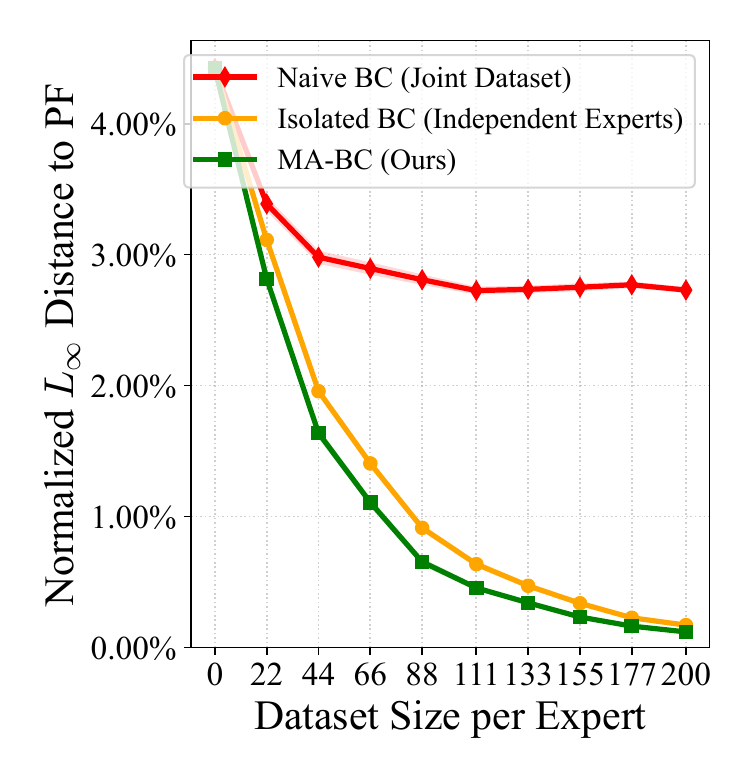}
        \caption{Resource Gathering}
        \label{fig:lc_rg}
    \end{subfigure}
    
    \caption{Sample efficiency and suboptimality gaps across all three discrete multi-objective environments. The y-axis represents the normalized $L_\infty$ distance to the continuous Pareto frontier. Across all domains, Naive BC (Joint Dataset) fails to converge to an $\epsilon$-PF policy (\textbf{\textcolor{red!70!black}{Failure II}}). While Isolated BC (Independent Experts) converges to the Pareto Front, it exhibits worse sample efficiency than MA-BC (\textbf{\textcolor{red!70!black}{Failure I}}). Solid lines denote the mean $L_\infty$ distance to the Pareto front, while shaded regions represent the standard error of the mean over 100 trials.}
    \label{fig:learning_curves}
\end{figure*}
We first evaluate the sample-efficiency trade-offs in multi-objective imitation across our discrete benchmarks. Our primary metric is the $L_\infty$ distance to the continuous returns Pareto frontier, computed as $\min_{J^* \in PF} \| J(\hat{\pi}) - J^* \|_\infty$, where $\hat{\pi}$ is the estimated policy from the data. We apply coordinate-wise normalization to the returns to appropriately adjust for the different scales of each objective.  We compute this distance exactly as described in Appendix \ref{app:linf_lp}. For the multi-output algorithms (Isolated BC and MA-BC), we average the distance across the experts and consolidate the results in Figure \ref{fig:learning_curves}. In each of the environments, we consider data coming from two extreme experts, each optimizing for only one dimension of the reward vector over a uniform initial state distribution. For the Resource Gathering environment, we choose the two experts to be the Gold-optimizing and the Gem-optimizing agents. 

Across all environments, the Naive BC approach (which jointly trains on all aggregated data) completely fails to converge to an $\varepsilon$-PF policy as the dataset size increases, which validates Theorem \ref{thm:single_output_BC}, highlighting \textcolor{red!70!black}{\textbf{Failure II}}. The Isolated BC baseline (training independent experts) successfully converges to the Pareto Front. However, it suffers from sample inefficiency as it ignores the underlying structure of the MOMDP (\textcolor{red!70!black}{\textbf{Failure I}}). In stark contrast, MA-BC converges to the Pareto front at a much faster statistical rate by leveraging shared structural knowledge. This confirms the theoretical guarantees of Theorem \ref{thm:multi_BC}.

\subsection{The Impact of Concentrability}

Theorem \ref{thm:lower} suggests that the performance difference between MA-BC and Isolated BC is governed by the concentrability parameter $C_{\pi_{data}}$. Intuitively, this means that the more experts' trajectories overlap, the more sample-efficient MA-BC becomes. To evaluate this relationship directly, our lower bound MOMDP allows us to explicitly compute $C_{\pi_{data}}$. As shown in Figure \ref{fig:concentrability_lower}, as $C_{\pi_{data}}$ decreases, MA-BC actively capitalizes on the shared structure. The resulting empirical gap precisely mirrors our theoretical expectations, validating Theorem \ref{thm:lower}. In order to capture this effect in the Resource Gathering and Deep Sea Treasure environments, we vary the stochasticity of the initial state distribution by controlling the probability of spawning uniformly at random versus starting from a fixed initial state. As illustrated in Figures \ref{fig:rg_alpha_sweep} and \ref{fig:concentrability_dst}, the performance gap widens as the initial state distribution becomes more stochastic. This increased stochasticity causes the experts' trajectories to overlap more frequently, effectively simulating a smaller $C_{\pi_{data}}$.

\begin{figure*}[tbp]
    \centering
    \begin{subfigure}{0.32\textwidth}
        \includegraphics[width=\linewidth]{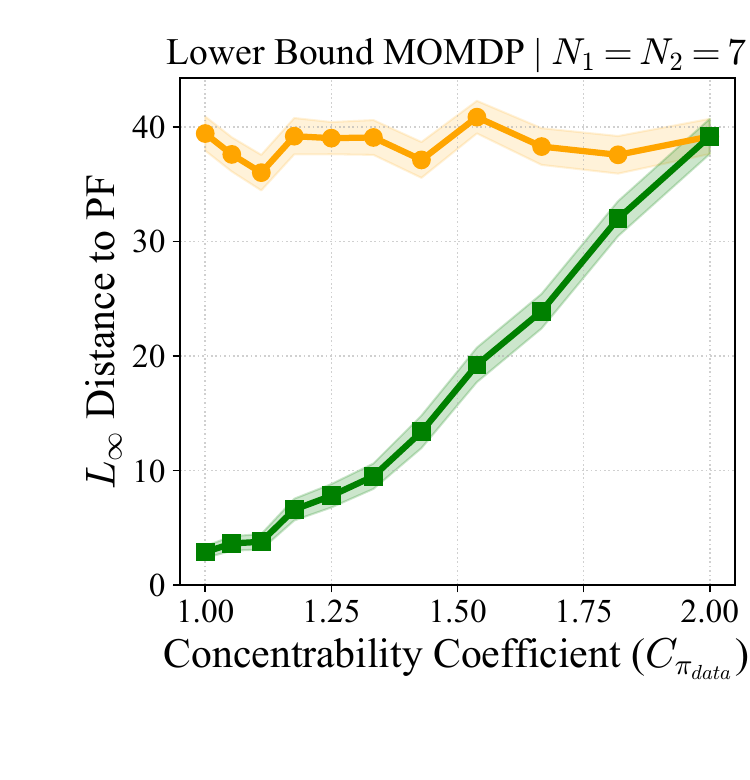}
        \caption{Lower Bound MOMDP}
        \label{fig:concentrability_lower}
    \end{subfigure}\hfill
    \begin{subfigure}{0.32\textwidth}
        \includegraphics[width=\linewidth]{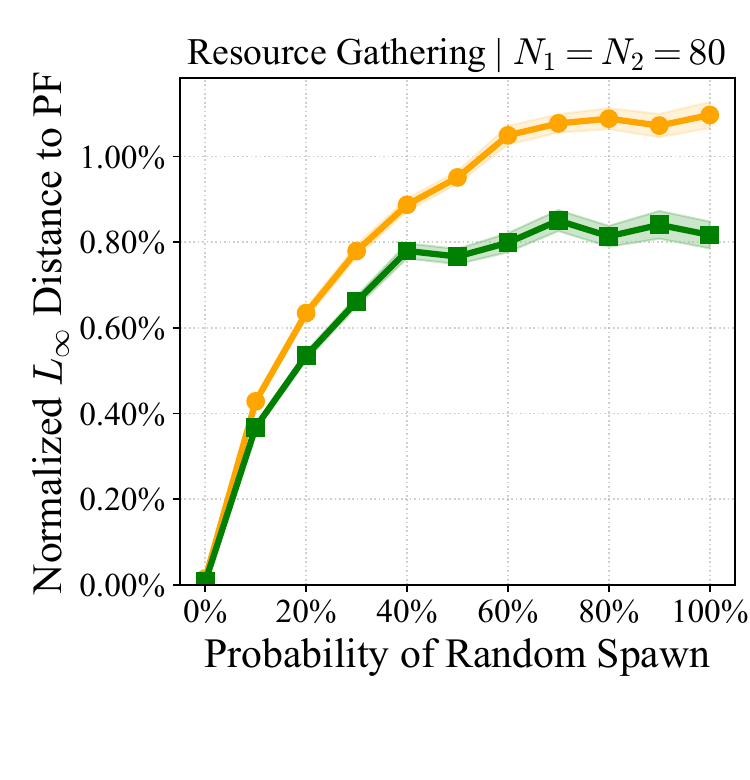}
        \caption{Resource Gathering}
        \label{fig:rg_alpha_sweep}
    \end{subfigure}\hfill
    \begin{subfigure}{0.32\textwidth}
        \includegraphics[width=\linewidth]{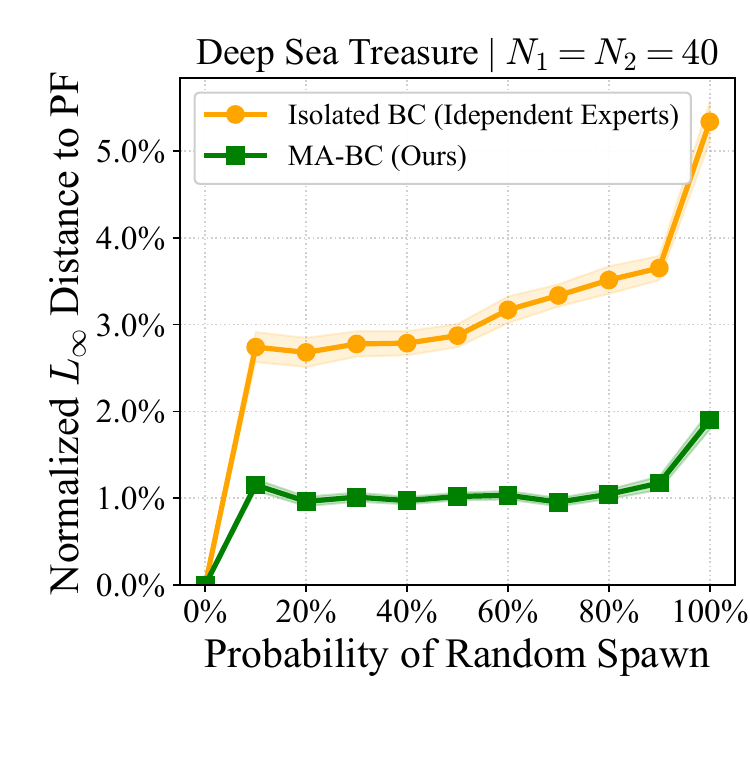}
        \caption{Deep Sea Treasure}
        \label{fig:concentrability_dst}
    \end{subfigure}
    
    \caption{Evaluation of dataset concentrability on MA-BC performance. \textbf{(Left)} In the Lower Bound MDP, as the concentrability factor decreases, the performance gap between MA-BC and Isolated BC widens. \textbf{(Middle and Right)} We vary the stochasticity of the initial state distribution in the Resource Gathering and the Deep Sea Treasure environments. As the initial state becomes more stochastic, more trajectories overlap between experts, thus providing MA-BC with an advantage. Solid lines denote the mean $L_\infty$ distance to the Pareto front, while shaded regions represent the standard error of the mean over 100 trials.}
    \label{fig:concentrability_analysis}
\end{figure*}

\subsection{Scaling to Continuous State-Action Spaces (LQR Drone Control)}

Finally, we demonstrate that MA-BC scales to continuous domains using a linearized 6-DOF quadcopter navigation task \cite{drone_env}. The multi-objective conflict lies between speed (Agile Expert) and energy conservation (Economic Expert). We estimate the underlying LQR controllers via Ridge Regression ($\lambda = 0.001$) on trajectories sampled from these Pareto-extreme experts (see Appendix \ref{app:drone_lqr_details}). Using our continuous MA-BC extension (Section \ref{sec:continous_mabc}), we apply a spatial tolerance $\delta$ to pool compatible state-action pairs across datasets. We apply an empirical min/max normalization of the states and actions to facilitate tuning the parameter $\delta$. We compute the Lipschitz constants using the true experts' LQR matrices. Figure \ref{fig:drone_eco_summary} evaluates the recovery of the Economic Expert.

\begin{figure*}[tbp]
    \centering
    \includegraphics[width=\linewidth]{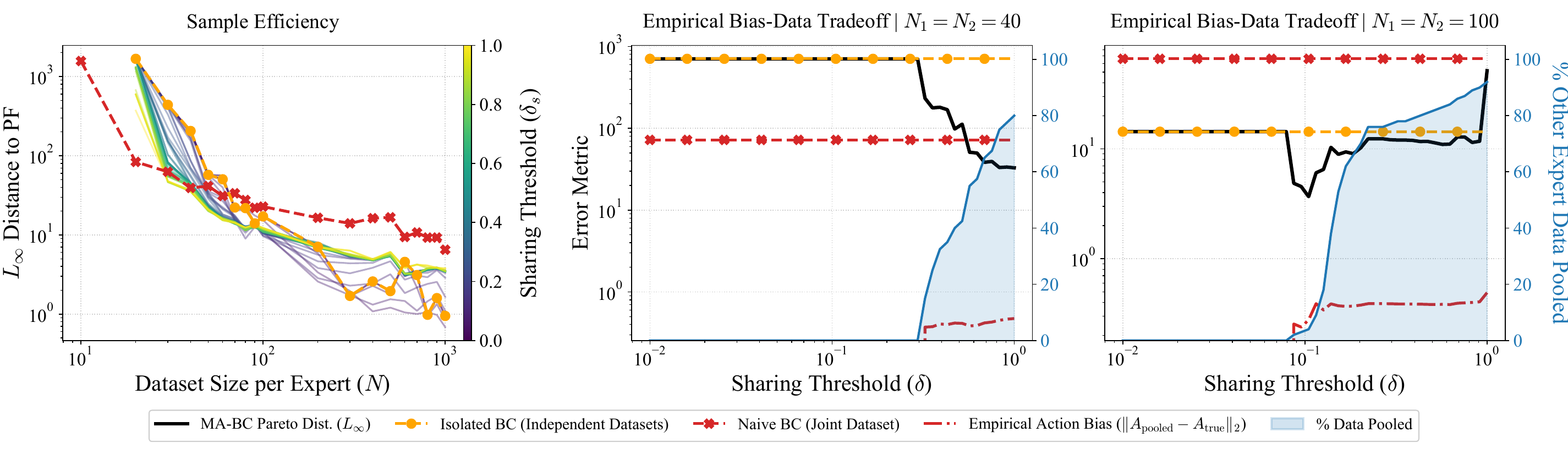}
    \caption{Continuous state-action imitation results for the LQR drone control task. The plots report the performance on recovering the Economic expert for the multi-output learners (i.e., MA-BC and Isolated BC). \textbf{(Left)} Sample efficiency across varying $\delta$ thresholds for MA-BC. \textbf{(Middle \& Right)} The bias-data tradeoff at $N=40$ and $N=100$, demonstrating how the optimal pooling threshold $\delta$ shrinks as the baseline data increases.}
    \label{fig:drone_eco_summary}
\end{figure*}

As shown in the left panel of Figure \ref{fig:drone_eco_summary}, large $\delta$ (yellow lines) drastically improve performance in the extreme low-data regime by aggressively pooling data, whereas small $\delta$ (purple lines) achieve the lowest asymptotic error given sufficient data. This empirical bias-data tradeoff is explicitly quantified in the middle and right panels of Figure \ref{fig:drone_eco_summary}. As $\delta$ increases, MA-BC pools a larger percentage of the other expert's data (blue shaded area), initially reducing the Pareto distance (solid black line). However, excessive $\delta$ injects severe empirical action bias (dashed red line), causing the error to eventually rebound. The empirical action bias is computed as $\norm{A_{\text{pooled}} - A_{\text{true}} }_2$, i.e., the action error introduced by pooling the data. Crucially, comparing the $N=40$ and $N=100$ scenarios reveals a data-dependent "sweet spot": as the available data grows, the learner requires less pooling to reduce variance and becomes highly sensitive to bias, causing the optimal $\delta$ threshold to shrink and shift leftwards.

\section{Conclusions}
In this paper, we studied offline imitation learning in MOMDPs for the first time. We identified a key structural property of MOMDPs, the Pareto path existence, which inspired the design of our new algorithm MA-BC. It learns to imitate Pareto Front experts when they are similar while retaining the single expert guarantees when the experts are dissimilar. Moreover, we have proven the optimality of MA-BC across the class of offline imitation algorithms.
Several exciting directions remain open. First, it would be interesting to provide convergence guarantees for our continuous control extension of MA-BC. The smoothness of the Pareto front identified in \cite{jadbabaie2024multi} offers a natural starting point for this analysis. Furthermore, our new structural result in \Cref{thm:pareto_path} enables the development of Traversal algorithms for Multi-Objective RL that apply to a larger class of MOMDPs compared to the state-of-the-art in \cite{finding_pareto_front}. Finally, a promising architectural direction involves developing neural network structures that can implicitly learn to pool shared state-action representations while maintaining specialized outputs for divergent expert behaviors.

\newpage
\begin{ack}
We thank Amin Charusaie and Adrian Müller for the valuable discussions throughout the project. C.V. and Z.S. are funded by the Deutsche Forschungsgemeinschaft (DFG, German Research Foundation) under project 468806714 of the Emmy Noether Programme and under Germany's Excellence Strategy – EXC number 2064/1 – Project number 390727645. C.V. also gratefully acknowledges funding from the European Union (ERC, ConSequentIAL, 101165883). Views and opinions expressed are however those of the author(s) only and do not necessarily reflect those of the European Union or the European Research Council. Neither the European Union nor the granting authority can be held responsible for them. C.V and Z.S also thank the international Max Planck Research School for Intelligent Systems (IMPRS-IS) for their support. L.V. was supported by the Swiss Data Science Center under fellowship number P22\textunderscore03.  Research by V.C was sponsored by the Army Research Office under Grant Number W911NF-24-1-0048 and funded by the Swiss National Science Foundation (SNSF) under grant number 2000-1-240094.
\end{ack}

\bibliographystyle{plain}
\bibliography{neurips2026}

\newpage
\appendix
\renewcommand{\contentsname}{Contents of Appendix}
\addtocontents{toc}{\protect\setcounter{tocdepth}{2}}
{
  \setlength{\cftbeforesecskip}{.8em}
  \setlength{\cftbeforesubsecskip}{.5em}
  \tableofcontents
}
\newpage
\section{Related Work}
\label{app:related}

Our work builds on the vast single-objective MDP imitation learning literature  and on the classic and more recent characterizations of the Pareto Front in MOMDPs.
Imitation learning has been a flourishing field in the last three decades, motivated first by robotics \cite{boularias2011relative} and autonomous driving \cite{Pomerleau:1991} and more recently by language modeling \cite{wulfmeier2024imitating,agarwal2024policy}. Common approaches include reductions to supervised learning  \cite{Pomerleau:1991,Ross:2010}, RL in the loop  \cite{Ng:2000,Abbeel:2004,Abbeel:2008,Syed:2007,Syed:2008}, online learning \cite{Shani:2021,viano2024imitation,moulin2025optimistically,swamy2021moments,moulin2025inverseqlearningrightoffline}, regularized RL \cite{Ziebart:2008,Ho:2016,Ho:2016b,viano2022proximal,watson2023coherent} and reward free RL \cite{xu2023provably}.

Our work relies on Behavioral Cloning \citep{Pomerleau:1991}, which is the first imitation learning algorithm. While originally used as heuristics and long thought to be suboptimal compared to interactive algorithms \cite{Ross:2010,Ross:2011,rajaraman2021value}, it has been proven to be optimal across offline algorithms \cite{rajaraman2020toward}, while \cite{foster2024behavior} establish it is actually suboptimal compared to interactive algorithms only when the expert is highly non-stationary. 

Secondly, we build on the foundations of multi-objective Reinforcement learning, such as early characterizations and existence results of Pareto Front policies \cite{furukawa1980characterization,white1982multi} and on the recent result by \cite{finding_pareto_front}, which established that when each policy attains a distinct return than neighboring policies on the Pareto fronts differ exactly in one state. This result inspired our data-sharing idea in the context of imitation learning. Moreover, the Pareto path existence is a direct generalization of their results that allows multiple policies to attain the same returns. In that case, we establish that each vertex on the Pareto front can correspond to multiple policies and that neighboring vertices can always be mapped to policies differing in at most 1 state.
Our result is also related to the recent line of literature \cite{tang2024multiagentimitationlearningvalue,freihaut2025learningequilibriadataprovably,freihaut2025rate,viano2026multi} which also studied imitation from multiple experts, who are, however, assumed to be in an equilibrium profile rather than in a Pareto Front configuration. Next, we review the most closely related works in each area mentioned above.

\paragraph{Classical Imitation Learning }
Imitation Learning has been shown to be successful in autonomous driving application in \cite{Pomerleau:1991} who proposed to model the problem  as supervised learning, thus inventing behavioral cloning, the algorithm on which we build on. Follow-up works focused on the problem of recovering a reward function in addition to reconstructing the expert policy \cite{Ng:2000,Russell:1998,Abbeel:2004,Abbeel:2008,Ziebart:2008}, proving better error propagation analyses via expert interaction \cite{Ross:2010,Ross:2011} or with environment access \cite{Ziebart:2008,boularias2011relative,Ho:2016,Ho:2016b}. We refer to the monograph \cite{Osa:2018} for a complete overview of the classics.

\paragraph{Recent theoretical developments} The first solid sample complexity investigation in imitation learning is by \cite{rajaraman2020toward}, which also introduced the lower bound construction on which we build on for ours to show that behavioral cloning is actually optimal across offline algorithms. Beyond the tabular setting, \cite{foster2024behavior} proved that behavioral cloning is suboptimal compared to interactive algorithms only when the expert policy is non-stationary, or without parameter sharing in their terminology. This result shows that the benefit of interaction is more limited than previously thought. On a similar line, we show that MA-BC is optimal across offline algorithms that reconstruct the Pareto Front experts.

\paragraph{Imitation learning in multi-agent systems} 
As previously mentioned approaches that merge all the data as in LLM pretraining \cite{radford2018improving,devlin2019bert} or that treat each expert individually \cite{kim2024pareto} suffers from \textcolor{red!70!black}{\textbf{Failure I}} or \textcolor{red!70!black}{\textbf{Failure II}} respectively.
Moreover, the only existing approach \cite{franzmeyer2024select} that interpolates between merging all the data and splitting them is still limited by treating all the data collected by an expert as equal. They are either all included or excluded. Therefore, in a simple example with two experts, one of the two failure modes would still apply.
In practice, people studied decentralized imitation learning for learning team behaviors \cite{le2017coordinated,song2018multi}. 
Moreover, other recent works studied imitation learning in multi-agent systems \cite{yu2019multi,tang2024multiagentimitationlearningvalue,freihaut2025learningequilibriadataprovably,freihaut2025rate,lei2025learning,viano2026multi}; however, taking the different perspective of considering that the expert follows a Nash (or Correlated in \cite{tang2024multiagentimitationlearningvalue}) equilibrium in a Markov Game \cite{shapley1953stochastic}. 
These two settings are quite distinct. In their setting each agent receives a scalar reward that depends on other players' strategies, that is the players are coupled. In contrast, in our setting, each agent receives a reward vector which is independent of which other agents are providing demonstrations. The setting of considering both vectorized rewards and coupled agents is perhaps an interesting future direction. 

\paragraph{Structural properties of MOMDPs}
The existence of Pareto front policies in any MOMDP has been established by \cite{furukawa1980characterization,white1982multi} in the discounted setting. Moreover, \cite{white1982multi} also suggested an algorithmic approach to reconstruct the state value functions of the Pareto front policies with a value-iteration style approach. Moreover, convexity of Pareto front has been established in \cite{momdp_convexity}.
These papers leave aside the question of how similar the Pareto front policies can be, which is instead settled by \cite{finding_pareto_front} where the authors prove that if each policy has a unique return, then neighboring policies on the Pareto front are guaranteed to differ at only one state. Therefore, there exists very similar Pareto front policies.  In our current work, we establish a more general result saying that even when the same value vector can be attained by multiple policies, we can always find, for each pair of values neighbors, corresponding policies that differ in at most one state.
This structural result also inspired the current approach to reconstruct the full Pareto front in known MOMDP, which is known as Pareto Front Traversal. The basic idea is to start from a Pareto Front deterministic policy computed via scalarization and some single objective RL algorithm and then flip the policy at one state and check that the new policy (as well as any convex combination with the old one) remains on the Pareto Front. We notice that in the imitation learning context, such approach is not viable because reward knowledge is needed for the algorithmic routine checking that the new policy is a PF policy.

\newpage
\section{Omitted Proofs}
\label{app:proofs}
We start by presenting the proofs omitted from the main text.
\subsection{Proof of \Cref{thm:pareto_path}}
\Pareto*
\begin{proof}
Let $w$ be the vector for which policies $\pi_A$ and $\pi_B$ are neighboring PF-policies and let $E(J(\pi_A), J(\pi_B)) = \argmax_{J \in \mathbb{J}} w^T J$ be the edge connecting the expected returns vector of policies $\pi_A$ and $\pi_B$ in the return polytope.
    Then, consider the single-objective MDP defined by the scalarized reward function $r_w(x, a) = w^\top r(x, a)$ and let us denote by $\Pi^\star_w$ the set of optimal policies for the scalar reward $r_w$. By the Bellman Optimality Equation, a deterministic policy $\pi$ belongs to $\Pi^*_w$ if it selects an optimal action at every state $x \in \mathcal{X}$: $\pi(x) \in \arg\max_{a \in \mathcal{A}} Q^*_w(x, a)$.
    
    Since both $\pi_A$ and $\pi_B$ are in $\Pi^*_w$, both $\pi_A(x)$ and $\pi_B(x)$ are optimal actions for all $x$. By the Markov property, any deterministic policy constructed by independently selecting either $\pi_A(x)$ or $\pi_B(x)$ at each state will also select optimal actions, ensuring it is also in $\Pi^*_w$.
    
    Suppose $\pi_A$ and $\pi_B$ differ at exactly $k$ states: $\{x_1, \dots, x_k\}$. We construct the sequence $(\pi_0, \dots, \pi_k)$ by initializing $\pi_0 = \pi_A$, and iteratively defining $\pi_i$ by flipping the action at state $x_i$ from $\pi_A(x_i)$ to $\pi_B(x_i)$. By construction, $\pi_k = \pi_B$, and each adjacent pair differs by exactly one state, that is $$\sum_{x\in \X} \mathds{1}{\bc{\pi_{i-1}(x) \neq \pi_{i}(x)}} = 1.$$ Furthermore, since every $\pi_i$ only uses actions from either $\pi_A$ or $\pi_B$, $\pi_i \in \Pi^*_w$ for all $i$. Thus, the return $J(\pi_i)$ remains on the optimal hyperplane containing $E(J(\pi_A), J(\pi_B))$ for the entire sequence.
\end{proof}

Note the contrast between \Cref{thm:pareto_path} and the result presented in \cite{finding_pareto_front}: Unlike in \cite[Theorem 1]{finding_pareto_front}, we do not assume that neighbors on the Pareto front differ at a single state, since this statement is not true in general (see counter example in \Cref{app:pedagogical_deep_dive}). The breaking of \cite[Theorem 1]{finding_pareto_front} happens in highly structured environments typical of reinforcement learning benchmarks (e.g., grid-worlds with sparse, integer rewards), which can cause two of the following phenomena to occur:
\begin{enumerate}
    \item \textbf{Aliasing:} Multiple distinct deterministic policies collapse into the exact same geometric vertex of the return polytope.
    \item \textbf{Edge-bound Policies:} Deterministic policies may map to the flat edge connecting two vertices, rather than forming new extreme points.
\end{enumerate}

Because of these phenomena, the strict "distance-1" property between adjacent vertices breaks down. Our result in \Cref{thm:pareto_path} holds in general, and is agnostic to the existence of \emph{aliased} or \emph{edge-bound} policies. 

\subsection{Proof of \Cref{thm:single_output_BC}}
Here, we present the proof of the failure case for the single-output BC.
\singleBC*
\begin{proof}
    It is enough to consider a single (absorbing) state, $3$ actions $\A = \bc{a_1, a_2, a_3}$, $2$ objective MDP. We consider the following rewards,
    \[
    r(a_1) = [1,0] ~~~~ r(a_2) = [0,1] ~~~~ r(a_3) = [0.5 + \alpha,0.5+\alpha].
    \]
    Now we consider as experts the following policies
    \[
    \pi_1(a_1) = 1 ~~~~~ \pi_2(a_2) = 1,
    \]
    and we consider a dataset containing half of the samples collected from $\pi_1$ and half collected from $\pi_2$ which are both Pareto front policies. Then, letting the size of the dataset growing to infinity, we get the policy $\piout(a_1) = 0.5$,  $\piout(a_2) = 0.5$. Notice now that
    \[
    J_1(\piout) = \frac{0.5}{1-\gamma} ~~~~~ J_2(\piout) = \frac{0.5}{1-\gamma}.
    \]
    However, consider $\pi^\star$ as the policy that deterministically plays $a_3$ we obtain 
    \[
    J_1(\pi^\star) = \frac{0.5 + \alpha}{1-\gamma} ~~~~~ J_2(\pi^\star) = \frac{0.5 + \alpha}{1-\gamma} ,
    \]
    for some $\alpha$ such that $0<\alpha<0.5$.
    Therefore, the policy $\piout$ is not on the Pareto front and it is arbitrarily far from it as $\alpha$ increases. Then, the theorem statement follows from choosing $\alpha = 1/3$. 
\end{proof}
\subsection{Proof of \Cref{thm:multi_BC}}
Next, we present the proof of our main result.
\MultiBC*
\begin{proof}
First of all, we notice that the output of \Cref{alg:main} can be rewritten as (see \cite{rajaraman2020toward})
\[
\hat{\pi}_\ell(x) = \begin{cases}
\pi_\ell(x) \quad \text{if}~~~x \in \mathcal{D}_{\mathrm{common}} \cup \mathcal{D}_{\ell} \\
1/\abs{A} \quad \text{otherwise},
\end{cases}
\]
The choice of playing actions uniformly at random in unobserved states is arbitrary but that is the choice that allows us to get the best upper bounds as shown in the following.
From this point onwards, without loss of generality let us consider $\ell=1$. Let us introduce the notation $$\norm{J(\pi_1) - J(\hat{\pi}_1)}_{\infty}  = \max_{i \in [K]} \abs{J(\pi_1) - J(\hat{\pi_1})}.$$
Then, we can proceed as follows
\begin{align}
    (1-\gamma)^2\norm{J(\pi_1) - J(\hat{\pi}_1)}_{\infty} &\leq \sum_{x\in \X} \nu^{\pi_1}(x) \sum_{a\in \A} \pi_1(a|x) \mathds{1}\bcc{a \neq \hat{\pi}_1(x)} \nonumber\\&=
    \sum_{x\in \X_{\mathrm{common}}} \nu^{\pi_1}(x) \sum_{a\in \A} \pi_1(a|x) \mathds{1}\bcc{a \neq \hat{\pi}_1(x)}\nonumber\\&\phantom{=} + \sum_{x\in \X\setminus \X_{\mathrm{common}}} \nu^{\pi_1}(x) \sum_{a\in \A} \pi_1(a|x) \mathds{1}\bcc{a \neq \hat{\pi}_1(x)}
    \label{eq:decom}
\end{align}
Now, we control each term individually. Starting from the first one:
\begin{align*}
    \sum_{x\in \X_{\mathrm{common}}} &\nu^{\pi_1}(x) \sum_{a\in \A} \pi_1(a|x) \mathds{1}\bcc{a \neq \hat{\pi}_1(x)} \\&= \sum_{x\in \X_{\mathrm{common}}} \nu^{\pi_1}(x) \sum_{a\in \A} \pi_{\mathrm{data}}(a|x) \mathds{1}\bcc{a \neq \hat{\pi}_1(x)} \\
    & \leq \norm{\frac{\nu^{\pi_1}}{\nu^{\pi_{\mathrm{data}}}}}_{\infty}\sum_{x\in \X_{\mathrm{common}}} \nu^{\pi_{\mathrm{data}}}(x) \sum_{a\in \A} \pi_{\mathrm{data}}(a|x) \mathds{1}\bcc{a \neq \hat{\pi}_1(x)},
\end{align*}
where for the first equality in the block above we have used that $\pi_1(x) = \pi_{\mathrm{data}}(x)$  $\forall~~~x\in\X_{\mathrm{common}}$.
Then, taking expectation over the randomness involved in the collection of $\mathcal{D}_{\mathrm{common}}$ we have that
\begin{align*}
(1-\gamma)^2 \mathbb{E}\bs{J(\pi_1) - J(\hat{\pi}_1)} &\leq \mathbb{E}\bs{\norm{\frac{\nu^{\pi_1}}{\nu^{\pi_{\mathrm{data}}}}}_{\infty}\sum_{x\in \X_{\mathrm{common}}} \nu^{\pi_{\mathrm{data}}}(x) \mathbb{P}\bs{x \notin \mathcal{D}_{\mathrm{common}}}} \\
&= \norm{\frac{\nu^{\pi_1}}{\nu^{\pi_{\mathrm{data}}}}}_{\infty}\sum_{x\in \X_{\mathrm{common}}} \nu^{\pi_{\mathrm{data}}}(x) (1-\nu^{\pi_{\mathrm{data}}}(x))^N
\end{align*}
The last step follows from the fact $x\in \X_{\mathrm{common}}$ implies the prediction of $\hat{\pi}_1$ is correct as soon as $x$ is sampled once, i.e. as soon as it appears in $\mathcal{D}_{\mathrm{common}}$. No matter if $\pi_1$ or $\pi_2$ sampled the action from that state. 
Then, using $\mathbb{P}\bs{x \notin \mathcal{D}_{\mathrm{common}}} = (1 - \nu^{\pi_{\mathrm{data}}}(x) )^N$, we have that
\begin{align*}
    \sum_{x\in \X_{\mathrm{common}}} &\nu^{\pi_1}(x) \sum_{a\in \A} \pi_1(a|x) \mathds{1}\bcc{a \neq \hat{\pi}_1(x)} \leq \norm{\frac{\nu^{\pi_1}}{\nu^{\pi_{\mathrm{data}}}}}_{\infty}\sum_{x\in \X_{\mathrm{common}}} \nu^{\pi_{\mathrm{data}}}(x) (1 - \nu^{\pi_{\mathrm{data}}}(x) )^N \\
    &\leq \norm{\frac{\nu^{\pi_1}}{\nu^{\pi_{\mathrm{data}}}}}_{\infty}\sum_{x\in \X_{\mathrm{common}}} \frac{1}{e N} \\
    &\leq \norm{\frac{\nu^{\pi_1}}{\nu^{\pi_{\mathrm{data}}}}}_{\infty}\frac{\abs{\X_{\mathrm{common}}} }{e N}.
\end{align*}
This concludes the bound on the first term of \eqref{eq:decom}. Now let us look at the second term.
\begin{align*}
    \mathbb{E}\bs{\sum_{x\in \X\setminus \X_{\mathrm{common}}} \nu^{\pi_1}(x) \sum_{a\in \A} \pi_1(a|x) \mathds{1}\bcc{a \neq \hat{\pi}_1(x)}} & \leq \mathbb{E}\bs{\sum_{x\in \X\setminus \X_{\mathrm{common}}} \nu^{\pi_1}(x) \mathbb{P}\bs{x \notin \mathcal{D}_1}} \\
    &=\sum_{x\in \X\setminus \X_{\mathrm{common}}} \nu^{\pi_1}(x) (1-\nu^{\pi_1}(x))^{N_1} \\
    &\leq \frac{\abs{\X\setminus\X_{\mathrm{common}}}}{e N_1}.
\end{align*}
Therefore, in conclusion, we get the bound
\[
(1-\gamma)^2\mathbb{E}\bs{\norm{J(\pi_1) - J(\hat{\pi}_1)}_{\infty}} \leq \norm{\frac{\nu^{\pi_1}}{\nu^{\pi_{\mathrm{data}}}}}_{\infty}\frac{\abs{\X_{\mathrm{common}}} }{e N} + \frac{\abs{\X\setminus\X_{\mathrm{common}}}}{e N_1}.
\]
Because the $L_\infty$ norm bounds the maximum deviation across all dimensions, our BC analysis guarantees that for all $i \in [d]$:

\[ (1-\gamma)^2\mathbb{E}[J_i(\hat{\pi}_1)] + \left\|\frac{\nu^{\pi_1}}{\nu^{\pi_{data}}}\right\|_\infty \frac{|\mathcal{X}_{common}|}{eN} + \frac{|\mathcal{X}\setminus\mathcal{X}_{common}|}{eN_1} \ge (1-\gamma)^2 J_i(\pi_1).
\]

Now, using the fact that $\pi_1$ is on the Pareto front, we know there does not exist any policy $\pi^*$ that dominates it. Therefore, substituting our bound into the definition of the Pareto front, it holds that $\nexists \pi^*$ such that both of the following conditions are true simultaneously:

\[ \forall i \in [d], ~~ J_i(\pi^*) \ge \mathbb{E}[J_i(\hat{\pi}_1)] + \left\|\frac{\nu^{\pi_1}}{\nu^{\pi_{data}}}\right\|_\infty \frac{|\mathcal{X}_{common}|}{e(1-\gamma)^2N} + \frac{|\mathcal{X}\setminus\mathcal{X}_{common}|}{e(1-\gamma)^2 N_1} \]

\[ \exists j \in [d], ~~ J_j(\pi^*) > \mathbb{E}[J_j(\hat{\pi}_1)] + \left\|\frac{\nu^{\pi_1}}{\nu^{\pi_{data}}}\right\|_\infty \frac{|\mathcal{X}_{common}|}{e(1-\gamma)^2N} + \frac{|\mathcal{X}\setminus\mathcal{X}_{common}|}{e(1-\gamma)^2 N_1}. \]
\end{proof}
\subsection{Proof of \Cref{thm:lower}}
Here, we provide the proof of our matching lower bound for MA-BC. Before repeating the statement we need to define the class of algorithms against which we prove the lower bound. In particular, we consider the following class (see \Cref{alg:class}).
\begin{algorithm}
    \caption{Multi-Output IL algorithms class \label{alg:class}}
    \begin{algorithmic}
        \State \textbf{Input:} Expert datasets $\bc{\mathcal{D}_\ell}^L_{\ell=1}$ sampled with PF-policies $\bc{\pi_\ell}^L_{\ell=1}$.
        \State  \textbf{Input:} $\bc{\hat{\pi}_\ell}^L_{\ell=1}$ which are $\varepsilon$-PF.
    \end{algorithmic}
\end{algorithm}

We can clearly notice that our MA-BC (see \Cref{alg:main}) is a member of this class. Now, we can move to the proof of our lower bound.
\L*
\paragraph{Proof Intuition}
We build upon the instance used in \cite{rajaraman2020toward} with $X$ states which ensures that $\Omega(\frac{X}{\varepsilon (1-\gamma)^2})$ expert samples are needed for any offline imitation learning to learn an $\varepsilon$ -optimal policy in that instance. Our lower bound leverage $3$ such instances arranged as in \Cref{fig:lower}.


\tcbset{nivedbox/.style={
    colback=teal!5,
    colframe=teal!70!black,
    fontupper=\small,
    halign=center,
    arc=2pt,
    outer arc=2pt,
    boxrule=0.8pt,
    width=4.5cm 
}}

\begin{figure}
   \centering
   \begin{tikzpicture}[
    >=Stealth,
    scale=0.8,
    thick,
    dots/.style={gray, scale=0.7},
    endstate/.style={circle, draw=red!770!black, fill=red!5, text=red!70!black, scale=0.8}
]
    \node[circle, draw] (x0) at (0,0) {root};
    
    \node[circle, draw, scale=0.8] (x1) at (3, 2.5)  {$x_{div}^1$};
    \node[circle, draw, scale=0.8] (x2) at (3, 1.2)  {$x_{div}^2$};
    \node[dots]                    (xdots) at (3, 0)  {$\vdots$};
    \node[circle, draw, scale=0.8] (xk) at (3, -2.5) {$x_{div}^K$};

    \node[] at (3, 3.5) {$|\mathcal{X}\setminus\mathcal{X}_{\mathrm{common}}| = K$};

    \draw[->] (x0) -- (x1);
    \draw[->] (x0) -- (x2);
    \draw[->] (x0) -- (xk);

    \node[circle, draw, scale=0.8] (root_sh1) at (8, 2.5) {$x^1_{\mathrm{root}}$};
    \node[circle, draw, scale=0.6, right=15pt of root_sh1] (s1_1) {};
    \draw[->] (root_sh1) -- (s1_1);

    \node[circle, draw, scale=0.8] (root_sh2) at (8, -2.5) {$x^2_{\mathrm{root}}$};
    \node[circle, draw, scale=0.6, right=15pt of root_sh2] (s2_1) {};
    \draw[->] (root_sh2) -- (s2_1);

    \draw[->] (x1) -- (root_sh1) node[midway, above, sloped] {$a_1$ w.p. $1$};
    \draw[->] (xk) -- (root_sh1) node[pos=0.3, above, sloped] {$a_2$ w.p. $p$};
    \draw[->] (xk) -- (root_sh2) node[midway, below, sloped] {$a_2$ w.p. $1-p$};

    \node[anchor=west] (box1) at (9.5, 2.5) {
        \begin{tcolorbox}[nivedbox]
            Hard MDP from \cite{rajaraman2020toward} with $n = \frac{|\mathcal{X}_{\mathrm{common}}|}{2}$
        \end{tcolorbox}
    };

    \node[anchor=west] (box2) at (9.5, -2.5) {
        \begin{tcolorbox}[nivedbox]
            Hard MDP from \cite{rajaraman2020toward} with $n = \frac{|\mathcal{X}_{\mathrm{common}}|}{2}$
        \end{tcolorbox}
    };

    \node[endstate] (xend) at (11.75, -5.5) {$x_{\mathrm{end}}$};
    \node[right=5pt of xend, text=red!80!black, font=\small] {$R_i = 0, \forall i$};

    \draw[->, red!70, dashed](x1.west) -- ++(-1.0,0) |- (xend.west)
        node[pos=0.8, above, scale=0.8] {$a_{\mathrm{end}}$};
    \draw[->, red!70, dashed] (x2.west) -- ++(-0.8,0) |- (xend.west);
    \draw[->, red!70, dashed](xk.west) -- ++(-0.6,0) |- (xend.west);

\end{tikzpicture}
    \caption{Lower bound construction. For the hard MDP see \cref{fig:nived_hard_instance}. From each state $x^k_{\mathrm{div}}$ there exists also an action $a_3$ whose effect is explained in the text. Moreover, with probability $1/2$, $a_1$ and $a_2$ leave the agent in the same state.}
    \label{fig:lower}
\end{figure}
\begin{figure}[h]
    \centering
    \begin{tikzpicture}[
        >=Stealth,
        node distance=2.5cm,
        thick,
        state/.style={circle, draw, minimum size=1cm},
        root/.style={circle, draw, fill=teal!10, minimum size=1.2cm, font=\bfseries},
        sink/.style={circle, draw=red!80, fill=red!10, text=red!70!black, minimum size=1cm},
        label_node/.style={font=\small, teal!80!black}
    ]

        \node[root] (root) at (0,0) {$x_{root}$};

        \node[state] (s1) at (5, 3) {$x_1$};
        \node[state] (s2) at (5, 1) {$x_2$};
        \node[font=\LARGE] (sdots) at (5, -0.5) {$\vdots$};
        \node[state] (sn) at (5, -2.5) {$x_n$};

        \node[sink] (sink) at (10, 0) {$x_{\mathrm{end}}$};

        \draw[->] (root) -- (s1) 
            node[midway, above, sloped, label_node] {$1 - \frac{n-1}{N+1}$};

        \draw[->] (root) -- (s2) 
            node[pos=0.6, above, sloped, font=\scriptsize, teal!80!black] {$\frac{1}{N+1}$};

        \draw[->] (root) -- (sn) 
            node[midway, below, sloped, label_node] {$\frac{1}{N+1}$};

        \foreach \i in {s1, s2, sn} {
            \draw[->, teal] (\i) edge [loop above] 
                node[above, font=\scriptsize] {$a^*: R=1$} (\i);
            
            \draw[->, red!70, dashed] (\i) -- (sink) 
                node[pos=0.3, above, font=\scriptsize, sloped] {$a \neq a^*: R=0$};
        }

        \draw[->, red!70] (sink) edge [loop right] node[right, font=\scriptsize] {$R=0$} (sink);
    \end{tikzpicture}
    \caption{Hard MDP from \cite{rajaraman2020toward} with $n$ states. The learner must identify the expert action $a^*$ in states rarely visited ($s_2 \dots s_n$). Otherwise, it falls in the red absorbing state $x_\mathrm{end}$.}
    \label{fig:nived_hard_instance}
\end{figure}

We consider a first instance in which the two experts take different actions, and their convex combination is not on the Pareto front. In other words, this first instance contains diverging states in which the two experts can not help each other. Concretely, we can use the construction used in \Cref{thm:single_output_BC} in each of the $K$ diverging states. In those states, we have that the reward function is defined as follows
\begin{align*}
&r(x^k_{\mathrm{div}},a_1) = [1, 0]^T \quad 
r(x^k_{\mathrm{div}},a_2) = [0, 1]^T \quad 
\\ &r(x^k_{\mathrm{div}},a_3) = [0.8, 0.8]^T,\quad
r(x^k_{\mathrm{div}},a_{\mathrm{end}}) = [0, 0]^T,\quad \forall k \in [K].
\end{align*}
Moreover, for what concerns the transition dynamics we have that
\begin{align*}
&P(x^1_{\mathrm{root}}|x^k_{\mathrm{div}},a_1) =1 \quad P(x^1_{\mathrm{root}}|x^k_{\mathrm{div}},a_2) =p \quad \\ &P(x^2_{\mathrm{root}}|x^k_{\mathrm{div}},a_2) =1-p
\quad P(x_{\mathrm{end}}|x^k_{\mathrm{div}},a_{\mathrm{end}}) = 1
\end{align*}
while playing $a_3$ from any $x^k_{\mathrm{div}}$, we have that with $1/2$ the effect is the same as $a_1$ and with probability $1/2$ the effect is the same as $a_2$. Adapting the lower bound in \cite{rajaraman2020toward} we can show that we need at least $K/\varepsilon$ visits with each individual expert in order to learn the corresponding $\varepsilon$ approximate Pareto Front policy. Therefore, we get 
\[
N_1 \geq \Omega\br{\frac{K}{\varepsilon (1-\gamma)}} ~~~~ N_2 \geq \Omega\br{\frac{K}{\varepsilon(1-\gamma)}}.
\]

After the instance of diverging states, we consider two separate instances of states in which both experts act the same. This can be achieved considering states in which there is a unique Pareto front policy. For example, when there exists an action $a^\star$ that dominates all the others at all common states. That is, $r(x,a^\star) > r(x,a)$ for all $x\in \X_{\mathrm{common}}$ and for all $a\neq a^\star$. Moreover, let us remark that the inequality holds elementwise.
which clearly implies that the only Pareto front policy in these states is the one which deterministically chooses $a^\star$.
In those states, we can observe data from either two of the experts because they behave identically. Therefore, in order to learn an $\varepsilon$-optimal policy from the initial states of the shared region we need 
\[
\Omega\bc{\frac{\abs{\X_{\mathrm{common}}}}{2 \varepsilon (1-\gamma)^2}}
\]
samples from no matter which of the two experts.
For the top shared region we need 
\[
N_1 + p N_2 \geq \frac{\abs{\X_{\mathrm{common}}}}{2 \varepsilon (1-\gamma)^2},
\]
while for the bottom one we have 
\[
(1-p) N_2 \geq \frac{\abs{\X_{\mathrm{common}}}}{2 \varepsilon (1-\gamma)^2}.
\]
A simple rearrangement gives 
\[
N  \geq \frac{N}{N_1 + p N_2}\frac{\abs{\X_{\mathrm{common}}}}{2 \varepsilon(1-\gamma)^2}, ~~~~~N  \geq \frac{N}{(1-p) N_2}\frac{\abs{\X_{\mathrm{common}}}}{2 \varepsilon (1-\gamma)^2}.
\]
Let us formalize this in the following. 
\begin{proof}
Let us first consider the policy $\tilde{\pi}_1$ which mimics the expert perfectly in $\X_{\mathrm{common}}$ but it equals $\hat{\pi}_1$ in $\X\setminus\X_{\mathrm{common}}$ . We have that choosing as $\pi^\star$ that plays as follows
\begin{equation*}
   \pi^\star(x) = \begin{cases}
a_1 \quad \text{if} \quad x \in \mathcal{D}_1 \cap (\X\setminus \X_{\mathrm{common}}) \\
a^\star \quad \text{if} \quad x \in \X_{\mathrm{common}} \\
a_3 \quad \text{otherwise}
   \end{cases}
\end{equation*}
Then, we have that for all $i \in [d]$, we have that
\begin{align}
J_i(\pi^\star)& - J_i(\hat{\pi}_1) \geq J_i(\pi_1) - J_i(\tilde{\pi}_1)  = (1-\gamma)^{-1}\sum_{x \in \X} \nu^{\pi^\star}(x) \innerprod{Q_i^{\tilde{\pi}_1}(x,\cdot)}{\pi^\star(\cdot|x) - \tilde{\pi}_1(\cdot|x) } \nonumber\\ &=(1-\gamma)^{-1}\sum_{x \in \X\setminus\X_{\mathrm{common}}} \nu^{\pi^\star}(x) \innerprod{Q_i^{\tilde{\pi}_1}(x,\cdot)}{\pi^\star(\cdot|x) - \hat{\pi}_1(\cdot|x) }
\\&\phantom{=}+ (1-\gamma)^{-1}\sum_{x\in \X_{\mathrm{common}}} \nu^{\pi^\star}(x) \innerprod{Q_i^{\tilde{\pi}_1}(x,\cdot)}{\underbrace{\pi^\star(\cdot|x) - \tilde{\pi}_1(\cdot|x)}_{=0} } \nonumber\\
&= \sum_{x \in \X\setminus\X_{\mathrm{common}}} \initial(x) \innerprod{Q_i^{\tilde{\pi}_1}(x,\cdot)}{\pi^\star(\cdot|x) - \hat{\pi}_1(\cdot|x) }, \nonumber
\end{align} 
where, in the last inequality we have used that in the states, visited in $\mathcal{D}_1$ we have that $\hat{\pi}_1 = \pi_1$. Moreover, by design of $\pi^\star$ we also have that $\pi^\star = \pi_1$ in these states. Therefore, there is no suboptimality in those states. Moreover, we used that since we can visit states in $\X\setminus\X_{\mathrm{common}}$ only at the first step, we have that for all $x \in \X\setminus\X_{\mathrm{common}} $ $$\nu^{\pi^\star}(x) = (1-\gamma) \initial(x). $$ 
Then, taking expectation over the dataset generation we obtain 
\begin{align}
\mathbb{E}\bs{J_i(\pi^\star) - J_i(\hat{\pi}_1) } &=\sum_{x \in \X\setminus\X_{\mathrm{common}}} \initial(x) \mathbb{E}\bs{\innerprod{Q_i^{\tilde{\pi}_1}(x,\cdot)}{\pi^\star(\cdot|x) - \hat{\pi}_1(\cdot|x) }} \label{eq:1}
\end{align}

 It remains to handle the the unvisited states in which we are going to show that the worst-case expected suboptimality is at least $\Omega(\frac{1}{1-\gamma})$.
    To prove this, let us notice that $\hat{\pi}_1$ is arbitrary in the unvisited states. The values of $Q^{\tilde{\pi}_1}(x,\cdot)$ for any $x \in \X\setminus\X_{\mathrm{common}}$ are as follows:
    \[
    Q^{\tilde{\pi}_1}(x,a_1) = [\frac{1}{1-\gamma}, \frac{\gamma}{1-\gamma}]^T,
    \]
     \[
    Q^{\tilde{\pi}_1}(x,a_2) = [ \frac{\gamma}{1-\gamma}, \frac{1}{1-\gamma}]^T,
    \]
    \[
    Q^{\tilde{\pi}_1}(x,a_3) = [ 0.8 + \frac{\gamma}{1-\gamma}, 0.8 + \frac{\gamma}{1-\gamma}]^T,
    \]
    and 
    \[
    Q^{\tilde{\pi}_1}(x,a_{\mathrm{end}}) = [ 0, 0]^T,
    \]

At this point, consider a family of MDPs $\mathcal{M} = \bc{M_1, M_2, M_3, M_4}$ in which the roles of the actions $(a_1, a_2, a_3, a_{\mathrm{end}})$ are swapped in a cycling manner. That is, in $M_1$ we have that the action set is $(a_1, a_2, a_3, a_{\mathrm{end}})$, in $M_2$ it is  $(a_{\mathrm{end}}, a_1, a_2, a_3)$, in $M_3$, it is $(a_3, a_{\mathrm{end}}, a_1, a_2)$ and in $M_4$ it is $(a_2, a_3, a_{\mathrm{end}}, a_1)$. No matter how $\hat{\pi}_1$ is chosen we have that
\[
\frac{1}{4}\sum_{M \in \bc{M_1,M_2,M_3,M_4}} \sum_{a\in \A} Q^{\tilde{\pi}_1}_{M}(x,a) \hat{\pi}_1(x,a)=  \sum_{a\in \A} \frac{1}{4}\sum_{a'\in \A} Q^{\tilde{\pi}_1}_{M}(x,a') \hat{\pi}_1(x,a) = \frac{1}{4}\sum_{a'\in \A} Q^{\tilde{\pi}_1}_{M}(x,a').
\]
Therefore, in the worst case over the environment class $ \bc{M_1, M_2, M_3, M_4}$, the suboptimality  for each $x\notin\mathcal{D}_1$ is lower bounded as
\[
\max_{M \in \bc{M_1,M_2,M_3,M_4}}\mathbb{E}\bs{\sum_{a\in\A}\hat{\pi}_1(a|x)Q^{\tilde{\pi}_1}_{M}(x,a)} \geq \frac{1 + 2\gamma}{4(1-\gamma)} + \frac{0.8}{4} = \frac{1 + 2\gamma}{4(1-\gamma)} + 0.2  ,
\]
where the inequality holds elementwise.
Therefore, the worst case suboptimality is for sure at least $\frac{1}{8(1-\gamma)}$. Indeed, recalling that in each $x \notin \mathcal{D}_1 \cap (\X\setminus\X_{\mathrm{common}})$, it holds that $\pi^\star(x) = a_3$, we have that for each $i \in [d]$, letting $Q^{\tilde{\pi}_1}_{M,i}(x,\cdot)$ denoted the $i^{\mathrm{th}}$ entry of the vector $Q^{\tilde{\pi}_1}_{M}(x,\cdot)$, we have that
\begin{align*}
\max_{M \in \bc{M_1,M_2,M_3,M_4}}& \mathbb{E}\bs{\innerprod{\pi^\star(\cdot|x) -  \hat{\pi}_1(\cdot|x)}{Q^{\tilde{\pi}_1}_{M,i}(x,\cdot)}} \\&= \max_{M \in \bc{M_1,M_2,M_3,M_4}} \mathbb{E}\bs{Q^{\tilde{\pi}_1}_{M,i}(x,a_3) - \innerprod{\hat{\pi}_1(\cdot|x)}{Q^{\tilde{\pi}_1}_{M,i}(x,\cdot)}} \\
&= 0.8 + \frac{\gamma}{1-\gamma}- \frac{1 + 2\gamma}{4(1-\gamma)} - 0.2\\
&= 0.6 + \frac{2\gamma-1}{4(1-\gamma)} \\
&= 0.6 + \frac{\gamma-1}{4(1-\gamma)} + \frac{\gamma}{4(1-\gamma)} \\
&= \frac{7}{20} + \frac{\gamma}{4(1-\gamma)} \geq \frac{1}{8(1-\gamma)},
\end{align*}
where in the last step we used that $\gamma \geq \frac{1}{2}$.
Then, switching the attention back to \eqref{eq:1}, taking the maximum over both sides and defining as $J_{M,i}(\pi)$ as the $i^{\mathrm{th}}$ entry of the performance vector for policy $\pi$ in the environment $M$, we have that
\begin{align*}
    \max_{M \in \bc{M_1,M_2,M_3,M_4}}\mathbb{E}\bs{J_{M,i}(\pi^\star) - J_{M,i}(\hat{\pi}_1) } &\geq \frac{1}{8(1-\gamma)} \sum_{x\in\X\setminus\X_{\mathrm{common}}}\initial(x) \mathbb{P}\bs{\pi_1(x)\neq\hat{\pi}_1(x)} \\
    &= \frac{1}{8(1-\gamma)} \sum_{x\in\X\setminus\X_{\mathrm{common}}}\initial(x) \mathbb{P}\bs{x \notin \mathcal{D}_1}
    \\
    &= \frac{1}{8(1-\gamma)} \sum_{x\in\X\setminus\X_{\mathrm{common}}}\initial(x) (1-\initial(x))^{N_1} 
\end{align*}
Then, denoting  $\bc{x^k_{\mathrm{div}}}^K_{k=1}$ as the possible initial states (see \Cref{fig:lower}) and choosing the initial distribution as
\begin{itemize}
\item $\initial(x^1_{\mathrm{div}}) = 1- \frac{K-1}{N_1+1}$
\item $\initial(x^i_{\mathrm{div}}) = \frac{1}{N_1+1}$ for all $k\neq 1$.
\end{itemize}
Then, we have that
\begin{align*}
    \max_{M \in \bc{M_1,M_2,M_3,M_4}} \mathbb{E}\bs{J_{M,i}(\pi^\star) - J_{M,i}(\hat{\pi}_1) } &\geq \frac{1}{8(1-\gamma)} \sum^K_{k=2} \frac{1}{N_1 + 1} \br{1 - \frac{1}{N_1 + 1}}^{N_1} \\&\geq \frac{1}{8(1-\gamma)} \sum^K_{k=2} \frac{1}{e(N_1 + 1)} \\
    &\geq \Omega\br{\frac{K}{(1-\gamma) N_1}}.
\end{align*}
Notice also that by symmetry the same bound holds for $N_2$ if we choose $N_1=N_2$.

 Now, for showing a lower bound of order $\norm{\frac{\nu^{\pi_1}}{\nu^{\pi_{\mathrm{data}}}}}_{\infty}\frac{\abs{\X_{\mathrm{common}}} }{e N}$ let us consider a policy $\bar{\pi}_1$ such that $\pi_1(\cdot|x) = \bar{\pi}_1(\cdot|x)$ for all $x\in \X\setminus\X_{\mathrm{common}}$ while  $\hat{\pi}_1(\cdot|x) = \bar{\pi}_1(\cdot|x)$ in $\X_{\mathrm{common}}$.
 Under this setting, we have that for all $i \in [d]$ (since all rewards entries are identical in the common region). 
 \begin{align*}
     \max_{M \in \bc{M_1,M_2,M_3,M_4}}\mathbb{E}\bs{J_{M,i}(\pi_1) -  J_{M,i}(\hat{\pi}_1)} &\geq \max_{M \in \bc{M_1,M_2,M_3,M_4}} \mathbb{E}\bs{J_i(\pi_1) -  J_i(\bar{\pi}_1)} \\
&=\gamma\max_{M \in \bc{M_1,M_2,M_3,M_4}} \mathbb{E}\bs{V_M^{\pi_1}(x^1_{\mathrm{root}}) - V_M^{\hat{\pi}_1}(x^1_{\mathrm{root}})}  \\
     &\geq \Omega\br{\frac{\abs{\X_{\mathrm{common}}}}{(1-\gamma)^2 (N_1 + p N_2)}} \\
     &= \Omega\br{\frac{N}{N_1 + p N_2}\frac{\abs{\X_{\mathrm{common}}}}{N (1-\gamma)^2}},
 \end{align*}
 where we lower bounded $\gamma$ with $1/2$ and absorbed this factor in the $\Omega$ notation. Moreover, the second inequality invokes \cite[Theorem 1.2]{rajaraman2020toward}.
 Now it remains to show that the concentrability in this environment is of order $\frac{N}{N_1 + p N_2}$. 
 For any state in the common states region starting with $x^1_{\mathrm{root}}$ the state occupancy measure of $\pi^1$ is given by $\nu^{\pi_1}(x) = \nu^{\pi_1}(x^1_{\mathrm{root}})P^{\pi_1}(x | x^1_{\mathrm{root}}) = P^{\pi_1}(x | x^1_{\mathrm{root}})$.
 For the dataset distribution, we have that
 \begin{align*}
     \nu^{\pidata}(x) &= \frac{N_1 \nu^{\pi_1}(x) + N_2 \nu^{\pi_2}(x)}{N} \\&= \frac{N_1 \nu^{\pi_1}(x^1_{\mathrm{root}})P^{\pi_1}(x | x^1_{\mathrm{root}}) + N_2 \nu^{\pi_2}(x^1_{\mathrm{root}})P^{\pi_2}(x | x^1_{\mathrm{root}})}{N} \\
     &=\frac{N_1 P^{\pi_1}(x | x^1_{\mathrm{root}}) + N_2 p P^{\pi_1}(x | x^1_{\mathrm{root}})}{N}
 \end{align*}
 where we have used the values of $\nu^{\pi_1}(x^1_{\mathrm{root}})$ and $\nu^{\pi_2}(x^1_{\mathrm{root}})$ and the fact that in the shared states space we have that $P^{\pi_1}(x | x^1_{\mathrm{root}}) = P^{\pi_2}(x | x^1_{\mathrm{root}})$.
 Therefore, defining $\X^1_{\mathrm{common}}$ as the portion of common states reachable from $x^1_{\mathrm{root}}$, we  conclude that for any $x \in \X^1_{\mathrm{common}}$
 \[
 \frac{\nu^{\pi_1}(x)}{\nu^{\pidata}(x)} = \frac{ P^{\pi_1}(x | x^1_{\mathrm{root}}) N }{N_1 P^{\pi_1}(x | x^1_{\mathrm{root}}) + N_2 p P^{\pi_1}(x | x^1_{\mathrm{root}})} = \frac{N}{N_1 + p N_2}.
 \]
 The proof is thus concluded.

 For the expert $2$, instead, we have that for all $i \in [d]$,
 \begin{align*}
     \max_{M \in \bc{M_1, M_2, M_3, M_4}} \mathbb{E}\bs{J_{M,i}(\pi_2) - J_{M,i}(\hat{\pi}_2)} &\geq \gamma p \max_{M \in \bc{M_1,M_2,M_3,M_4}} \mathbb{E}\bs{V_M^{\pi_2}(x^1_{\mathrm{root}}) - V_M^{\hat{\pi}_2}(x^1_{\mathrm{root}})} \\&+ \gamma (1 - p) \max_{M \in \bc{M_1,M_2,M_3,M_4}} \mathbb{E}\bs{V_M^{\pi_2}(x^2_{\mathrm{root}}) - V_M^{\hat{\pi}_2}(x^2_{\mathrm{root}})} \\
     &\geq \Omega\br{\frac{\gamma p \abs{X_{\mathrm{common}}}}{(1-\gamma)^2 (N_1 + p N_2)} + \frac{\gamma (1-p) \abs{X_{\mathrm{common}}}}{(1-\gamma)^2 N_2(1-p)} }  \\
     &\geq \Omega\br{\frac{\abs{\X_{\mathrm{common}}}}{(1-\gamma)^2 N_2} } 
 \end{align*}
 For any $x \in \X^2_{\mathrm{common}}$, we have that
 \[
 \frac{\nu^{\pi_2}(x)}{\nu^{\pidata}(x)} = \frac{N (1-p) P^{\pi^2}(x|x^2_{\mathrm{root}}) }{N_2 (1-p) P^{\pi^2}(x|x^2_{\mathrm{root}})} = \frac{N}{N_2}.
 \]
 Therefore, we have that also for expert $2$ it holds that
 \[
 \max_{M \in \bc{M_1, M_2, M_3, M_4}} \mathbb{E}\bs{J_{M,i}(\pi_2) - J_{M,i}(\hat{\pi}_2)} \geq \Omega\br{\norm{\frac{\nu^{\pi_2}}{\nu^{\pidata}}}_{\infty} \frac{\abs{\X_{\mathrm{common}}}}{(1-\gamma)^2 N} }.
 \]
\end{proof}

\newpage

\section{MA-BC with Stochastic Experts}
\label{app:stochastic_experts}

In the main text, we assumed deterministic experts for ease of exposition. Here, we formally discuss how MA-BC handles stochastic experts.

Because the return space $\mathbb{J}$ of an MOMDP forms a convex polytope \cite{momdp_convexity}, any stochastic expert residing on the Pareto front can be viewed as a convex mixture of deterministic vertex policies. Crucially, a single stochastic expert on the Pareto front maximizes a specific scalarized reward function defined by some weight vector $w \in \mathbb{R}^d_+$. If this expert selects among multiple actions at a given state, it implies that all such actions are optimal with respect to $w$.

When the expert identities are known (the split data setting of Algorithm \ref{alg:main}), intra-expert stochasticity poses no issue. Because all actions played by a single stochastic expert at a given state are optimal under the same scalarization $w$, the Markov property ensures that any policy independently selecting among these actions at each state remains optimal for $w$ (as leveraged in the proof of Theorem \ref{thm:pareto_path}). Consequently, mixing or averaging these actions at the state level does not cause the policy's expected return to fall off of the Pareto front. Standard MA-BC (\Cref{alg:main}) can safely pool these shared stochastic actions alongside other experts' non-conflicting data, perfectly recovering a policy on the same optimal face of the Pareto front.

We note that the stochastic expert setting can also be analyzed directly. Using standard behavioral cloning analyses based on the logarithmic loss \cite{foster2024behavior}, one could establish performance bounds for MA-BC (similar to \Cref{thm:multi_BC}) that accommodate stochastic expert policies by bounding the statistical discrepancy (e.g., via Kullback-Leibler divergence or Total Variation distance) between the learned and expert action distributions. However, we opt to maintain the deterministic expert assumption in our main theoretical presentation for simplicity.

\newpage

\section{MA-BC with Unsplit Expert Data}
\label{app:unsplit_mabc}

In the standard setting of Multi-Output Augmented Behavioral Cloning (MA-BC) presented in Section \ref{sec:algo}, we assume the dataset $\mathcal{D}$ is already partitioned by expert identity (i.e., we know which trajectories belong to which expert). We now generalize our analysis to the fully unsplit data setting. We consider a mixed dataset generated by $L$ latent deterministic expert policies that differ at a set of distinct states, denoted by the divergence set $\mathcal{X}_{div} = \mathcal{X} \setminus \mathcal{X}_{common}$. We assume the expert identities are hidden and $\mathcal{X}_{div}$ is initially unknown. Because MA-BC relies fundamentally on state-level data partitioning, we can approach this problem by directly identifying where the experts disagree. We identify the divergent states $\mathcal{X}_{div}$ by scanning the mixed dataset for states where multiple distinct actions are observed. Once $\mathcal{X}_{div}$ is identified, we can safely isolate the common data $\mathcal{D}_{common}$ where all experts agree. 

However, within the diverging states, we face the \textit{Binding Problem}: we know \textit{where} the experts disagree, but we do not know \textit{which} expert takes \textit{which} set of actions. To resolve this, we must construct a consistent set of divergent actions for each expert. We achieve this by leveraging the transitivity of trajectories. If a single trajectory visits $x_1$ and takes action $a_1$, and later visits $x_2$ taking action $a_2$, we can bind the state-action pairs $(x_1, a_1)$ and $(x_2, a_2)$ to the same latent expert. 

We formalize this using a \textbf{Consistency Graph}, visualized in Figure \ref{fig:action_consistency}. Each unique divergent state-action pair $(x, a)$ observed in $\mathcal{X}_{div}$ is a node. An edge connects $(x_i, a_i)$ and $(x_j, a_j)$ if there exists at least one trajectory $\tau \in \mathcal{D}$ that visits both states and executes those specific actions. The connected components of this graph naturally cluster the divergent state-actions into consistent, expert-specific sets $\mathcal{D}_{div, \ell}$. This state-level partitioning algorithm is formalized in Algorithm \ref{alg:unsplit_mabc}.

\paragraph{Assumption (Pairwise Distinct Actions)} To guarantee that the Consistency Graph perfectly isolates $L$ latent experts when $L \ge 3$, we assume that all experts take mutually exclusive actions at the divergent states. Formally, for any divergent state $x \in \mathcal{X}_{div}$ and any two distinct experts $\ell, \ell' \in [L]$, it holds that $\pi_\ell(x) \neq \pi_{\ell'}(x)$. This structural assumption prevents a scenario where two distinct experts share a single node $(x, a_{shared})$ in the graph, which would otherwise merge their components and reintroduce \textbf{\textcolor{red!70!black}{Failure II}}. For the two-expert case ($L=2$), this assumption is satisfied by definition, as a state is only flagged as divergent if the two experts disagree. We highlight that this assumption can be removed at the cost of building a consistency graph where each node is the sequence of visited divergent states in a single trajectory. Two nodes would then be connected if they share at least one divergent state and they do not contradict at any other divergent state. However, instead of graph partitioning, one would need to extract Maximal Cliques  \cite{cliques} in the graph, which is an NP-hard problem. Thus, we restrict our setting to pair-wise distinct actions, and leave the extension to arbitrary experts for future work.

\begin{figure}[ht]
    \centering
    \begin{tikzpicture}[
        scale=1.0, transform shape,
        sa_node/.style={rectangle, rounded corners, draw=black!80, thick, minimum width=1.5cm, minimum height=0.8cm, font=\bfseries, inner sep=4pt, fill=white},
        c1/.style={sa_node, fill=cyan!10, draw=cyan!80!black},
        c2/.style={sa_node, fill=red!10, draw=red!80!black},
        link/.style={-, thick, black!60},
        groupbox/.style={rounded corners, thick, inner sep=0.4cm, fill opacity=0.3}
    ]

        \node[c1] (n1) at (0, 1.5) {$(x_1, a_1)$};
        \node[c1] (n2) at (1.5, 3) {$(x_2, a_2)$};
        \node[c1] (n3) at (1.5, 0) {$(x_3, a_3)$};
        \node[c1] (n4) at (3.0, 1.5) {$(x_4, a_4)$};

        \draw[link] (n1) -- (n2) node[midway, above left, font=\scriptsize, text=black!50] {$\tau_A$};
        \draw[link] (n1) -- (n3);
        \draw[link] (n2) -- (n4);
        \draw[link] (n3) -- (n4) node[midway, below right, font=\scriptsize, text=black!50] {$\tau_B$};
        \draw[link] (n2) -- (n3);

        \node[c2] (n5) at (8, 1.5) {$(x_1, a'_1)$};
        \node[c2] (n6) at (9.5, 3) {$(x_2, a'_2)$};
        \node[c2] (n7) at (9.5, 0) {$(x_3, a'_3)$};
        \node[c2] (n8) at (11, 1.5) {$(x_4, a'_4)$};

        \draw[link] (n5) -- (n6);
        \draw[link] (n5) -- (n7) node[midway, below left, font=\scriptsize, text=black!50] {$\tau_C$};
        \draw[link] (n6) -- (n8);
        \draw[link] (n7) -- (n8);
        \draw[link] (n6) -- (n7);

        \draw[dashed, ultra thick, black!60] (5.5, -0.5) -- (5.5, 3.5);
        \node[align=center, font=\small, fill=white, inner sep=3pt] at (5.5, 1.5) {No Linking Trajectories\\(Mutually Exclusive)};

        \begin{scope}[on background layer]
            \node[text=cyan!80!black, font=\bfseries] at (1.5, 3.8) {Cluster 1 (Expert 1 Actions)};
            \node[text=red!80!black, font=\bfseries] at (9.5, 3.8) {Cluster 2 (Expert 2 Actions)};
        \end{scope}

    \end{tikzpicture}
    \caption{\textbf{Action Consistency Graph on $\mathcal{X}_{div}$.} The nodes represent the conflicting state-action pairs observed at the divergent states. Edges connect actions that were observed co-occurring within the same trajectory. The graph partitions the conflicting actions into consistent, expert-specific datasets ($\mathcal{D}_{div, \ell}$), successfully binding the latent experts' actions on the divergent set of states.}
    \label{fig:action_consistency}
\end{figure}

\begin{algorithm}[H]
\caption{State-Level MA-BC for Unsplit Expert Data}
\label{alg:unsplit_mabc}
\begin{algorithmic}[1]
\Require Mixed Dataset $\mathcal{D} = \{ \tau_i \}_{i=1}^N$
\State \textbf{1. Identify Diverging \& Common States:} 
\State $\mathcal{X}_{div} \gets \{ x \in \mathcal{X} \mid \exists (x,a), (x,a') \in \mathcal{D} \text{ such that } a \neq a' \}$
\State $\mathcal{D}_{common} \gets \{ (x, a) \in \mathcal{D} \mid x \notin \mathcal{X}_{div} \}$
\State $\mathcal{D}_{div} \gets \{ (x, a) \in \mathcal{D} \mid x \in \mathcal{X}_{div} \}$

\State \textbf{2. Build Action Consistency Graph:}
\State Construct graph $G$ where nodes are the unique state-action pairs $v = (x, a) \in \mathcal{D}_{div}$.
\State Add an edge $(v_i, v_j)$ if $\exists \tau \in \mathcal{D}$ that visits both $v_i$ and $v_j$.

\State \textbf{3. Partition Divergent Actions:}
\State Find the connected components $C_1, \dots, C_L$ of $G$.
\State Let $\mathcal{D}_{div, \ell} \gets \{ (x, a) \in \mathcal{D}_{div} \mid (x,a) \in C_\ell \}$

\State \textbf{4. Train Experts (MA-BC):}
\For{$\ell = 1, \dots, L$}
    \State $\hat{\pi}_\ell = \argmax_{\pi \in \Pi} \sum_{X,A \in \mathcal{D}_{common} \cup \mathcal{D}_{div, \ell}} \log \pi(A|X)$
\EndFor
\State \textbf{Output:} $\{\hat{\pi}_\ell\}_{\ell=1}^L$
\end{algorithmic}
\end{algorithm}

\subsection{Upper Bound for Unsplit MA-BC}

We now extend Theorem \ref{thm:multi_BC} to the unsplit setting. We assume the dataset $\mathcal{D}$ consists of $N$ independent state-action pairs sampled i.i.d. from the data occupancy measure $\nu^{\pi_{data}}$, and each latent expert's subset $\mathcal{D}_\ell$ consists of $N_\ell$ independent state-action pairs. Furthermore, we assume the auxiliary mechanism connecting divergent states observes links as independent events during the data collection process.

For the unsplit algorithm to successfully match the statistical rate of standard MA-BC, the consistency graph must perfectly bind the divergent state-actions to their respective experts. This identifiability relies on observing "bridge" samples that link the diverging states. Let the diverging states be $\mathcal{X}_{div} = \{x_1, \dots, x_K\}$. It suffices to observe the set of pairwise links $\{(x_k, x_{k+1})\}_{k=1}^{K-1}$, where a link implies an independent sampling event captures an association between $x_k$ and $x_{k+1}$.

We seek to explicitly bound the probability that the Consistency Graph fails to fully connect the divergent actions of a target expert. Let the set of diverging states be ordered as an arbitrary sequence $\mathcal{X}_{div} = \{x_1, x_2, \dots, x_K\}$. To uniquely identify an expert's actions across all $K$ states, it is sufficient to observe a "chain" of connected components. Specifically, we require the presence of $K-1$ pairwise links: $(x_1, x_2), (x_2, x_3), \dots, (x_{K-1}, x_K)$. A link $(x_i, x_{i+1})$ is considered \textit{present} in the dataset $\mathcal{D}$ if there exists at least one independent sampling event $e \in \mathcal{D}$ that bridges both states. Let $E_i$ be the event that a \textit{single} sampling event from the mixed distribution bridges both $x_i$ and $x_{i+1}$. We define $p_{link}$ as the worst-case probability of any such link occurring in a single independent sampling event:

$$ \mathbb{P}(E_i) \geq \min_{j \in \{1, \dots, K-1\}} \mathbb{P}_{e \sim \nu^{\pi_{data}}}(x_j \text{ and } x_{j+1} \text{ linked in } e) := p_{link} $$

Let $F_i$ be the event that the link $(x_i, x_{i+1})$ is completely missing from the entire dataset of $N$ independent samples. Because the $N$ samples are drawn independently and identically, the probability that all $N$ samples fail to contain this link is:

$$ \begin{aligned}
\mathbb{P}(F_i) &= \left( 1 - \mathbb{P}(E_i) \right)^N \\
&\leq (1 - p_{link})^N.
\end{aligned} $$

The global identification process fails if \textbf{at least one} of the $K-1$ necessary links is missing from the dataset. Let $F_{global}$ be this global failure event: $F_{global} = \bigcup_{i=1}^{K-1} F_i$. By the Union Bound, we bound the global failure probability as follows:

$$ \begin{aligned}
\mathbb{P}\left( \bigcup_{i=1}^{K-1} F_i \right) &\leq \sum_{i=1}^{K-1} \mathbb{P}(F_i) \\
&\leq \sum_{i=1}^{K-1} (1 - p_{link})^N \\
&= (K-1)(1 - p_{link})^N.
\end{aligned} $$

Thus, the probability that the consistency graph is "broken" (failing to fully connect the expert's divergent actions) is bounded from above by:

$$ \mathbb{P}[\text{Consistency Graph Broken}] \leq (K-1)(1 - p_{link})^N $$

We will leverage this failure probability in the following theorem to demonstrate that Unsplit MA-BC achieves the exact same suboptimality bound as Theorem \ref{thm:multi_BC}, offset only by an exponentially fast-decaying identification penalty, dependent on $p_{link}$.

\begin{theorem}\label{thm:unsplit_mabc}
Let $\{\pi_\ell\}_{\ell=1}^L$ be Pareto front policies and let $\{\hat{\pi}_\ell\}_{\ell=1}^L$ be the output of Algorithm \ref{alg:unsplit_mabc} on an unsplit dataset of $N$ independent samples. Let $N_\ell$ be the expected number of independent state-action samples generated by expert $\ell$, and let $K = |\mathcal{X} \setminus \mathcal{X}_{common}|$. Then, for all $\ell \in [L]$, we have:

$$ \begin{aligned}
\nexists \pi^\star ~~| ~~ \forall i \in [d]~~~~~J_i(\pi^\star) - \mathbb{E}\left[J_i(\hat{\pi}_\ell)\right]
&\geq \left\|\frac{\nu^{\pi_\ell}}{\nu^{\pi_{\mathrm{data}}}}\right\|_{\infty}\frac{\left|\mathcal{X}_{\mathrm{common}}\right| }{e (1-\gamma)^2 N} \\
&\quad + \frac{\left|\mathcal{X}\setminus\mathcal{X}_{\mathrm{common}}\right|}{e (1-\gamma)^2 N_\ell} + \frac{\nu_{div}^{total}(K-1)(1 - p_{link})^N}{(1-\gamma)^2}
\end{aligned} $$

where $\nu_{div}^{total} = \sum_{x \in \mathcal{X} \setminus \mathcal{X}_{common}} \nu^{\pi_\ell}(x)$ is the occupancy mass on the divergent states and $p_{link}$ is defined as above.
\end{theorem}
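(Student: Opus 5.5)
We reduce to the proof of \Cref{thm:multi_BC} by conditioning on whether the Consistency Graph correctly binds the divergent actions. Let $G$ be the \emph{good event} that all $K-1$ chain links $(x_k,x_{k+1})$ appear in $\mathcal{D}$, and let $G^c$ be its complement. The discussion preceding the theorem gives $\mathbb{P}[G^c]\le (K-1)(1-p_{link})^N$.

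\textbf{Step 1: structure of the output on $G$.} On $G$, we first show that the output of Algorithm \ref{alg:unsplit_mabc} coincides with that of \Cref{alg:main} run on the split data. Two facts are needed.
\begin{itemize}
\item The identified set $\mathcal{X}_{div}$ is a subset of $\X\setminus\X_{\mathrm{common}}$. This holds because deterministic experts never produce two distinct actions at a common state, so $\mathcal{D}_{common}$ retains every sample at common states.
\item Under the pairwise distinct actions assumption, each node $(x,a)$ with $x\in\mathcal{X}_{div}$ belongs to exactly one expert. The chain of links then connects all of that expert's divergent nodes, and no edge ever joins two experts' nodes, since a trajectory is generated by a single expert.
\end{itemize}
Hence each connected component is exactly the restriction of one $\mathcal{D}_\ell$ to $\mathcal{X}_{div}$, up to relabeling.

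It follows, exactly as in the proof of \Cref{thm:multi_BC}, that $\hat{\pi}_\ell(x)=\pi_\ell(x)$ whenever $x$ appears in $\mathcal{D}_{common}$ (for $x\in\X_{\mathrm{common}}$) or in $\mathcal{D}_\ell$ (for $x$ divergent). A divergent state visited by only one expert is not flagged and lands in $\mathcal{D}_{common}$. This is harmless: its only observed action is that expert's, and for other experts it simply behaves as an unvisited state, which we bound pessimistically.

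\textbf{Step 2: decomposition.} Start from the same inequality as in \eqref{eq:decom}:
$$(1-\gamma)^2\norm{J(\pi_\ell)-J(\hat{\pi}_\ell)}_\infty\le \sum_x \nu^{\pi_\ell}(x)\,\mathds{1}\{\hat{\pi}_\ell(x)\neq\pi_\ell(x)\}.$$
Split the sum into common and divergent states, and further split the divergent part over $G$ and $G^c$.
\begin{itemize}
\item The common part is identical to the proof of \Cref{thm:multi_BC}: take the expectation over $\mathcal{D}_{common}$, use $\max_p p(1-p)^N\le 1/(eN)$, and change measure to $\nu^{\pi_{\mathrm{data}}}$. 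This yields the concentrability term. The argument is independent of $G$ because common-state data is never misassigned.
\item On $G$, the divergent part gives $\sum_{x\in\X\setminus\X_{\mathrm{common}}}\nu^{\pi_\ell}(x)(1-\nu^{\pi_\ell}(x))^{N_\ell}\le |\X\setminus\X_{\mathrm{common}}|/(eN_\ell)$. Here we dominate $\mathds{1}_G\,\mathds{1}\{x\notin\mathcal{D}_\ell\}$ by $\mathds{1}\{x\notin\mathcal{D}_\ell\}$.
\item On $G^c$, bound the indicator by one. This gives $\nu_{div}^{total}\,\mathbb{P}[G^c]\le \nu_{div}^{total}(K-1)(1-p_{link})^N$.
\end{itemize}
Summing the three pieces and dividing by $(1-\gamma)^2$ yields the bound on $\mathbb{E}\norm{J(\pi_\ell)-J(\hat{\pi}_\ell)}_\infty$.

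\textbf{Step 3: conclusion.} Apply the final argument of \Cref{thm:multi_BC}. Since $\pi_\ell$ is a Pareto front policy, any $\pi^\star$ improving every coordinate of $\mathbb{E}[J(\hat{\pi}_\ell)]$ by at least the stated amount would weakly dominate $\pi_\ell$.

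\textbf{Main obstacle.} The delicate point is Step 1, namely that correct binding holds exactly on $G$. Two things must be checked. First, components can neither merge across experts nor split within an expert; this relies on the pairwise-distinctness assumption and on trajectories being single-expert. Second, the statistical independence assumptions must be handled carefully. $N_\ell$ is only an expected count, so treating $\mathbb{P}[x\notin\mathcal{D}_\ell]=(1-\nu^{\pi_\ell}(x))^{N_\ell}$ requires the stated i.i.d. modeling assumption. I would state this explicitly rather than derive it.
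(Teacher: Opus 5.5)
Your proposal is correct and follows the same route as the paper. It uses the common/divergent decomposition from the proof of \Cref{thm:multi_BC}, and charges the divergent-state error either to coverage ($x\notin\mathcal{D}_\ell$) or to a broken Consistency Graph bounded by $(K-1)(1-p_{link})^N$; your split over $G$ and $G^c$ is just the paper's per-state union bound written globally. Your Step~1 is more explicit than the paper, which simply asserts that Algorithm~\ref{alg:unsplit_mabc} reproduces split MA-BC, with one caveat the paper shares: $G$ implies correct binding of expert $\ell$'s actions only if the chain links are observed in expert $\ell$'s own trajectories, since a link seen only in another expert's trajectory joins that expert's nodes, not $\ell$'s.
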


\begin{proof}
Without the loss of generality, we consider $l=1$. Following the exact decomposition from the proof of Theorem \ref{thm:multi_BC}, we bound the $L_\infty$ of policy $\hat{\pi}_1$ across the state space:

$$ \begin{aligned}
(1-\gamma)^2\left\|J(\pi_1) - J(\hat{\pi}_1)\right\|_{\infty} &\leq \underbrace{\sum_{x\in \mathcal{X}_{\mathrm{common}}} \nu^{\pi_1}(x) \sum_{a\in \mathcal{A}} \pi_1(a|x) \mathds{1}\left\{a \neq \hat{\pi}_1(x)\right\}}_{\text{Term I: Shared States}} \\
&\phantom{=} + \underbrace{\sum_{x\in \mathcal{X}\setminus \mathcal{X}_{\mathrm{common}}} \nu^{\pi_1}(x) \sum_{a\in \mathcal{A}} \pi_1(a|x) \mathds{1}\left\{a \neq \hat{\pi}_1(x)\right\}}_{\text{Term II: Divergent States}}.
\end{aligned} $$

Because the algorithm seamlessly isolates $\mathcal{D}_{common}$ (where no conflict exists), Term I is entirely unaffected by the latent expert identities. Applying the change of measure exactly as in Theorem \ref{thm:multi_BC}, the error on the shared states is bounded purely by the missing mass on the total dataset $N$:

$$ \mathbb{E}[\text{Term I}] \leq \left\|\frac{\nu^{\pi_1}}{\nu^{\pi_{\mathrm{data}}}}\right\|_{\infty}\frac{\left|\mathcal{X}_{\mathrm{common}}\right| }{e N}. $$

For Term II (the divergent states), the learner $\hat{\pi}_1$ can fail for two reasons: (1) \textit{Coverage Error}: the specific state $x$ was never visited by expert 1 in the independently sampled dataset, or (2) \textit{Identification Error}: the state was visited, but the Consistency Graph was broken, causing the state-action pair to be discarded or misassigned. We split the probability of failure into these two disjoint events:

$$ \begin{aligned}
\mathbb{E}[\text{Term II}] &\leq \sum_{x\in \mathcal{X}\setminus \mathcal{X}_{\mathrm{common}}} \nu^{\pi_1}(x) \left( \mathbb{P}[x \notin \mathcal{D}_1] + \mathbb{P}[\text{Consistency Graph Broken}] \right) \\
&\leq \sum_{x\in \mathcal{X}\setminus \mathcal{X}_{\mathrm{common}}} \nu^{\pi_1}(x) (1-\nu^{\pi_1}(x))^{N_\ell} + \sum_{x\in \mathcal{X}\setminus \mathcal{X}_{\mathrm{common}}} \nu^{\pi_1}(x) (K-1)(1 - p_{link})^N \\
&\leq \frac{\left|\mathcal{X}\setminus\mathcal{X}_{\mathrm{common}}\right|}{e N_\ell} + \nu_{div}^{total}(K-1)(1 - p_{link})^N,
\end{aligned} $$

where the first inequality is by the Union Bound and the second inequality is by substituting the graph bound. Summing the terms and dividing by $(1-\gamma)^2$ establishes the expected $L_\infty$ bound on the return. This confirms that Unsplit MA-BC achieves the optimal statistical rate $\mathcal{O}(1/N_\ell + 1/N)$ up to an exponentially vanishing identification penalty.
\end{proof}

It remains open whether the dependence on $p_{link}$ is necessary, and we highlight that providing a lower bound in the latent experts setting remains as an exciting future direction.

\newpage
\section{Experiments Details}
\label{app:environments}

\subsection{Environment Details}

In this section, we provide a detailed description of the state spaces, action spaces, transition dynamics, and multi-objective reward structures for all the domains used in our empirical evaluation in Section \ref{sec:experiments}. All experiments were run on a desktop CPU with a negligible runtime. 

\begin{figure}[h!]
    \centering
        \begin{subfigure}[b]{0.31\textwidth}
        \centering
        \includegraphics[width=\linewidth]{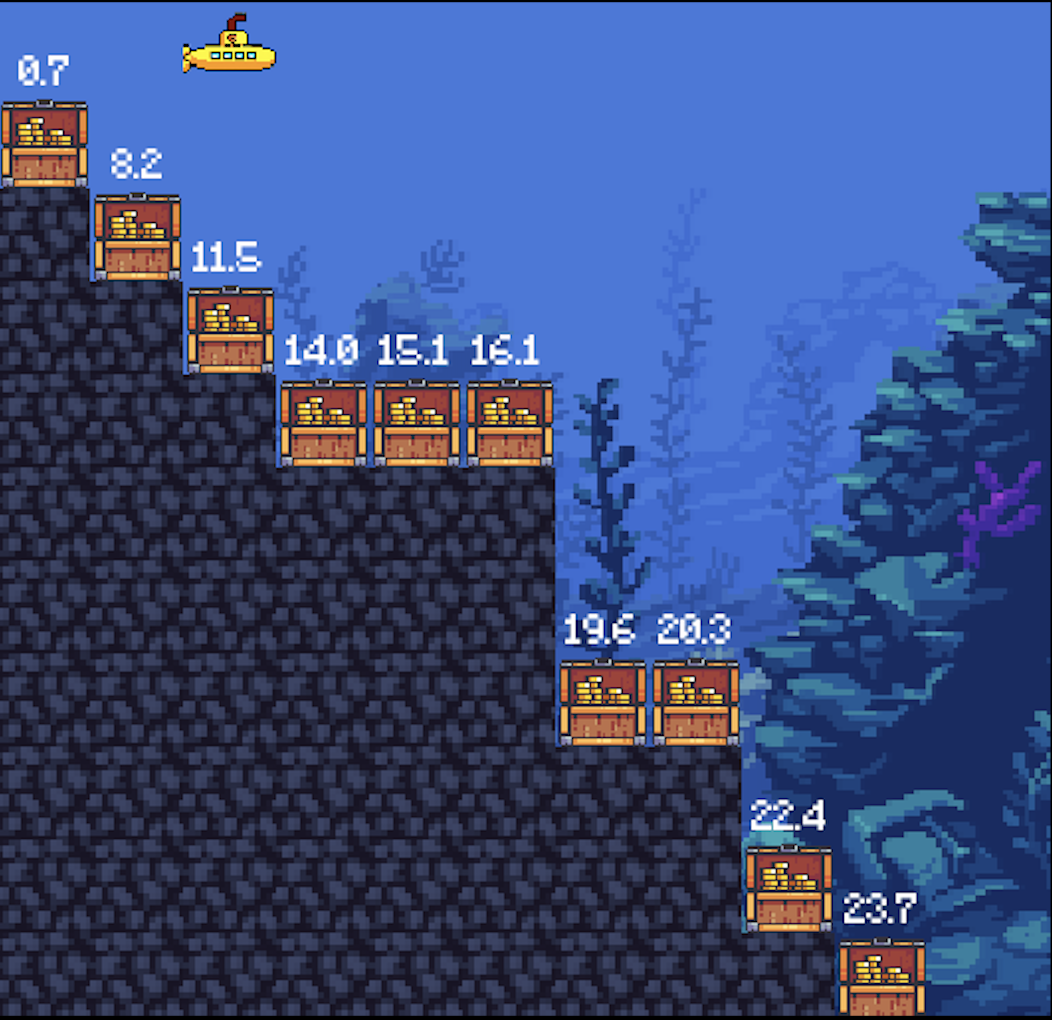}
        \caption{Deep Sea Treasure}
        \label{fig:env_deep_sea}
    \end{subfigure}
    \hfill
    \begin{subfigure}[b]{0.315\textwidth}
        \centering
        \includegraphics[width=\linewidth]{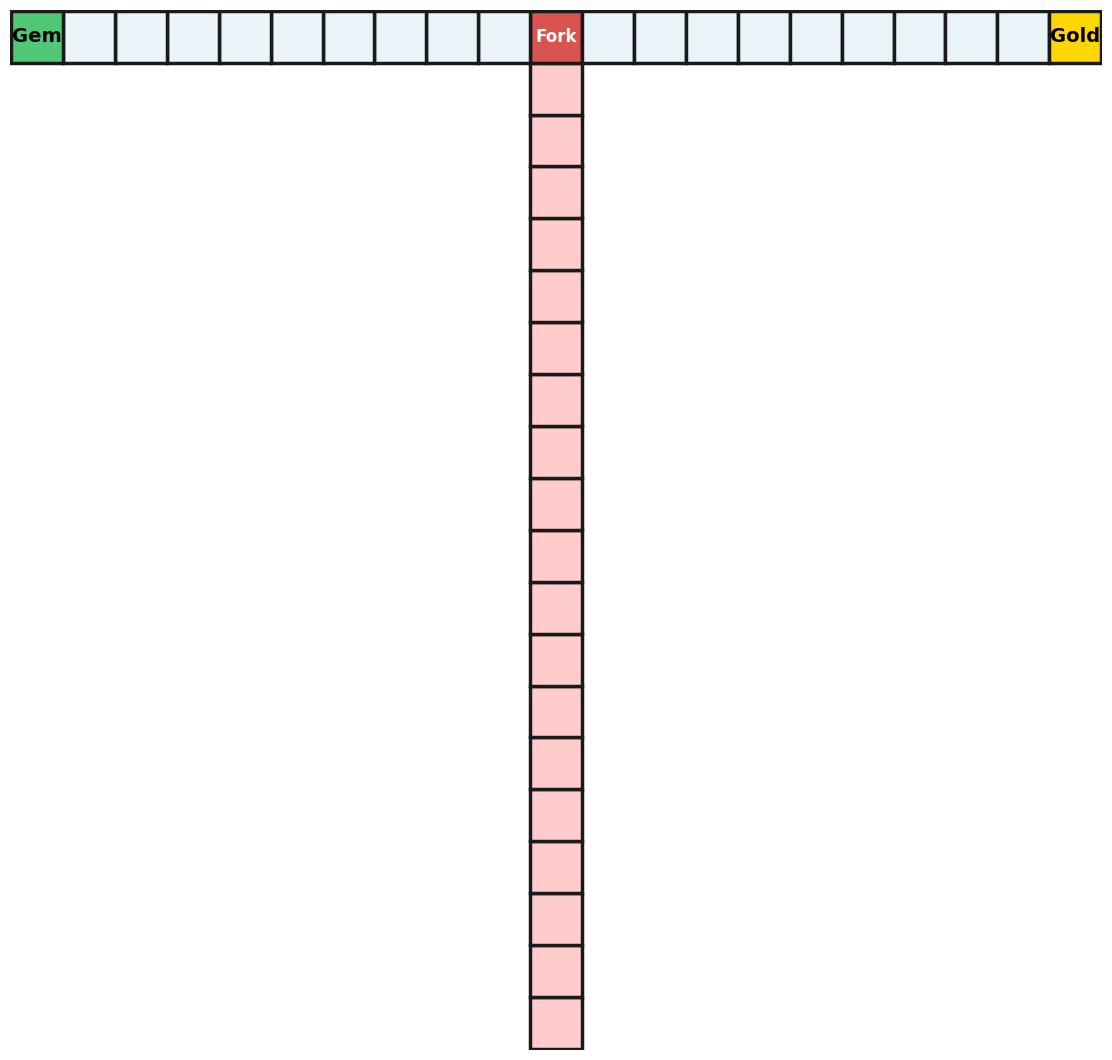}
        \caption{Slippery Y-Maze}
        \label{fig:env_y_maze}
    \end{subfigure}
    \hfill
    \begin{subfigure}[b]{0.3\textwidth}
        \centering
        \includegraphics[width=\linewidth]{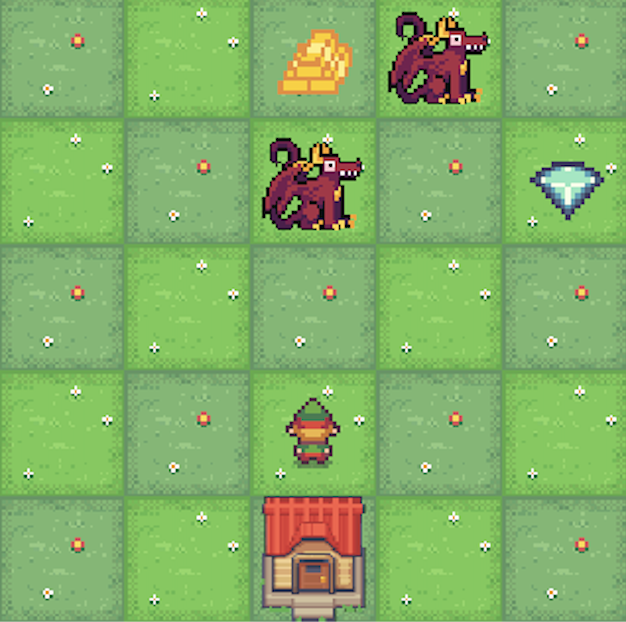}
        \caption{Resource Gathering}
        \label{fig:env_resource}
    \end{subfigure}
    
    \caption{Overview of the discrete multi-objective environments used for empirical evaluation. \textbf{(a)} Deep Sea Treasure requires balancing the treasure value against a time penalty. \textbf{(b)} The Slippery Y-Maze introduces a stochastic corridor where the agent must navigate, and either choose between the gem or the gold. \textbf{(c)} In the Resource Gathering environment, the agent receives a 3-D reward signal regarding collecting the gold, the gem and avoiding the enemies.}
    \label{fig:discrete_environments}
\end{figure}

\subsubsection{Deep Sea Treasure}
Deep Sea Treasure \cite{deep_sea_env} is a standard, widely used benchmark in multi-objective reinforcement learning. The environment consists of a discrete grid-world where a submarine agent navigates to find treasures located on the sea floor.
\begin{itemize}
    \item \textbf{State Space:} The discrete $(x, y)$ coordinates of the submarine.
    \item \textbf{Action Space:} Four discrete directional actions: Up, Down, Left, and Right.
    \item \textbf{Reward Vector:} A 2-dimensional vector $r(s, a) = [r_{\text{treasure}}, r_{\text{time}}]^T$. The agent receives a time penalty of $r_{\text{time}} = -1$ for every step taken. The treasure reward $r_{\text{treasure}}$ is strictly positive only when the agent reaches a terminal treasure state. Treasures located further from the start yield exponentially higher values, creating a strictly convex Pareto front that explicitly trades off time efficiency against total treasure value.
\end{itemize}

\subsubsection{Slippery Y-Maze}
\label{app:y_maze_details}

The Slippery Y-Maze is a discrete grid-world environment designed to introduce severe stochastic bottlenecks and spatial aliasing, testing the robustness of multi-objective imitation learning algorithms under risk.

\paragraph{State Space} 
The state space $\mathcal{X}$ consists of a 2D coordinate grid shaped as a 'Y', plus a dedicated absorbing terminal state $x_{\text{term}}$. The grid is divided into three distinct regions: the stem, the left branch (Gem path), and the right branch (Gold path). We use the parameters ($L_{\text{stem}} = 20$, $L_{\text{branch}} = 10$) in our experiments. The agent spawns according to a uniform initial state distribution over all non-terminal states.

\paragraph{Action Space}
The agent has four discrete directional actions $\mathcal{A} = \{\text{Up}, \text{Down}, \text{Left}, \text{Right}\}$. We denote the deterministic coordinate shift associated with an action $a$ as $\Delta_a$ (e.g., $\Delta_{\text{Up}} = (0, 1)$).

\paragraph{Transition Dynamics}
The environment introduces a stochastic bottleneck in the stem to punish dithering. While the agent is in the stem ($S_{\text{stem}}$), every action carries a 10\% ambient risk of ``falling off'' or dying, which transitions the agent immediately to $x_{\text{term}}$ with zero reward. Once the agent reaches the branches, the ambient risk drops to zero and transitions become deterministic. 

Formally, the transition probabilities $P(x' \mid x, a)$ are defined as follows. Let $x_{\text{next}} = x + \Delta_a$. If $x_{\text{next}} \notin \mathcal{X}$, the agent collides with a wall and remains in its current state (i.e., $x_{\text{next}} = x$).
\begin{itemize}
    \item \textbf{In the Stem ($x \in S_{\text{stem}}$):}
    \begin{align*}
        P(x_{\text{term}} \mid x, a) &= 0.10 \\
        P(x_{\text{next}} \mid x, a) &= 0.90
    \end{align*}
    \item \textbf{In the Branches ($x \in S_{\text{left}} \cup S_{\text{right}}$):}
    \begin{align*}
        P(x_{\text{next}} \mid x, a) &= 1.0
    \end{align*}
\end{itemize}
When the agent steps off the ends of the branches (coordinates $(10, 19)$ or $(-10, 19)$), it transitions deterministically to $x_{\text{term}}$ and the episode ends.

\paragraph{Reward Vector}
The reward function $r(x, a, x') \in \mathbb{R}^2$ represents a trade-off between two objectives: [Gold, Gem]. The environment yields zero reward everywhere except on the transitions exiting the extremities of the two branches:
\begin{align*}
    r((10, 19), a, x_{\text{term}}) &= [1.0, 0.0]^T \quad \text{(Gold Expert Target)} \\
    r((-10, 19), a, x_{\text{term}}) &= [0.0, 1.0]^T \quad \text{(Gem Expert Target)}
\end{align*}
All other transitions yield $[0.0, 0.0]^T$. The discount factor is set to $\gamma = 0.9999$ to simulate an effectively undiscounted shortest-path problem.

\paragraph{Expert Policies}
By finding the exact continuous Pareto frontier using Optimistic Linear Support (OLS), we extract two deterministic target experts at the extremes of the frontier:
\begin{enumerate}
    \item \textbf{Gold Expert ($\pi_A^*$):} Always routes right at the central fork $(0, 19)$ to secure the $[1.0, 0.0]^T$ reward.
    \item \textbf{Gem Expert ($\pi_B^*$):} Always routes left at the central fork $(0, 19)$ to secure the $[0.0, 1.0]^T$ reward.
\end{enumerate}
Crucially, because both experts want to minimize time spent in the dangerous stem region (to avoid the 10\% ambient death probability), they share the exact same optimal policy for all states in $S_{\text{stem}}$ (always moving Up). The only point of divergence occurs at the fork $(0,19)$. If an imitation learner naively pools these experts (Naive BC), the conflicting actions at the fork will cause the agent to dither, repeatedly triggering the 10\% death penalty and falling entirely off the Pareto frontier.
\subsubsection{Resource Gathering}
Resource Gathering tests the scalability of MA-BC by extending the conflict into a 3-dimensional objective space, moving beyond simple 2D trade-offs.
\begin{itemize}
    \item \textbf{State Space:} A discrete grid containing multiple spawn points for two distinct types of resources (Gold and Gems) as well as roaming enemies.
    \item \textbf{Action Space:} Four discrete directional actions.
    \item \textbf{Reward Vector:} A 3-dimensional vector $r(s, a) = [r_{\text{gold}}, r_{\text{gem}}, r_{\text{enemy}}]^T$. The agent receives $+1$ on the respective channel for collecting a Gold or Gem, and a severe $-1$ penalty on the third channel if it encounters an enemy. The Pareto-optimal experts in our dataset demonstrate distinct, specialized behaviors: one strictly prioritizes gathering Gold, one exclusively hunts Gems, and one exhibits risk-averse behavior by prioritizing enemy avoidance over resource collection.
\end{itemize}

\subsubsection{LQR Drone Control}
\label{app:drone_lqr_details}

In our continuous-control experiments, we model a linearized 6 Degrees of Freedom (6-DOF) quadcopter. The continuous-time linear time-invariant (LTI) system is defined as $\dot{x} = A_c x + B_c u$.

\paragraph{State and Action Space}
The state vector $x \in \mathbb{R}^{12}$ consists of the Euler angles (roll, pitch, yaw), the 3D position, the body angular rates, and the linear velocities:
\[
x = [\phi, \theta, \psi, X, Y, Z, p, q, r, u, v, w]^T
\]
The action vector $u \in \mathbb{R}^4$ consists of the vertical thrust and the torques applied around the three body axes:
\[
u = [F, \tau_x, \tau_y, \tau_z]^T
\]

\paragraph{Continuous Dynamics}
The continuous dynamics matrices $A_c \in \mathbb{R}^{12 \times 12}$ and $B_c \in \mathbb{R}^{12 \times 4}$ encode the kinematic mappings and gravity couplings (where $g = 9.8$ m/s$^2$). The non-zero derivatives of the state variables are defined as:
\begin{align*}
    \dot{\phi} &= p, \quad \dot{\theta} = q, \quad \dot{\psi} = r \\
    \dot{X} &= u, \quad \dot{Y} = v, \quad \dot{Z} = w \\
    \dot{u} &= -g \theta, \quad \dot{v} = g \phi \\
    \dot{p} &= 10 \tau_x, \quad \dot{q} = 10 \tau_y, \quad \dot{r} = 6.6667 \tau_z, \quad \dot{w} = 5 F
\end{align*}
To formulate the discrete-time MDP $x_{t+1} = A x_t + B u_t$, we discretize the continuous dynamics using Euler integration with a time step of $dt = 0.05$ seconds:
\[
A = I + A_c \Delta t, \quad B = B_c \Delta t
\]

\paragraph{LQR Objective Matrices}
The reward function is defined by the standard LQR quadratic penalty $r(x, u) = -(x^T Q x + u^T R u)$. We define a standard, uniform control effort penalty $R = I_{4 \times 4}$. 

To create the multi-objective conflict, we define two base state-penalty matrices. The first heavily penalizes positional tracking error, while the second relaxes positional tracking to simulate an economic, energy-saving mode. We define $Q_{\text{tracking}}$ and $Q_{\text{eco}}$ as diagonal matrices:
\begin{align*}
    Q_{\text{tracking}} &= \text{diag}(10, 10, 10, 100, 100, 100, 1, 1, 1, 1, 1, 1) \\
    Q_{\text{eco}} &= \text{diag}(0.1, 0.1, 0.1, 1, 1, 1, 0.1, 0.1, 0.1, 0.1, 0.1, 0.1)
\end{align*}

\paragraph{Expert Generation}
To test the agents under a controlled degree of objective conflict, we introduce a divergence parameter $\alpha \in [0, 1]$. When $\alpha=0$, the experts are identical; when $\alpha=1$, they are maximally divergent. In our experiments, we use $\alpha = 0.5$. The specific reward matrices for the two experts are generated via convex combinations:
\begin{align*}
    Q_1 \text{ (Agile Expert)} &= \left(0.5 + 0.5\alpha\right) Q_{\text{tracking}} + \left(0.5 - 0.5\alpha\right) Q_{\text{eco}} \\
    Q_2 \text{ (Eco Expert)} &= \left(0.5 - 0.5\alpha\right) Q_{\text{tracking}} + \left(0.5 + 0.5\alpha\right) Q_{\text{eco}}
\end{align*}
Given $A, B, R$, and the respective $Q_i$, the true expert controllers $K_1$ and $K_2$ are obtained by solving the Discrete Algebraic Riccati Equation (DARE).

\subsection{Exact $L_\infty$ Distance to the Pareto Front via Linear Programming}
\label{app:linf_lp}

In our empirical evaluations, we measure the suboptimality of a learned policy $\pi$ by computing its normalized $L_\infty$ distance to the continuous Pareto frontier. Because the exact continuous frontier is defined by the convex hull of the discrete, deterministic Pareto-optimal vertices, we cannot simply measure the distance to the nearest discrete vertex. Doing so would unfairly penalize policies that correctly learn an optimal stochastic mixture of experts along a continuous edge.

To compute the exact distance to the continuous frontier, we formulate the problem as a Linear Program (LP). Let $\mathcal{V} = \{v_1, v_2, \dots, v_N\}$ be the set of expected return vectors for the $N$ deterministic Pareto-optimal policies. Let $J(\pi) \in \mathbb{R}^d$ be the expected return vector of the evaluated policy. 

To adjust for the different scales of the objectives, we compute the objective ranges $R_i = J_i^{\text{max}} - J_i^{\text{min}}$, where $J^{\text{max}}$ and $J^{\text{min}}$ are the coordinate-wise maximum and minimum of the Pareto vertices, respectively. The vertices on the Pareto front can be computed using the Optimistic Linear Support algorithm \cite{roijersPhD}.

We seek to find the maximum normalized gap $\delta \ge 0$ such that there exists a valid point on the continuous Pareto frontier (a convex combination of the vertices $\mathcal{V}$) that dominates the learned policy $J(\pi)$ by at least $\delta$ across all $d$ normalized objectives. 

This is computed via the following Linear Program:
\begin{align*}
    \max_{\delta, \alpha} \quad & \delta \\
    \text{subject to} \quad & \sum_{j=1}^N \alpha_j v_{j, i} \ge J(\pi)_i + \delta \cdot R_i, \quad \forall i \in [d] \\
    & \sum_{j=1}^N \alpha_j = 1 \\
    & \alpha_j \ge 0, \quad \forall j \in \{1, \dots, N\}
\end{align*}
where $\alpha \in \mathbb{R}^N$ represents the mixture weights over the discrete Pareto vertices. By maximizing $\delta$, we find the exact worst-case normalized suboptimality gap between $J(\pi)$ and the true continuous boundary of the return polytope $\mathbb{J}$.

\subsection{A Pedagogical Deep Dive into the Deep Sea Treasure Environment}
\label{app:pedagogical_deep_dive}

In this section, we present a detailed analysis of the Deep Sea Treasure (DST) environment (Figure \ref{fig:env_deep_sea}), consolidating our empirical and theoretical findings into a cohesive, pedagogical narrative.

The DST agent receives a 2D reward signal: the value of the collected treasure and a constant step penalty for the time spent navigating. To establish the ground truth, we compute the exact Pareto vertices using Optimistic Linear Support (OLS) \cite{roijersPhD}. Because we have access to the environment's transition dynamics, we evaluate the scalarized objectives precisely via Value Iteration \cite{puterman_mdp}. Assuming a uniform initial state distribution over all non-terminal states and a discount factor of $\gamma = 0.999$, we reconstruct the complete Pareto front, visualized in \Cref{fig:dst_pf}.

\begin{figure}[h]
    \centering
    \includegraphics[width=\linewidth]{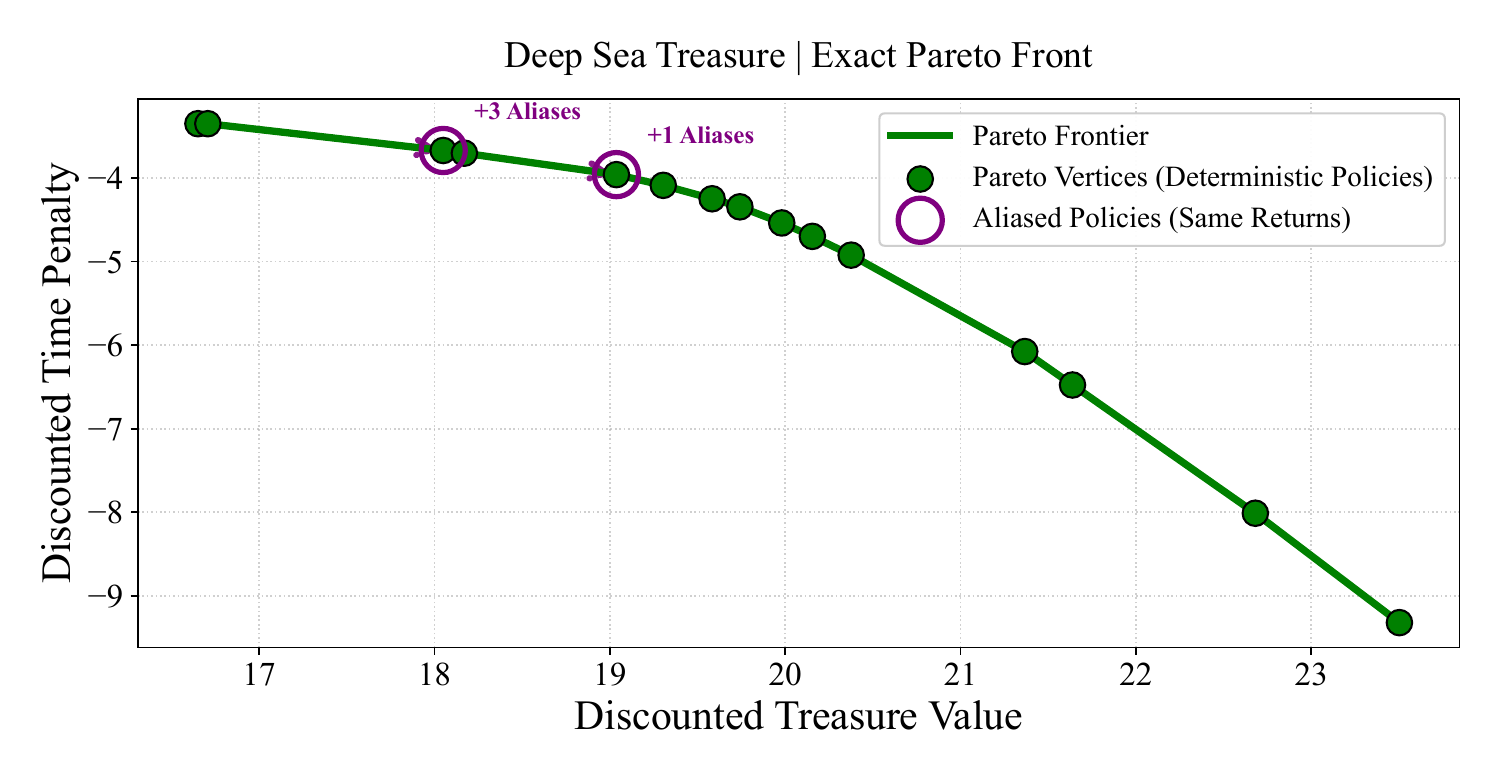}
    \caption{\textbf{The DST Return Polytope.} The exact continuous Pareto frontier is bounded by deterministic optimal policies (green vertices). Aliased policies (where multiple distinct optimal policies yield the exact same expected return) are highlighted with purple circles.}
    \label{fig:dst_pf}
\end{figure}

\paragraph{Pareto front structure} First, we empirically validate the structural properties introduced in \Cref{sec:preliminaries}, namely, that the Pareto front is densely populated by structurally similar policies. We observe a compelling geometric phenomenon: almost all Pareto-optimal policies differ from their immediate neighbors at exactly one state. The only exceptions are two specific vertices (highlighted with purple circles in \Cref{fig:dst_pf}). This observation demonstrates that the structural guarantee in \cite[Theorem 1]{finding_pareto_front} does not strictly hold in highly structured environments like DST, where neighboring policies on the Pareto front may differ by more than one state. This is not a flaw in their proof, but rather a consequence of their measure-theoretic assumption that each return vector is uniquely attained by a single policy (thus confining structured environments to a measure-zero set of MOMDPs). Upon further investigation, we found that the vertices violating the 1-flip rule are heavily \emph{aliased}, meaning multiple distinct deterministic policies achieve the exact same expected return vector. This insight directly motivated our generalized ``Pareto Path'' guarantee (\Cref{thm:pareto_path}), which ensures 1-flip connectivity even in the presence of severe aliasing. Notably, policy-based search methods like Pareto Traversal \cite[Algorithm 1]{finding_pareto_front} become computationally prohibitive here, as tracking all aliased combinations induces an exponential branching factor. This justifies our use of a value-based scalarization algorithm (OLS), which remains entirely agnostic to policy aliasing.

\begin{figure}[h]
    \centering
    \includegraphics[width=\linewidth]{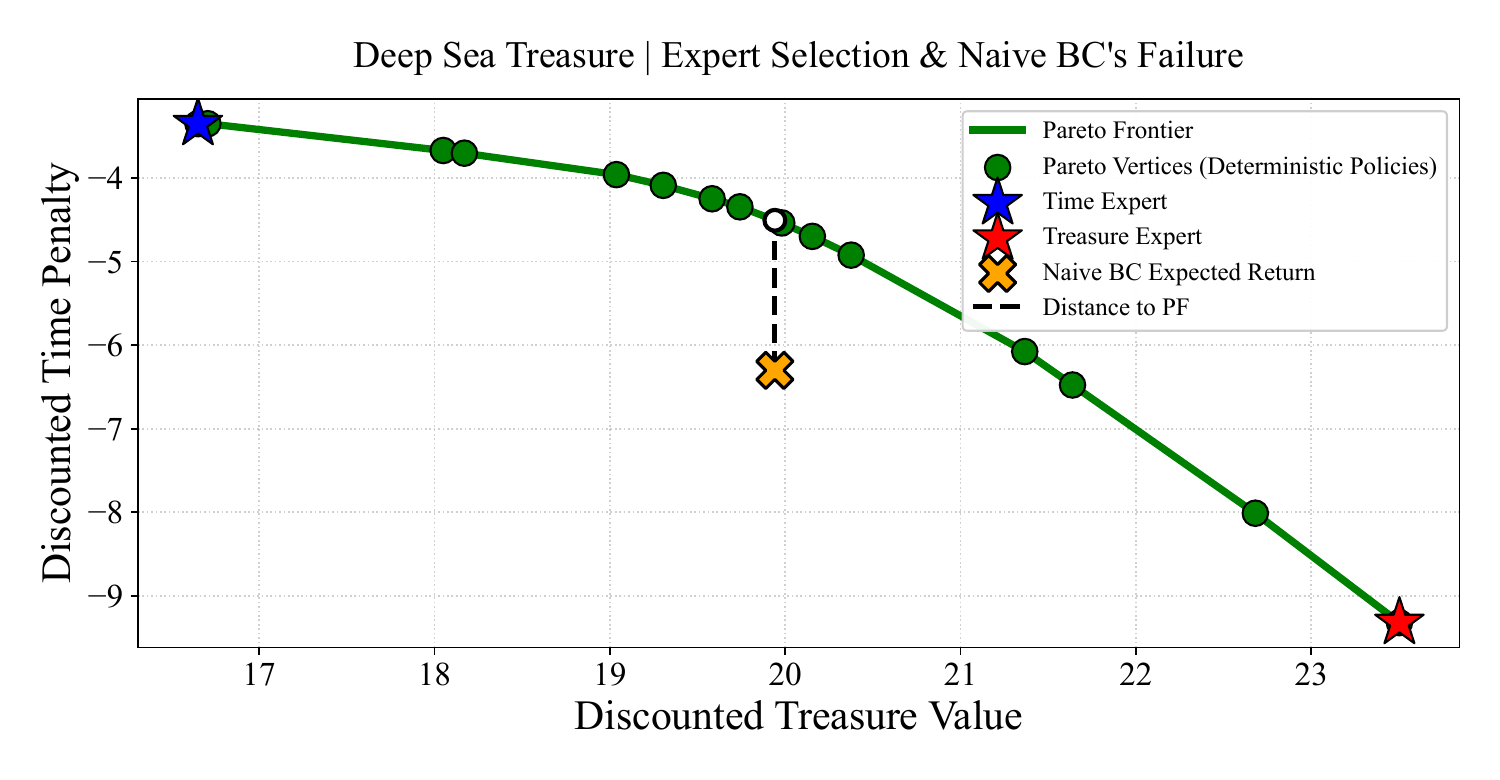}
    \caption{\textbf{The Geometric Failure of Naive BC (\textcolor{red!70!black}{\textbf{Failure II}})}. We collect data from two extreme experts, highlighted using the blue and red stars. Naive BC aggregates all of this data into a single joint dataset. Because the experts fundamentally disagree at bottleneck states, the resulting Maximum Likelihood Estimate policy diffuses its probability mass destructively, causing its expected return to sag deeply into the suboptimal interior of the polytope.}
    \label{fig:dst_naive_failure}
\end{figure}

\paragraph{Imitation learning (\textcolor{red!70!black}{\textbf{Failure II}})} We now turn our attention to the imitation learning setting. We select two extreme experts: a time-optimizing expert (blue star) and a treasure-optimizing expert (red star), as visualized in \Cref{fig:dst_naive_failure}. We sample trajectories from both experts and learn a single joint policy using naive Behavioral Cloning. As predicted by \textbf{Failure II} and demonstrated in \Cref{sec:experiments}, this pooled policy completely fails to reach the Pareto front. 

To understand why, we compare this dominated policy to the closest mathematically optimal policy on the continuous Pareto frontier (\Cref{fig:dst_mix_vs_optimal}). We observe a stark contrast in their stochastic behaviors. The naively learned policy is overly stochastic and uniformly randomizes its actions across all divergent states, failing to optimize any coherent utility. In contrast, the true nearest Pareto-optimal policy is also stochastic, but its stochasticity is tightly concentrated at a \emph{single} critical state. It carefully randomizes between navigating to one treasure versus the immediately adjacent one. This demonstrates that while stochastic mixing is necessary to traverse the continuous edges of the Pareto front, naive BC diffuses this entropy destructively, whereas the true optimal mixture focuses it.

\begin{figure}[h]
    \centering
    \includegraphics[width=\linewidth]{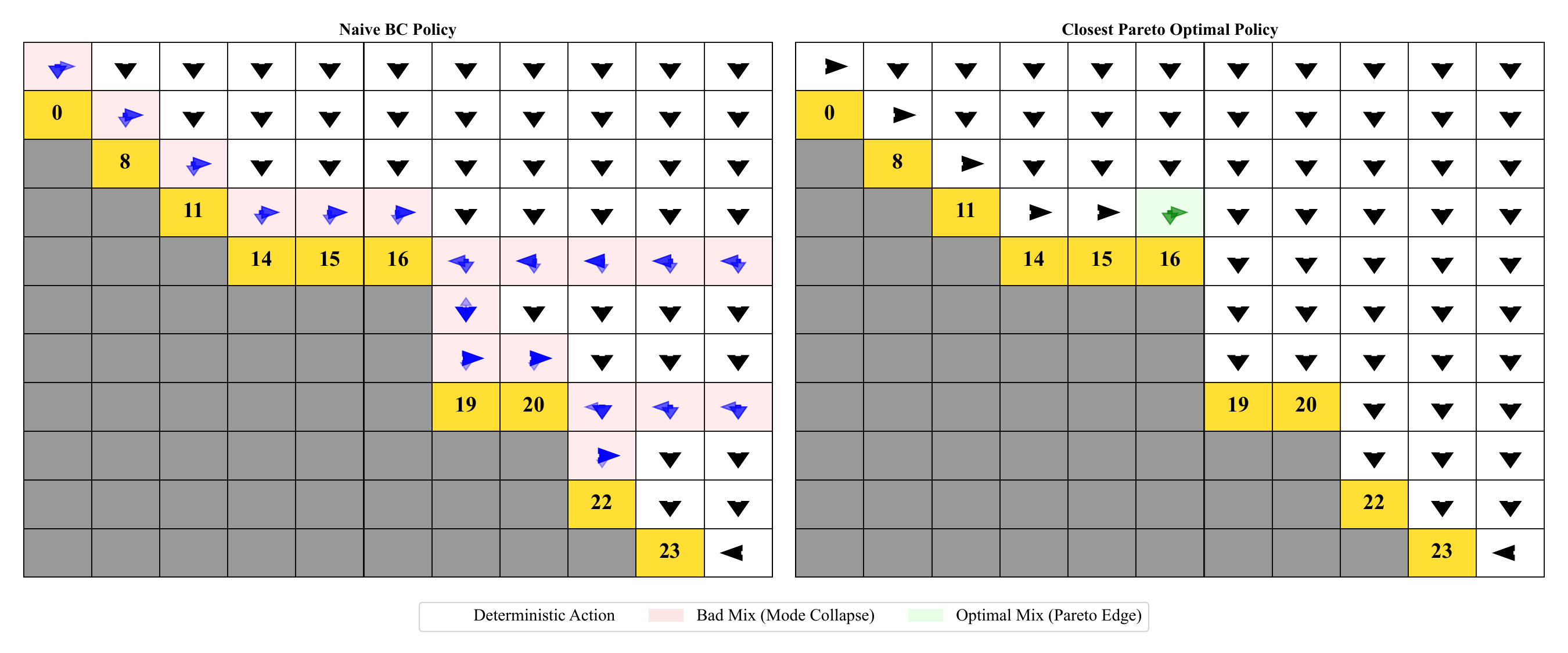} 
    \caption{\textbf{\textcolor{red!70!black}{\textbf{Failure II}}: Naive BC vs. Closest Optimal Mixture.} \textbf{(Left)} The policy learned by Naive BC is overly stochastic at conflicting states (red cells). \textbf{(Right)} The closest theoretically optimal policy on the Pareto front. This policy achieves continuous returns by randomizing at only a single critical state, properly trading off between two adjacent treasures without diffusing entropy throughout the trajectory.}
    \label{fig:dst_mix_vs_optimal}
\end{figure}

\paragraph{Imitation learning (\textcolor{red!70!black}{\textbf{Failure I}})} Finally, we analyze the mechanics of MA-BC compared to Isolated BC to understand how it overcomes \textcolor{red!70!black}{\textbf{Failure I}}. We collect an extremely small dataset of five trajectories from each expert and train both algorithms. The resulting policies are visualized in \Cref{fig:dst_iso_failure_5}. 

\begin{figure}[h]
    \centering
    \includegraphics[width=\linewidth]{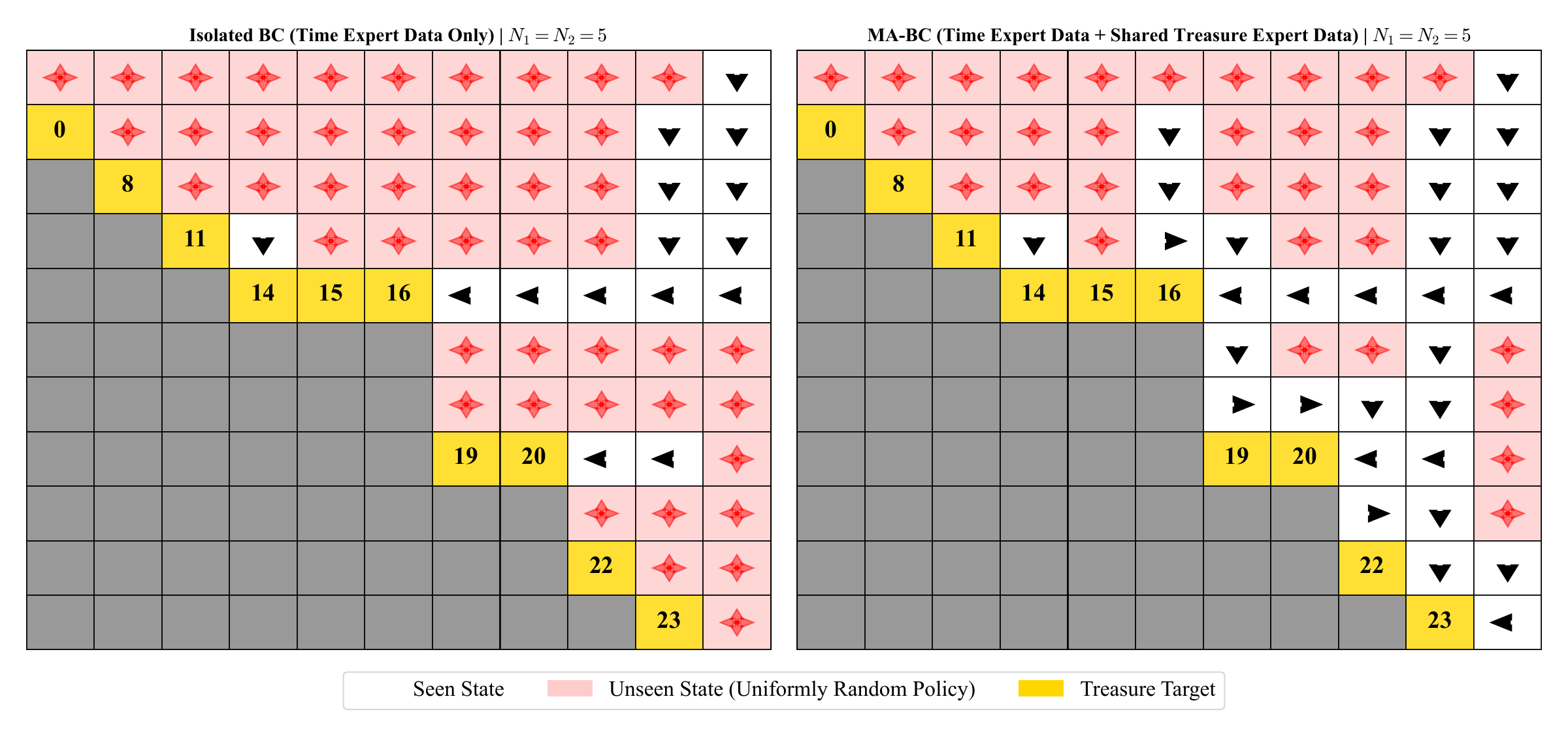}
    \caption{\textbf{\textcolor{red!70!black}{\textbf{Failure I}}: The Failure of Isolated BC ($N_1=N_2=5$)}. We depict the time-optimizing policy learned via Isolated BC (Left) versus MA-BC (Right) using only 5 trajectories. MA-BC aggressively pools non-divergent actions from the treasure-optimizing expert to fill in unvisited states. While largely helpful, MA-BC introduces temporary bias in the low-data regime (e.g., the actions above cells 16, 19, 20, and 22 incorrectly route the agent away from the nearest treasure).}
    \label{fig:dst_iso_failure_5}
\end{figure}

In the right panel of \Cref{fig:dst_iso_failure_5}, we observe that MA-BC successfully pools a massive portion of data from the treasure-optimizing expert to assist the time-minimizing learner in unvisited regions. However, this reveals a nuance: while much of the pooled data is consistent with a time-minimizing behavior, some is not. Specifically, the actions above cells (16, 19, 20, 22) direct the agent away from the nearest treasure, effectively increasing the time penalty. This illustrates that MA-BC can inadvertently inject bias into the target expert's dataset in the ultra-low data regime. The critical mechanism of MA-BC, however, is that this bias is inherently \emph{self-correcting}. When we scale the dataset to 50 trajectories per expert (\Cref{fig:dst_iso_failure_50}), the bias almost vanishes. As the time-minimizing expert explores more of the state space, it eventually generates trajectories that conflict with the previously pooled data at cells (16, 19, 20). MA-BC detects this conflict, immediately isolates those states, and reverts to the true expert's actions. This highlights the core operating principle of MA-BC: data is safely pooled until a contradiction is observed. This self-correcting property ensures that MA-BC remains a consistent estimator of the underlying expert policy asymptotically.

\begin{figure}[h]
    \centering
    \includegraphics[width=\linewidth]{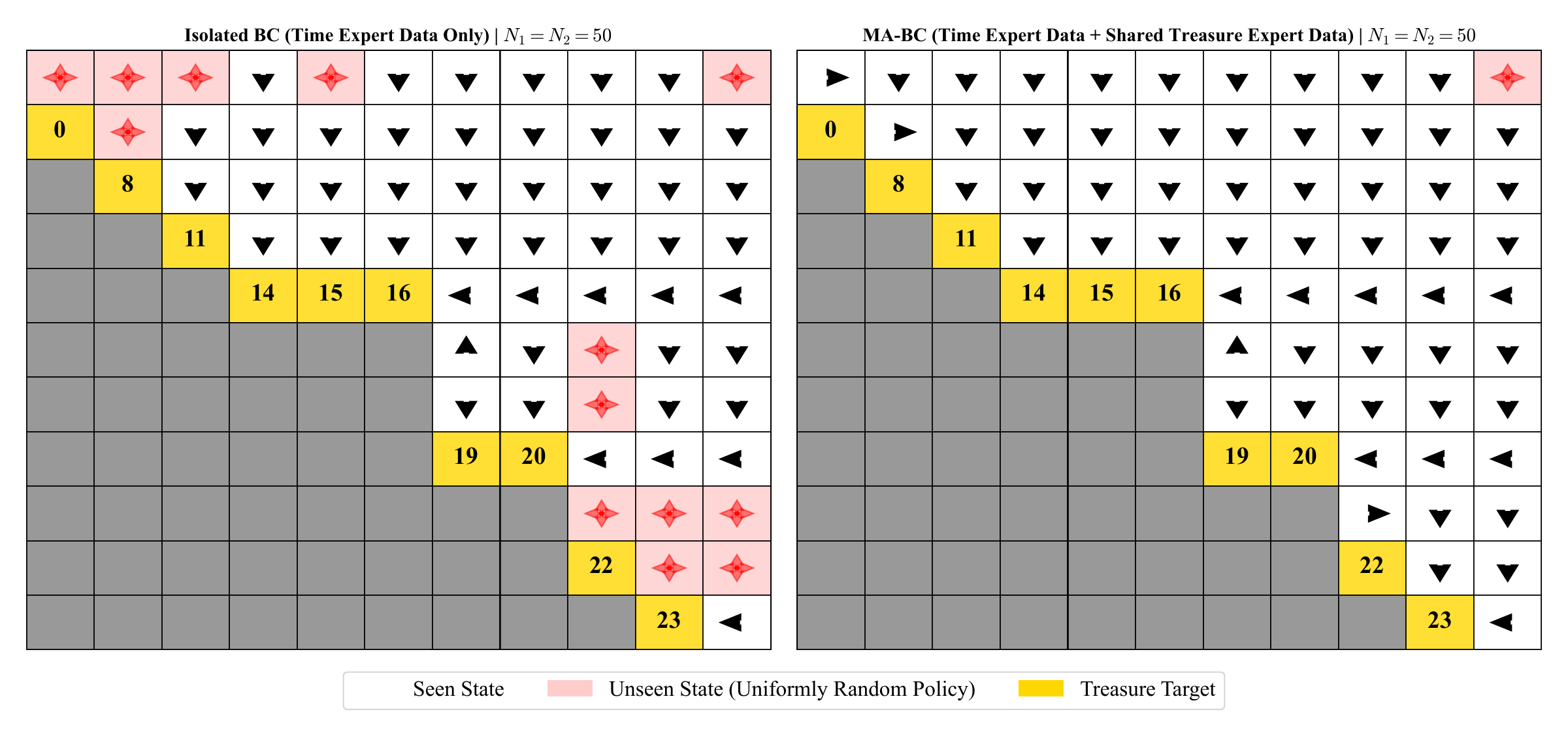}
    \caption{\textbf{MA-BC Self-Correction ($N_1=N_2=50$)}. As the sample size increases to 50 trajectories per expert, the bias previously introduced by MA-BC (\Cref{fig:dst_iso_failure_5}) is successfully self-corrected. The actions above cells 16, 19, and 20 (Right) have been overwritten to match the true time-minimizing behavior. As the target expert explores more states, conflicts are triggered, naturally partitioning the dataset and eliminating the bias.}
    \label{fig:dst_iso_failure_50}
\end{figure}

This phenomenon also provides vital context for our continuous state-action extension (\Cref{sec:continous_mabc}). In continuous spaces, the spatial tolerance threshold $\delta$ introduces a constant bias that does \emph{not} neatly self-correct, because exact state collisions almost surely have zero probability. This fundamentally explains the empirical bias-data tradeoff observed in \Cref{fig:drone_eco_summary}, dictating that as the baseline sample size increases, the continuous learner requires progressively smaller $\delta$ values to prioritize bias reduction over variance reduction.

\newpage
\end{document}